\documentclass[12pt]{article}
\usepackage{amssymb,amsmath,bbm}
\usepackage{graphicx}
\usepackage{float}
\usepackage{subcaption}
\usepackage{booktabs}
\usepackage{natbib}
\usepackage{appendix}
\bibpunct{(}{)}{;}{a}{,}{,}
\usepackage[colorlinks=true, citecolor=blue]{hyperref}
\hypersetup{
    colorlinks=true,
    linkcolor=blue,
    filecolor=black,
    urlcolor=blue,
    }
\let\linkedref\ref
\renewcommand{\ref}[1]{\linkedref*{#1}}
\def\sqr#1#2{{\vcenter{\hrule height.#2pt \hbox{\vrule width.#2pt
 height#2pt \kern#1pt \vrule width.#2pt} \hrule height.#2pt}}}

\newcommand{\htheta}{{\hat\theta}}
\newcommand{\hepsilon}{{\hat{\epsilon}}}

\newcommand{\tod}{\stackrel{d}{\longrightarrow}}

\newcommand{\pto}{\stackrel{p}{\longrightarrow}}
\newcommand{\toiid}{\stackrel{iid}{\sim}}

\newcommand{\Var}{\operatorname{Var}}
\newcommand{\Cov}{\operatorname{Cov}}

\newif\ifshowappendix
\showappendixtrue 

\newcommand{\mref}[1]{\ref{#1}}
\newcommand{\meqref}[1]{\eqref{#1}}
\providecommand{\ORCID}[1]{}

\newcounter{condcounter}
\renewcommand{\thecondcounter}{\arabic{condcounter}}

\newtheorem{defi}{\sc Definition}
\newtheorem{theo}{\sc Theorem}
\newtheorem{lemm}{\sc Lemma}

\newtheorem{coro}{\sc Corollary}
\newtheorem{rema}{\sc Remark}


\def\be{\begin{equation}}
\def\ee{\end{equation}}
\def\nn{\nonumber}
\def\bea{\begin{eqnarray}}
\def\eea{\end{eqnarray}}

\def\noopsort#1{}
\begin{document}

\begin{center}
 {\bf \large Tolerance-Based Fairness Auditing: Violation Certification and Sensitivity Screening}
\end{center}

\bigskip

\begin{center}
\renewcommand{\thefootnote}{*}
Jie Tang, \quad
Chuanlong Xie,\quad
Lixing Zhu\footnote{Correspondence to Lixing Zhu (lzhu@bnu.edu.cn). All coauthors are listed in seniority order. Drs. Tang and Xie are co-first authors with equal contributions. The research of Lixing Zhu was supported by the grants (NSFC 12471276)  from the National Natural Scientific Foundation of China.} %NSFC 12131006
\bigskip

{\small (School of Statistics, Beijing Normal University, Zhuhai, Guangdong 519087, China)}

\renewcommand{\thefootnote}{\arabic{footnote}}
\end{center}
\bigskip

 \noindent
{\bf Abstract }  
As artificial intelligence is increasingly deployed, algorithmic unfairness has raised growing concerns and intensified demands for transparent fairness auditing.
In practice, the tolerable degree of algorithmic unfairness depends on the specific legal, ethical, or application context.
Given a prespecified tolerance threshold, an important statistical question is how to determine whether a group disparity exceeds the allowable tolerance across different auditing objectives.
To address this problem, we develop a unified tolerance-based  fairness auditing framework for two complementary auditing objectives: violation certification, which prioritizes control of false violation declarations, and sensitivity screening, which prioritizes reducing missed violations.
For the first objective, we develop a constrained empirical likelihood test for formal settings that uses least-favorable-point calibration and can be combined with false flagging rate control for simultaneous subgroup auditing.
For the second objective, we develop split empirical likelihood and adjusted split empirical likelihood tests using an adaptive boundary-proxy principle for early-warning settings.
Numerical experiments show the distinct error-control--sensitivity trade-offs of these procedures. 
A COMPAS analysis illustrates the framework in predictive fairness auditing.

\noindent {\it Keywords:} algorithmic fairness; empirical likelihood; machine learning; fairness auditing; tolerance-based auditing

%\noindent AMS 2010 subject classification: Primary 62G05\ \ \  secondary
%62E20
\noindent {\it ACM class}: I.2.6 

\noindent {\it AMS class}: 62G10 (Primary), 62G20, 68T05 (Secondary)

\noindent {\it Short title:} Tolerance-Based Fairness Auditing

\newpage

\section{Introduction}

As artificial intelligence increasingly informs high-stakes decisions in employment, credit, health care, and criminal justice, disparities in predictive performance or decision outcomes across groups defined by sensitive attributes have become an important public concern \citep{XueEtAl2020,PoriaEtAl2020,GargEtAl2023,FraserEtAl2022}. Such disparities may reinforce existing social stereotypes and lead to unequal access to opportunities and resources \citep{DresselFarid2018,SalimiEtAl2020,JiangEtAl2023}. Consequently, regulators, developers, and affected communities increasingly demand transparent and independent third-party fairness audits \citep{BrundageEtAl2020,YanZhang2022,LaineEtAl2024,LacmanovicSkare2025}. Such audits must do more than report observed group disparities: they must quantify sampling uncertainty and determine whether the observed evidence is sufficient to support consequential decisions.

Many existing inferential approaches to fairness take exact parity or zero disparity as a benchmark, for example by testing $H_0:\epsilon_G=0$, where $\epsilon_G$ denotes a signed group disparity defined under the fairness criterion of interest \citep{TramerEtAl2017,DiCiccioEtAl2020,MehrabiEtAl2022,vonZahnEtAl2023,TangEtAl2026}. In practice, however, an audit often asks not whether a disparity is exactly zero, but whether it is large enough to be unacceptable in the relevant legal, ethical, or application context \citep{LuoEtAl2024}. We therefore consider a prespecified tolerance threshold $\epsilon_0$: $0<\epsilon_G\leq\epsilon_0$ represents a nonzero but acceptable disparity, whereas $\epsilon_G>\epsilon_0$ constitutes a tolerance violation. This distinction separates statistically detectable disparities from disparities that are practically unacceptable and recasts fairness auditing as a statistical decision problem concerning whether $\epsilon_G$ lies outside the tolerance region. Many group-fairness criteria, including demographic parity and equal opportunity, can be audited through an analogous tolerance formulation once an appropriate disparity measure has been specified \citep{CaldersVerwer2010,DworkEtAl2012,HardtEtAl2016,PessachShmueli2022,CherianCandes2024,CatonHaas2024}.

Tolerance-based auditing gives rise to two complementary objectives with different statistical requirements. The first is formal {\it violation certification}: falsely declaring a disparity within the tolerance region to be a violation may incur substantial regulatory, organizational, or decision costs, so a violation should be certified only when the evidence is sufficiently strong. The second is {\it sensitivity screening}: in early risk discovery, missing a substantively meaningful disparity may be more costly than issuing an additional warning, so greater priority is placed on sensitivity near the tolerance boundary. Violation certification prioritizes protection against false declarations, whereas sensitivity screening prioritizes the reduction of missed violations. These objectives therefore induce different error costs and calibration requirements, making a single calibration principle difficult to optimize for both. This tension motivates our central question: how should statistical evidence be calibrated to the objective of a tolerance-based fairness audit?

To address this question, we consider auditing settings in which the adverse direction is prespecified, restrict the disparity parameter to $\epsilon_G\in\mathbb{R}^+$, and develop objective-adaptive inference for the constrained tolerance hypothesis
\bea \label{test-1}
H_0: 0 \le \epsilon_G \le \epsilon_0 \quad \text{versus} \quad H_1:\epsilon_G >\epsilon_0,
\eea
where $\epsilon_0>0$ denotes a prespecified allowable disparity.
Empirical likelihood (EL) as a nonparametric approach requires no explicit variance estimation, enjoys a Wilks-type property \citep{Owen1988}, and avoids repeated bootstrap calibration \citep{CherianCandes2024}. 
For violation certification, we propose constrained empirical likelihood (CEL) using least-favorable-point (LFP) calibration to control the risk of false violation declarations throughout the tolerance region. 
For sensitivity screening, we propose split empirical likelihood (SEL) and adjusted split empirical likelihood (ASEL) using an adaptive boundary-proxy principle that permits increased rejection probability at the tolerance boundary in exchange for greater sensitivity to emerging violations.
Together, these procedures form an objective-adaptive inferential framework that aligns statistical calibration with the distinct error priorities of violation certification and sensitivity screening.
Simulations further illustrate the distinct finite-sample trade-offs between error control and detection under skewed and heavy-tailed distributions.

This article makes three contributions. 
First, we formulate fairness auditing as a directionally constrained tolerance-decision problem, assessing whether a group disparity in a prespecified adverse direction exceeds an allowable threshold. This formulation accommodates nonzero but tolerable disparities and provides a common testing framework for violation certification and sensitivity screening.
Second, our main methodological contribution is the development of SEL and ASEL to address the limited sensitivity of conservative violation-certification procedures in early-warning settings. Through an adaptive boundary-proxy construction, SEL targets emerging tolerance violations, while ASEL further enhances detection power. 
We characterize their rejection behavior at interior null points, at the tolerance boundary, and under local and fixed violations, explicitly quantifying the trade-off between detection sensitivity and false alerts.
Third, CEL provides a quantitative basis for formal violation certification through LFP calibration. Its one-sided lower confidence bound serves as a numerical certificate, quantifying how far the disparity can be certified to exceed the tolerance threshold at a prescribed confidence level. For disjoint subpopulations, we establish asymptotic false flagging rate control for CEL combined with the Benjamini--Hochberg (BH) procedure under suitable regularity conditions.

The article is organized as follows. 
Section~\ref{sec-preliminaries} presents the problem setup and preliminary results. Section~\ref{sec-methodology} develops the objective-specific calibration principles and characterizes the operating behavior required for violation certification and sensitivity screening.
Section~\ref{sec-sim} reports simulation results. Section~\ref{sec-emp} presents the real-data analysis. Section~\ref{sec-conclusion} concludes the article. 
\ifshowappendix
An extension to lower-tail tolerance auditing (Section~\ref{app-extended-test}) and technical proofs (Section~\ref{app-proofs}) are provided in the Supplementary Material.
\else
An extension to lower-tail tolerance auditing and technical proofs are provided in the Supplementary Material.
\fi

{\bf Notation.}
Let $\mathbb{P}(G) := \mathbb{P}((X, Y ) \in G)$ denote the  probability of $(X, Y )$ belonging to subpopulation $G$.
To simplify our notation, we replace $ (X, Y) \in G$ with $G$ whenever the meaning is clear. 
We also replace $M(h (X), Y )$ by the abbreviation $M$.
For a function $f : X \times Y \to \mathbb{R}^k$, $\mathbb{P} [f]$ is shorthand for $\mathbb{E}_{\mathbb{P}} [f (X, Y )]$, and $\mathbb{P}_n[f ]$ is shorthand  for $n^{-1} \sum_{i=1}^n f (X_i, Y_i)$. 
We also denote  $\mathbb{P}_n(G) = \frac{1}{n}  \sum_{i = 1}^n \mathbbm{1} \{(x_i, y_i) \in G\}$ and $ n_G = n  \cdot \mathbb{P}_n(G)$.

%\section{Problem setup}\label{sec-setup}
\section{Preliminaries} \label{sec-preliminaries}

We assume that only a holdout dataset $D = \{X_i, Y_i\}^n_{i=1} \toiid P$ is accessible. 
This is sometimes referred to as audit trails \citep{BrundageEtAl2020}. 
We say that a prediction rule $h$ exhibits a performance disparity on a subpopulation $G$ if the conditional expectation of a metric $M(h (X), Y)$ given $(X, Y)\in G$ differs substantially from a target $\theta_P \in \mathbb{R}$. 
The group-wise performance disparity is given by \cite{CherianCandes2024} as follows. 

\begin{defi}\label{defi1} 
A group-wise performance disparity is defined as 
\be 
\epsilon_G=\mathbb{E}_P[M(h (X), Y ) | (X, Y) \in G] - \theta_P. \nn
\ee 
%where $G \in \mathcal{G}$,  $\mathcal{G} = \{ G_1, G_2, \dots, G_m\}$ and $m  < \infty$.
%$m =|\mathcal{G}| < \infty$ where $|\mathcal{G}|$ is defined as the cardinality of $\mathcal{G}$.
\end{defi} 

While subgroup membership is often determined by a subset of covariates (such as sensitive attributes) that may not be explicitly used by the prediction rule, we use the same covariate vector $X$ for both the model input and the group definitions to maintain notational simplicity. The definition serves as a flexible, unified framework capable of subsuming nearly any standard group fairness definition through the specific choice of $\epsilon_G$. To illustrate this versatility, consider Statistical Parity (also referred to as Demographic Parity) \citep{DworkEtAl2012, MehrabiEtAl2021, GargEtAl2020, VermaRubin2018a}. 
This criterion demands that the probability of a positive prediction remains invariant across different groups: $\mathbb{P}(h(X)=1\mid X \in G)=\mathbb{P}(h(X)=1).$ 
The disparity $\epsilon_G$ is
\(
\epsilon_G = \mathbb{P}(h(X)=1\mid X \in G)- \mathbb{P}(h(X)=1).
\)
By mapping this to the definition, we simply assign: 
\(M(h (X), Y ) = \mathbbm{1} \{h (X) = 1\}\) and \(\theta_P = \mathbb{P}(h(X)=1)\).

We formulate upper-tail tolerance auditing through the following constrained hypothesis, where the goal is to assess whether $h$ that the disparity exceeds the prespecified tolerance.
Given $h$, let $M_i=M(h (X_i), Y_i)$, and then $M_1, \dots, M_n$ are independent and identically distributed random variables. 
For the tolerance audit in~\eqref{test-1}, we prespecify the adverse direction and restrict the disparity parameter to $[0,\infty)$.
Given a tolerance threshold $\epsilon_0>0$, the null region $[0,\epsilon_0]$ includes both exact parity and nonzero disparities regarded as practically acceptable.

%We give the construction of the EL ratio statistic.
%By the definition and $\mathbb{E}_P  [ M \mid G ] = \frac{1}{\mathbb{P}(G)} \mathbb{E}_P [ M \cdot \mathbbm{1}_G ]$,  the moment equation is
%\bea  \label{moment_constraint}
%\mathbb{E}_P \left\{ \frac{1}{\mathbb{P}(G)} [ M - \theta_P - \epsilon_G ] \cdot \mathbbm{1}_G  \right\} = 0. \nn
%\eea
%We then denote $p_i=\mathbb{P}(x_i, y_i)$ and $g_i(\epsilon_G;\theta_P) = \left[M_i - \theta_P - \epsilon_G\right]\cdot \mathbbm{1}_{G}$.
To quantify evidence about the disparity, we construct the EL ratio statistic using the estimating functions
\[
g_i(\epsilon;\theta_P) =(M_i-\theta_P-\epsilon)\mathbbm{1}\{(X_i,Y_i)\in G\}, \qquad i=1,\ldots,n,
\]
where $\epsilon$ denotes a candidate disparity value.
At the true disparity $\epsilon_G^*$, these functions satisfy
$\mathbb{E}_P[g_i(\epsilon_G^*;\theta_P)]=0$.
Thus, the fairness constraint is
\(
\sum_{i=1}^n p_i g_i(\epsilon_G;\theta_P) = 0,
\)
and then the EL function at $\epsilon_G$ is defined to be
\bea 
L(\epsilon_G;\theta_P) =\sup_{p_i,1 \le i \le n}  \left\{ \prod^n_{i=1} p_i \ \Big| \ p_i\geq 0, \,\, \sum_{i=1}^n p_i = 1, \,\, \sum_{i=1}^n p_i  g_i(\epsilon_G;\theta_P) = 0 \right\}. \nn
\eea
Following \citet{Owen1990}, the maximizer is unique for a given $\epsilon_G$, provided that zero lies in the interior of the convex hull of $\{g_i(\epsilon_G;\theta_P)\}_{i=1}^n$. In that case,
\bea 
\widehat{p}_i = \widehat{p}_i(\epsilon_G) = \frac{1}{n \left(1 + \lambda g_i(\epsilon_G;\theta_P)\right)}, \quad 1 \le i \le n,  \nn
\eea
where $\lambda \in \mathbb{R}$ solves
\bea \label{lambda}
{ 1 \over n} \sum_{i=1}^n{ g_i(\epsilon_G;\theta_P) \over 1+\lambda g_i(\epsilon_G;\theta_P) }=0.  \nn
\eea
Then, the empirical log-likelihood ratio is defined as
\bea \label{eq-logL}
\ell(\epsilon_G;\theta_P) :=-2 \log { L(\epsilon_G;\theta_P) \over n^{-n}} = 2\sum_{i=1}^n\log\left(1+\lambda g_i(\epsilon_G;\theta_P)\right). \nn
\eea
To obtain the limiting distribution of $\ell(\epsilon_G;\theta_P)$,  we need the following assumptions.

\begin{itemize}
\setcounter{condcounter}{0}

\refstepcounter{condcounter}\item[(\thecondcounter)]\label{C1} ${ (X_i, Y_i), 1 \le i \le n}$ are independent and identically distributed random  variables.

\refstepcounter{condcounter}\item[(\thecondcounter)]\label{C2}  $\mathbb{P}(G) $ and $ \Var( M \mid G )$ are bounded away from 0 for all $G \in \mathcal{G}$. Moreover, for some $\delta>0$, $\sup_{G\in\mathcal G}\mathbb{E}\left[|M-\theta_P-\epsilon_G^*|^{2+\delta}\mid G\right]< \infty$.

%\refstepcounter{condcounter}\item[(\thecondcounter)]\label{C3} The estimator of the target 
%is asymptotically linear, i.e.,
%\be \label{htheta}
%\sqrt{n}(\htheta - \theta_P) = {1 \over \sqrt{n}}\sum_{i=1}^n\psi(X_i, Y_i) + o_P(1),\nn
%\ee
%where $\psi$ is an influence function with mean zero and finite variance. 

\refstepcounter{condcounter}\item[(\thecondcounter)]\label{C3} The estimator of the target is asymptotically negligible uniformly over the audited groups:
$$
\sup_{G\in\mathcal G}
\sqrt n\,\mathbb P_n(G)
\left|\hat\theta-\theta_P\right|
=o_P(1).
$$

\end{itemize}

\begin{rema}
Condition~\eqref{C2} ensures the well-posedness of the covariance $\sigma_G^2$, where $ \sigma_G^2 = { \mathbb{E} \left[(M - \theta_P - \epsilon_{G})^2 \mathbbm{1}_{G} \right] } = { \mathbb{P}(G) } \Var(M \mid G)$. 
It prevents numerical instabilities caused by vanishing denominators or irregular behavior in the numerator terms.
\end{rema}

\begin{rema}\label{rema-candes2}
%Condition~\eqref{C3} is satisfied by any estimator expressible as a differentiable function of averages, i.e., $\hat{\theta}:= f(\sum_i \upsilon(x_i, y_i)/n)$ for some differentiable function $f$ and known features $\upsilon: \mathcal{X} \times \mathcal{Y} \to \mathbb{R}^k$. 
%In this paper, we typically compare group-wise performance with the population average $\theta_P=\mathbb E_P[M(h(X),Y)]$. The sample average $\hat\theta=n^{-1}\sum_{i=1}^n M(h(x_i),y_i)$ therefore satisfies Condition~\eqref{C3}.
Condition~\eqref{C3} requires the plug-in estimation error of the reference target to be asymptotically negligible at the empirical-likelihood scale. Since \(\mathbb P_n(G)\leq 1\), a simple sufficient condition for~\eqref{C3} is
\(
\sqrt n\left|\hat\theta-\theta_P\right|=o_P(1).
\)
Thus, Condition~\eqref{C3} requires the estimation error of the reference target to be of smaller order than the usual \(n^{-1/2}\) sampling fluctuation.
\end{rema}

For a known reference target $\theta_P$, under conditions~(\ref{C1})--(\ref{C2}), we have \(\ell(\epsilon_G^*;\theta_P)\tod \chi^2_1\) as \(n \to \infty\),  where $\chi^2_1$ is a chi-squared distribution with one degree of freedom.
For an estimated reference target $\hat{\theta}$, under conditions~(\ref{C1})--(\ref{C3}), we have \(\ell(\epsilon_G^*;\htheta)\tod \chi^2_1\) as \(n \to \infty\).
This shows that under the stated conditions, replacing $\theta_P$ with $\hat{\theta}$ preserves the limiting distribution, so we omit $\theta_P$ and $\htheta$ from the notation in the sequel for notational simplicity.  
\ifshowappendix
Formal statements of these Wilks-type limits are provided in Section~\ref{app-Related} of the Supplementary Material.
\else
Formal statements of these Wilks-type limits are provided in the Supplementary Material.
\fi

These limiting results provide a basis for testing the point null hypothesis $H_0:\epsilon_G=\epsilon_0$ versus $H_1:\epsilon_G\neq\epsilon_0$.
The tolerance hypothesis~\eqref{test-1}, however, concerns an interval of acceptable disparities.
Section~\ref{sec-methodology} develops calibration principles for this composite hypothesis tailored to violation certification and sensitivity screening.

\section{Tolerance-based fairness auditing} \label{sec-methodology}

Tolerance-based fairness auditing requires different operating criteria depending on whether the objective is violation certification or sensitivity screening. Violation certification prioritizes protection against false violation declarations throughout the tolerance region, whereas sensitivity screening places greater emphasis on detecting emerging violations near the tolerance boundary. These distinct priorities motivate two calibration principles: least-favorable-point calibration for violation certification and adaptive boundary-proxy calibration for sensitivity screening.

Section~\ref{sec-cel} formalizes the violation-certification criterion and develops the corresponding CEL procedure. Section~\ref{sec-sel} introduces the sensitivity-screening criterion and develops SEL and its power-enhanced variant, ASEL.
\ifshowappendix
An extension of the proposed framework to other tolerance-constrained testing problems is given in Section~\ref{app-extended-test} of the Supplementary Material.
\else
An extension of the proposed framework to other tolerance-constrained testing problems is deferred to the Supplementary Material.
\fi

\subsection{Violation certification under a tolerance constraint} \label{sec-cel}

Formal violation certification requires protection against falsely declaring an acceptable disparity to exceed the prespecified tolerance.
For a generic violation-certification rule, let $\delta_n(G;\epsilon_0)=1$ denote a declaration that group $G$ violates the tolerance requirement.
Because every $\epsilon_G\in[0,\epsilon_0]$ represents an acceptable disparity under~\eqref{test-1}, the asymptotic violation-certification requirement is
\[
\sup_{\epsilon_G\in[0,\epsilon_0]}
\limsup_{n\to\infty}
\mathbb{P}_{\epsilon_G}\{\delta_n(G;\epsilon_0)=1\}
\leq \alpha,
\]
where the subscript on the probability indicates the true disparity value and other nuisance features are suppressed.
Thus, calibration must protect against the null configuration with the largest asymptotic rejection probability rather than against an arbitrarily selected point in the tolerance region.

For the one-sided auditing problem~\eqref{test-1}, the tolerance boundary $\epsilon_G=\epsilon_0$ is the least favorable null configuration, as formalized below.
We therefore calibrate the violation-certification rule using the limiting distribution at this boundary.
This LFP principle prioritizes control of false violation declarations throughout the tolerance region, while allowing the procedure to be more conservative at fixed interior null points.

We instantiate this calibration principle through constrained empirical likelihood.
The resulting statistic compares the best empirical-likelihood fit under the tolerance constraint with the best fit over the nonnegative parameter space:
\bea \label{eq-logL-ineq}
T_c:=-2 \log { \sup_{0 \le \epsilon\le\epsilon_0}L(\epsilon) \over \sup_{\epsilon \ge 0} L(\epsilon)}. \nn
\eea
The numerator represents the best fit among disparities regarded as acceptable, whereas the denominator represents the best fit over all disparities in the prespecified adverse direction.
Accordingly, a large value of $T_c$ indicates that the data are incompatible with the entire tolerance region.
\ifshowappendix
The concavity argument and a detailed derivation of the computable form of $T_c$ are provided in Supplementary Material~\ref{app-cel-derivation}.
\else
The concavity argument and a detailed derivation of the computable form of $T_c$ are provided in the Supplementary Material.
\fi

To determine whether CEL is suitable for formal violation certification, we first characterize its rejection behavior at the tolerance boundary, where the risk of a false violation declaration is greatest.

\begin{theo}\label{theo-1}
Assume that conditions~\eqref{C1}--\eqref{C2} hold. 
Then, under the null hypothesis $H_0: 0 \le \epsilon_G \le \epsilon_0$, it follows that for the least favorable case $\epsilon_G = \epsilon_0$,
\[
T_c\tod \frac{1}{2}\chi^2_{0} + \frac{1}{2}\chi^2_1 \quad \mathrm{as} \quad n \to \infty,
\] 
where $\chi^2_{0}$ is a degenerate random variable with mass 1 at the point zero.  
\end{theo}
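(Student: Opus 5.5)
The plan is to reduce $T_c$ to an explicit function of the unconstrained maximum empirical likelihood estimator and then apply the Wilks-type limit already stated for the point null. Write $\ell(\epsilon)=-2\log\{L(\epsilon)/n^{-n}\}$. First, $\ell(\epsilon)$ is convex in $\epsilon$ on the set where zero lies in the interior of the convex hull of $\{g_i(\epsilon)\}$; this is the concavity argument for $\log L$ referenced before the theorem. Second, $\ell$ attains its minimum value $0$ at the self-normalized group mean
\[
\hat\epsilon=\frac{\sum_i (M_i-\theta_P)\mathbbm 1_G}{n_G},
\]
where all $p_i=1/n$ are feasible. Hence $\ell$ is decreasing on $(-\infty,\hat\epsilon]$ and increasing on $[\hat\epsilon,\infty)$.

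Third, minimize this convex function over the two intervals:
\[
T_c=\min_{0\le\epsilon\le\epsilon_0}\ell(\epsilon)-\min_{\epsilon\ge0}\ell(\epsilon).
\]
There are three cases:
\begin{itemize}
\item if $\hat\epsilon\in[0,\epsilon_0]$, both minima are $0$ and $T_c=0$;
\item if $\hat\epsilon>\epsilon_0$, the numerator minimum is $\ell(\epsilon_0)$ and the denominator minimum is $0$, so $T_c=\ell(\epsilon_0)$;
\item if $\hat\epsilon<0$, both minima equal $\ell(0)$, so $T_c=0$.
\end{itemize}
Thus
\[
T_c=\ell(\epsilon_0)\,\mathbbm 1\{\hat\epsilon>\epsilon_0\}.
\]
This holds with probability tending to one, because near $\epsilon_0=\epsilon_G^*$ the convex-hull condition holds: $\mathbb P(G)>0$ and $\Var(M\mid G)>0$ by (\ref{C2}).

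Fourth, derive the joint limit at $\epsilon_G^*=\epsilon_0$. Use the standard Owen expansion: $\lambda=O_P(n^{-1/2})$, obtained from $\max_i|g_i|=o_P(n^{1/2})$ via the $2+\delta$ moment in (\ref{C2}). This gives
\[
\ell(\epsilon_0)=Z_n^2+o_P(1),\qquad Z_n=\frac{n^{-1/2}\sum_i g_i(\epsilon_0)}{\sqrt{n^{-1}\sum_i g_i(\epsilon_0)^2}}\tod N(0,1).
\]
Moreover, $\hat\epsilon>\epsilon_0$ is equivalent to $\sum_i g_i(\epsilon_0)>0$, which is equivalent to $Z_n>0$. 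So $T_c=Z_n^2\mathbbm 1\{Z_n>0\}+o_P(1)$. Since $z\mapsto z^2\mathbbm 1\{z>0\}$ is continuous, the continuous mapping theorem gives the limit $Z^2\mathbbm 1\{Z>0\}$. This has distribution $\tfrac12\chi^2_0+\tfrac12\chi^2_1$, because by symmetry of $Z$ we get $\mathbb P(Z^2\mathbbm 1\{Z>0\}\le t)=\tfrac12+\tfrac12\mathbb P(\chi^2_1\le t)$.

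The main obstacle is making the case reduction rigorous. Convexity of $\ell$ must hold only where it is finite, with $\ell=+\infty$ outside the convex hull. We also need the $o_P(1)$ remainder in the Wilks expansion to be handled jointly with the sign event, not only marginally. I would address both points by working on the event $\{\min_{i\in G} g_i(\epsilon_0)<0<\max_{i\in G} g_i(\epsilon_0)\}$, whose probability tends to one, and by expressing both $\ell(\epsilon_0)$ and the indicator through the single statistic $Z_n$. If $\hat\theta$ replaces $\theta_P$, condition (\ref{C3}) shifts $Z_n$ by $o_P(1)$, so the same argument applies.
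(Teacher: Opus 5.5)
Your proposal is correct and follows essentially the same route as the paper. Both reduce $T_c$ to $\ell(\epsilon_0)\mathbbm{1}(\hepsilon_G>\epsilon_0)$ using the concavity/unimodality of the empirical likelihood, then combine the quadratic expansion of $\ell(\epsilon_0)$ with the fact that the sign event is $\{\sqrt{n}\,\bar g(\epsilon_0)>0\}$, which yields the limit $Z^2\mathbbm{1}(Z>0)$. Your use of the continuous map $z\mapsto z^2\mathbbm{1}\{z>0\}$ on the single statistic $Z_n$ handles the remainder and the indicator jointly a little more cleanly than the paper's direct computation of $\mathbb{P}\{Z^2>t,\,Z>0\}$.
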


Theorem~\ref{theo-1} provides the boundary reference distribution required to calibrate violation-certification decisions.
Boundary calibration alone, however, does not establish the violation-certification guarantee; we must also determine how the rejection probability behaves throughout the tolerance region.

\begin{theo}\label{theo-2}
Assume that conditions~\eqref{C1}--\eqref{C2} hold. 
Then, under the null hypothesis $H_0: 0 \le \epsilon_G \le \epsilon_0$, we have
\bea
\lim_{n \to \infty} \mathbb{P}\left\{ T_c> \chi_{1,2\alpha}^2 \right\} =
\begin{cases}
\alpha, & \mathrm{if}\quad \epsilon_G^* = \epsilon_0; \\
0, & \mathrm{if}\quad 0 \le \epsilon_G^* < \epsilon_0, \nn
\end{cases}
\eea 
where $\chi_{1,2\alpha}^2$ is the upper $2\alpha$ quantile of the $\chi^2_1$ distribution.
\end{theo}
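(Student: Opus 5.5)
The plan is to split into the boundary case and the interior case. In both cases we first reduce $T_c$ to an explicit function of the unconstrained maximizer.

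\textbf{Reduction of $T_c$.} By the concavity of $\epsilon\mapsto\log L(\epsilon)$ (Supplementary Material), the unconstrained EL maximizer is the group sample mean disparity
\[
\hat\epsilon=\frac{\sum_i (M_i-\theta_P)\mathbbm{1}_G}{n_G},
\]
at which $L(\hat\epsilon)=n^{-n}$. Let $\tilde\epsilon=\max(\hat\epsilon,0)$ denote the maximizer over $[0,\infty)$. Then
\[
T_c=\begin{cases} 0, & \tilde\epsilon\le\epsilon_0,\\ \ell(\epsilon_0)-\ell(\tilde\epsilon), & \tilde\epsilon>\epsilon_0.\end{cases}
\]
Because $\epsilon_0>0$, the event $\tilde\epsilon>\epsilon_0$ coincides with $\hat\epsilon>\epsilon_0$, and on it $\ell(\tilde\epsilon)=0$. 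Hence
\[
T_c=\ell(\epsilon_0)\,\mathbbm{1}\{\hat\epsilon>\epsilon_0\}.
\]

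\textbf{Boundary case $\epsilon_G^*=\epsilon_0$.} By the Wilks-type result stated earlier, together with the usual EL expansion under conditions~(\ref{C1})--(\ref{C2}),
\[
\ell(\epsilon_0)=Z_n^2+o_P(1),\qquad Z_n=\frac{n^{-1/2}\sum_i g_i(\epsilon_0)}{\hat\sigma_G},
\]
where $Z_n\tod N(0,1)$. This expansion uses $\max_i|g_i|=o_P(n^{1/2})$, which follows from the $2+\delta$ moment, and $\lambda=O_P(n^{-1/2})$. Since $\hat\epsilon-\epsilon_0$ has the sign of $\sum_i g_i(\epsilon_0)$, we get $T_c=Z_n^2\mathbbm{1}\{Z_n>0\}+o_P(1)$. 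The continuous mapping theorem then gives $T_c\tod Z^2\mathbbm{1}\{Z>0\}$, which is the mixture $\tfrac12\chi^2_0+\tfrac12\chi^2_1$ of Theorem~\ref{theo-1}. Therefore
\[
\mathbb{P}\{T_c>\chi^2_{1,2\alpha}\}\to \tfrac12\,\mathbb{P}\{\chi^2_1>\chi^2_{1,2\alpha}\}=\tfrac12\cdot 2\alpha=\alpha.
\]
The continuity of the limit at the positive cutoff justifies passing to the limit.

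\textbf{Interior case $0\le\epsilon_G^*<\epsilon_0$.} Here we use
\[
\mathbb{P}\{T_c>\chi^2_{1,2\alpha}\}\le \mathbb{P}\{\hat\epsilon>\epsilon_0\}.
\]
By the law of large numbers, $\hat\epsilon\to\epsilon_G^*$ in probability, using $\mathbb{P}(G)>0$ and a finite second moment from condition~(\ref{C2}). Since $\epsilon_0-\epsilon_G^*>0$ is fixed, $\mathbb{P}\{\hat\epsilon>\epsilon_0\}\to0$, which gives the limit $0$.

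The main obstacle is the reduction step. It needs to be made rigorous that the constrained supremum over $[0,\epsilon_0]$ is attained at the projection of $\hat\epsilon$ onto $[0,\epsilon_0]$, which requires unimodality of $\ell$. It also needs care on events where $\epsilon_0$ falls outside the convex hull, so that $L(\epsilon_0)=0$ and $\ell=\infty$. Those events have vanishing probability in the boundary case and are irrelevant in the interior case, since they lie inside $\{\hat\epsilon>\epsilon_0\}$ or its complement, where $T_c=0$. For unimodality, I would argue that $\epsilon\mapsto\ell(\epsilon)$ is convex on the convex-hull interval. This follows by convex duality, since $\ell(\epsilon)=\sup_\lambda 2\sum_i\log(1+\lambda g_i(\epsilon))$ with each $g_i$ affine in $\epsilon$. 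Convexity, together with $\ell(\hat\epsilon)=0$, makes $\ell$ nondecreasing on $[\hat\epsilon,\infty)$.
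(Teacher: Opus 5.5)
Your proposal is correct and follows essentially the paper's route. You reduce to $T_c=\ell(\epsilon_0)\mathbbm{1}\{\hat\epsilon_G>\epsilon_0\}$, as in the paper's supplementary derivation. You obtain the boundary limit $\alpha$ from the quadratic EL expansion and the half-mixture of Theorem~\ref{theo-1}. You bound the interior case by $\mathbb{P}\{T_c>\chi^2_{1,2\alpha}\}\le\mathbb{P}\{\hat\epsilon_G>\epsilon_0\}\to 0$, using the law of large numbers where the paper uses a CLT scaling.

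One small fix concerns your duality argument for unimodality, which fails as stated. For fixed $\lambda$ the map $\epsilon\mapsto 2\sum_i\log(1+\lambda g_i(\epsilon))$ is concave, so taking a supremum over $\lambda$ yields no convexity. The paper simply cites the concavity lemma of Tang et al. You can instead use the envelope identity $\ell'(\epsilon)=-2n_G\lambda(\epsilon)$, which follows from $\sum_{i\in G}(1+\lambda g_i)^{-1}=n_G$, together with $\lambda(\epsilon)<0$ whenever $\epsilon>\hat\epsilon_G$.
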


For the CEL violation-certification rule $\delta_n^{\mathrm{CEL}}(G;\epsilon_0)=\mathbbm{1}\{T_c>\chi_{1,2\alpha}^2\}$, Theorem~\ref{theo-2} establishes the operating characteristics required for violation certification.
The probability of a false violation declaration reaches the nominal level $\alpha$ at the least favorable boundary and converges to zero at every fixed interior null point.
Thus, CEL satisfies the asymptotic violation-certification requirement while becoming increasingly protective against false violation declarations in the interior of the tolerance region.

According to \citet{LehmannRomano2005}, Theorem~\ref{theo-2} demonstrates that $\epsilon_0$ is the least favorable point under the null hypothesis.
Hence the limiting distribution of $T_c$ at the boundary point $\epsilon_0$ is employed to determine the critical value for the rejection region. 
According to Theorems~\ref{theo-1}--\ref{theo-2}, the critical value $\chi_{1,2\alpha}^2$ is determined by
\(
\mathbb{P}\!\left(\tfrac{1}{2}\chi_0^2+\tfrac{1}{2}\chi^2_1>\chi_{1,2\alpha}^2\right)=\alpha. \nn
\)
Then the CEL-based confidence region for $\epsilon_G$ with asymptotic coverage $1-\alpha$ can be constructed as
\(
\{ \epsilon: T_c\leq \chi_{1,2\alpha}^2 \}.  
\)
Accordingly, we obtain the \(1-\alpha\) one-sided lower confidence bound 
\[
L_G=\inf\left\{ \epsilon: T_c \leq \chi_{1,2\alpha}^2\right\}.
\]
\(L_G\) serves as a data-driven violation-certification boundary answers whether the observed data provide sufficient evidence that the true disparity exceeds a prespecified allowable range.
If \(L_G>\epsilon_0\), the data provide sufficient evidence that the disparity exceeds the tolerance threshold, and the audited subpopulation is therefore flagged as exceeding the prescribed tolerance.
If \(0<L_G\leq\epsilon_0\), the data support the presence of a positive disparity but do not provide sufficient evidence that it exceeds the specified tolerance.
If \(L_G=0\), the data do not certify a positive disparity, and assessing a disparity in the opposite direction requires a separate violation-certification procedure.

In practice, the $p$-values are computed conservatively under the boundary $\epsilon_G = \epsilon_0$, which represents the least favorable case for rejection. Therefore, for any observed $T_c> 0$, the $p$-value is given by
\(
p = \mathbb{P} \left\{ \frac{1}{2} \chi^2_0 + \frac{1}{2} \chi^2_1 > T_c\right\} = \frac{1}{2} \big( 1 - F_1( T_c) \big), \label{p24} \nn
\)
where \( F_1(\cdot) \) is the cumulative distribution function (CDF) of $\chi_1^2$. 
If $ p \geq \alpha$, we do not reject the null hypothesis; otherwise, we reject it. 
In the case of $T_c= 0$, we have $ p = 1 $ , leading to non-rejection of the null hypothesis.

Corollary~\ref{coro-2} completes the operating characterization of violation certification by quantifying its sensitivity to local and fixed tolerance violations.
\begin{coro}\label{coro-2}
For the test $H_0:  0 \le \epsilon_G \le \epsilon_0$  versus $H_1:  \epsilon_G > \epsilon_0$,  
if the true performance disparity is \( \epsilon_G^* = \epsilon_0 + \tau {n_G}^{-1/2} \sigma_G \), where \( \tau > 0\), \(n_G=n\mathbb{P}_n(G)\) is the group-$G$ sample size, and \( \sigma_G^2 = { \mathbb{P}(G) }\Var(M\mid G)  \), then we have 
\[
\lim_{n_G \to \infty} \mathbb{P}\left( T_c> \chi_{1,2\alpha}^2 \mid \epsilon_G^* \right) = \Phi\left(  \tau\sqrt{\mathbb{P}(G)} - \sqrt{\chi_{1,2\alpha}^2} \right),
\] 
where $\Phi$ is the standard normal distribution function. 
Therefore, for any fixed \( \epsilon_G^* > \epsilon_0 \), 
\(
\lim_{n_G \to \infty} \mathbb{P}\left\{ T_c> \chi_{1,2\alpha}^2 \mid H_1 \right\} = 1.\)
\end{coro}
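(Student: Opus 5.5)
The plan is to reduce $T_c$ under local alternatives to a function of a single asymptotically normal statistic and then read off the rejection probability.

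\textbf{Step 1: closed form for $T_c$.} Since $\ell(\epsilon)$ is convex in $\epsilon$ (the concavity argument of Supplementary Material~\ref{app-cel-derivation}) and is minimized at $\hat\epsilon_G=\mathbb{P}_n[M\mathbbm{1}_G]/\mathbb{P}_n(G)-\theta_P$, the computable form gives three cases. If $\hat\epsilon_G\le\epsilon_0$, then $T_c=0$. If $\hat\epsilon_G>\epsilon_0$, then $T_c=\ell(\epsilon_0)-\ell(\hat\epsilon_G\vee 0)$, which equals $\ell(\epsilon_0)$ up to $o_P(1)$, because $\ell(\hat\epsilon_G)=0$ whenever $\hat\epsilon_G\ge 0$; this holds with probability tending to one under the alternative since $\epsilon_G^*>\epsilon_0>0$. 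Hence
\[
T_c=\ell(\epsilon_0)\,\mathbbm{1}\{\hat\epsilon_G>\epsilon_0\}+o_P(1).
\]

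\textbf{Step 2: quadratic expansion at $\epsilon_0$ under the local sequence.} We have $g_i(\epsilon_0)=g_i(\epsilon_G^*)+(\epsilon_G^*-\epsilon_0)\mathbbm{1}_G$, where $\epsilon_G^*-\epsilon_0=\tau n_G^{-1/2}\sigma_G=O_P(n^{-1/2})$. The standard Owen argument then applies under \eqref{C1}--\eqref{C2}: the $2+\delta$ moment gives $\max_i|g_i|=o_P(n^{1/2})$, which yields $\lambda=O_P(n^{-1/2})$ and
\[
\ell(\epsilon_0)=\frac{\bigl(n^{-1/2}\sum_i g_i(\epsilon_0)\bigr)^2}{n^{-1}\sum_i g_i(\epsilon_0)^2}+o_P(1).
\]
The denominator converges in probability to $\sigma_G^2$. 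The numerator satisfies
\[
n^{-1/2}\sum_i g_i(\epsilon_0)=n^{-1/2}\sum_i g_i(\epsilon_G^*)-\sqrt n\,\mathbb{P}_n(G)(\epsilon_G^*-\epsilon_0).
\]
The first term converges in distribution to $\sigma_G Z$ with $Z\sim N(0,1)$. For the second, $\sqrt n\,\mathbb{P}_n(G)\,\tau\sigma_G(n\mathbb{P}_n(G))^{-1/2}=\tau\sigma_G\sqrt{\mathbb{P}_n(G)}\to\tau\sigma_G\sqrt{\mathbb{P}(G)}$. So, up to sign, $\ell(\epsilon_0)^{1/2}\tod |Z-\tau\sqrt{\mathbb{P}(G)}|$.

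The event $\{\hat\epsilon_G>\epsilon_0\}$ coincides with $\{\sum_i g_i(\epsilon_0)>0\}$. Therefore, by the continuous mapping theorem,
\[
T_c\tod W^2\mathbbm{1}\{W>0\},\qquad W\sim N\bigl(\tau\sqrt{\mathbb{P}(G)},1\bigr).
\]
The rejection probability then converges to
\[
\mathbb{P}\bigl(W>\sqrt{\chi^2_{1,2\alpha}}\bigr)=\Phi\bigl(\tau\sqrt{\mathbb{P}(G)}-\sqrt{\chi^2_{1,2\alpha}}\bigr).
\]
Here the branch $W<-\sqrt{\chi^2_{1,2\alpha}}$ is excluded by the indicator. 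The limit distribution is continuous at the critical value, so Portmanteau applies.

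\textbf{Step 3: fixed alternatives.} For fixed $\epsilon_G^*>\epsilon_0$, $\hat\epsilon_G\to\epsilon_G^*$ almost surely, so $\mathbb{P}(\hat\epsilon_G>\epsilon_0)\to1$. Moreover $\ell(\epsilon_0)\to\infty$ in probability, since $n^{-1/2}\sum_i g_i(\epsilon_0)$ is of order $\sqrt n$. A cleaner route is monotonicity: for every $\tau$, the fixed alternative eventually dominates the local one. Using the convexity of $\ell$ and the lower bound $\ell(\epsilon_0)\ge$ the value at $\epsilon_0+\tau n_G^{-1/2}\sigma_G$, we get $\liminf\ge\Phi(\tau\sqrt{\mathbb{P}(G)}-\sqrt{\chi^2_{1,2\alpha}})$ for every $\tau$. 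Letting $\tau\to\infty$ gives the limit $1$.

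\textbf{Main obstacle.} The hardest step is justifying the Owen-type expansion when $\epsilon_0$ is not the true parameter. This requires showing that $0$ lies inside the convex hull of the $g_i(\epsilon_0)$ with probability tending to one, and that $\lambda=O_P(n^{-1/2})$ uniformly along the local sequence. I would handle this by writing $g_i(\epsilon_0)$ as a deterministic $O(n^{-1/2})$ shift of the mean-zero $g_i(\epsilon_G^*)$ and redoing the usual bound $|\lambda|\,(S-\max_i|g_i|\,|\bar g|)\le|\bar g|$. A further subtlety is that $n_G$ is random, which makes $\epsilon_G^*$ data dependent. I would treat this by conditioning on $\mathbb{P}_n(G)$, or by replacing $n_G$ with $n\mathbb{P}(G)$ at an $o_P(1)$ cost.
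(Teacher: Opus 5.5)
Your route is the paper's. You reduce to $T_c=\ell(\epsilon_0)\mathbbm{1}\{\hat\epsilon_G>\epsilon_0\}$, which is exact here because $\hat\epsilon_G>\epsilon_0>0$ forces $\ell(\hat\epsilon_G)=0$, so no $o_P(1)$ is needed. You then expand $\ell(\epsilon_0)$ quadratically, write the standardized score as a centered term plus the shift $\tau\sqrt{\mathbb{P}_n(G)}$, and take the joint limit of its square and its sign. Your remarks on the convex-hull condition and on the randomness of $n_G$ address points the paper passes over when it simply asserts that its quadratic expansion at $\epsilon_0$ remains valid under the local sequence. Two details need correcting.

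First, your display should read $n^{-1/2}\sum_i g_i(\epsilon_0)=n^{-1/2}\sum_i g_i(\epsilon_G^*)+\sqrt n\,\mathbb{P}_n(G)(\epsilon_G^*-\epsilon_0)$. This is what your own identity $g_i(\epsilon_0)=g_i(\epsilon_G^*)+(\epsilon_G^*-\epsilon_0)\mathbbm{1}_G$ gives. ``Up to sign'' is not harmless here, because the indicator $\mathbbm{1}\{\sum_i g_i(\epsilon_0)>0\}$ sees the sign. With a minus sign the limit would be $\Phi(-\tau\sqrt{\mathbb{P}(G)}-\sqrt{\chi^2_{1,2\alpha}})$. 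Your $W\sim N(\tau\sqrt{\mathbb{P}(G)},1)$ is the correct one, and it comes from the plus sign.

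Second, in the monotonicity route for fixed alternatives, the comparison point must be $e_n=\epsilon_G^*-\tau\sigma_G n_G^{-1/2}$, not $\epsilon_0+\tau\sigma_G n_G^{-1/2}$. The latter stays at a fixed distance from the true disparity, so $\ell$ evaluated there has no local limit. Nor is there a stochastic ordering between rejection probabilities under different data-generating distributions to appeal to in this nonparametric model.

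The point $e_n$ eventually exceeds $\epsilon_0$. On $\{\hat\epsilon_G>e_n>\epsilon_0\}$, monotonicity of $\ell$ to the left of $\hat\epsilon_G$ gives $T_c=\ell(\epsilon_0)\ge\ell(e_n)$. Your Step 2 computation, now under a fixed data distribution, gives $\ell(e_n)\mathbbm{1}\{\hat\epsilon_G>e_n\}\tod W^2\mathbbm{1}\{W>0\}$ with the same $W$. Hence $\liminf_n\mathbb{P}(T_c>\chi^2_{1,2\alpha})\ge\Phi(\tau\sqrt{\mathbb{P}(G)}-\sqrt{\chi^2_{1,2\alpha}})$ for every $\tau$, and letting $\tau\to\infty$ gives the limit one.

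This corrected step also replaces your direct argument, which extrapolates the quadratic expansion to the non-local point $\epsilon_0$. The paper's proof ends after the local computation and never argues the fixed-alternative claim. The corrected step therefore fills something the paper leaves open.
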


Together, Theorems~\ref{theo-1}--\ref{theo-2} and Corollary~\ref{coro-2} describe the violation-certification operating profile across the tolerance boundary, fixed interior null points, and violations.

When several subpopulations are audited simultaneously, CEL can be combined with the BH procedure to control the false flagging rate (FFR) across violation-certification decisions. The following theorem shows that the proposed auditing procedure controls false flagging.

 \begin{theo}[False flagging rate control] \label{theoFFR}
Assume that for all $G \in \mathcal{G}$, $\mathbb{P}(G)$ and $\Var(M|G)$ are bounded away from zero, and the parameter $\theta_P$ is known a priori. Furthermore, assume that the groups $\{G\}_{G \in \mathcal{G}}$ are disjoint. Then, if the BH(q) procedure is applied to the collection $\{G\}_{G \in \mathcal{G}}$, the FFR is asymptotically controlled at level $\alpha$.
\end{theo}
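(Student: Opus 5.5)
The plan is the standard argument for Benjamini--Hochberg under independence with valid (super-uniform) $p$-values, carried out asymptotically. Write $\mathcal G=\{G_1,\dots,G_m\}$ with $m$ fixed. Let $p_G=\tfrac12\{1-F_1(T_{c,G})\}$ be the CEL $p$-value for group $G$, and set $p_G=1$ when $T_{c,G}=0$. Let $\mathcal H_0\subseteq\mathcal G$ denote the groups with $\epsilon_G^*\in[0,\epsilon_0]$. With $R$ the number of BH rejections and $V$ the number of false flags, the target is $\limsup_n \mathbb E[V/\max(R,1)]\le \pi_0\alpha\le\alpha$, where $\pi_0=|\mathcal H_0|/m$.

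\textbf{Step 1: asymptotic super-uniformity.} For each null group, Theorems~\ref{theo-1}--\ref{theo-2} show that $T_{c,G}$ converges to $\tfrac12\chi^2_0+\tfrac12\chi^2_1$ at the boundary, and to $0$ in probability at interior points. Hence $\limsup_n \mathbb P(p_G\le t)\le t$ for every $t\in(0,1/2]$, with equality at the boundary. For $t>1/2$ the bound is trivial, since $p_G\le 1/2$ forces $T_{c,G}>0$. Because $\theta_P$ is known, no plug-in of $\hat\theta$ enters, so Condition~\eqref{C3} is not needed.

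\textbf{Step 2: joint asymptotic independence.} Because the groups are disjoint, $T_{c,G}$ depends on the data only through $\{(M_i\mathbbm{1}_G(X_i,Y_i))\}$. Each group statistic is asymptotically a function of its normalized score $Z_G=n^{-1/2}\sum_i(M_i-\theta_P-\epsilon_G^*)\mathbbm{1}_G$, through $T_{c,G}=(Z_G/\sigma_G)_+^2+o_P(1)$ at the boundary.
\begin{itemize}
\item The vector $(Z_G)_G$ is jointly asymptotically normal by the multivariate CLT, using the moments in~\eqref{C2}.
\item Its covariance is diagonal, since $\mathbbm{1}_G\mathbbm{1}_{G'}=0$ for $G\neq G'$.
\end{itemize}
So the limiting scores are independent Gaussians. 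Consequently, the vector of boundary null $p$-values converges jointly in distribution to independent uniforms on a set that includes mass at $1$. Interior null $p$-values converge to $1$, and non-null $p$-values converge to a joint limit $(U_G)_{G\notin\mathcal H_0}$ that is independent of the null limits. Under fixed alternatives, Corollary~\ref{coro-2} shows these non-null $p$-values converge to $0$. For local alternatives the non-null limits are shifted Gaussians, still independent of the null coordinates.

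\textbf{Step 3: FDR control in the limit.} The FFR equals $\mathbb E[\phi(p_{G_1},\dots,p_{G_m})]$, where $\phi=V/\max(R,1)$ is bounded and is a step function of the $p$-values. Its discontinuities lie on sets of the form $\{p_G=kq/m\}$. These sets have probability zero under the limit law, because the limiting $p$-values are continuous away from the atom at $1$, and $1$ is never a BH threshold when $q<1$. The portmanteau theorem then gives $\mathrm{FFR}_n\to\mathbb E[\phi(p^\infty)]$. In the limit the null $p$-values are independent of each other and of the non-nulls, and each is super-uniform. The classical BH computation applies: write $\mathbb E[V/R]=\sum_{G\in\mathcal H_0}\sum_k k^{-1}\mathbb P(p_G\le kq/m,\,R=k)$, condition on the remaining $p$-values, and use super-uniformity. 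This yields $\mathrm{FFR}_\infty\le \pi_0 q\le q$. Taking the BH level $q=\alpha$ gives the claim.

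\textbf{Main obstacle.} I expect Step 2 to be the delicate step. One must show that the empirical likelihood statistics, including the Lagrange multiplier solved with the full-sample $n$, admit the joint quadratic expansion uniformly over the $m$ groups. The fix is to apply the one-group expansion from the proof of Theorem~\ref{theo-1} group by group and then take a finite union of $o_P(1)$ remainders. A further subtlety is that the CEL statistic involves the truncation $\epsilon\ge0$. This is handled because $\epsilon_0>0$, so the truncation only affects events of vanishing probability under null and local configurations near $\epsilon_0$. Null points with $\epsilon_G^*=0$ give $T_{c,G}\to0$ and cause no difficulty.
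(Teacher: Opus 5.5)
Your argument is correct, but it follows a different and more complete route than the paper. The paper's proof is one line. It invokes the Benjamini--Hochberg theorem (Lemma~\ref{lemm-BH}) directly, justified by the remark that disjoint groups give independent $p$-values, and so treats the CEL $p$-values as exactly valid and exactly independent. Neither holds at finite $n$:
\begin{itemize}
\item super-uniformity is only asymptotic (Theorems~\ref{theo-1}--\ref{theo-2});
\item with a fixed total sample size, the group counts $n_G$ are multinomially dependent, so the group statistics are only conditionally independent given the memberships.
\end{itemize}
You replace exact independence with joint asymptotic independence, from the multivariate CLT with diagonal covariance; disjointness enters through $\mathbbm{1}_G\mathbbm{1}_{G'}=0$. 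You replace exact validity with asymptotic super-uniformity, read off from the boundary and interior limits. You then transfer BH's finite-sample guarantee to the limit via the portmanteau theorem, after checking that the discontinuity set $\bigcup_{G,k}\{p_G=kq/m\}$ is null under the limit law; this is exactly where $q<1$ is needed.

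Your route gives an actual proof of the asymptotic claim, at the cost of the joint-expansion and discontinuity bookkeeping. The paper's route buys brevity but is heuristic as written.

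Two small imprecisions in your write-up:
\begin{itemize}
\item A boundary-null limit is not uniform. It is uniform on $(0,1/2)$ with an atom of mass $1/2$ at $1$.
\item Fixed-alternative limits are atoms at $0$, which you should list alongside the atom at $1$. This is harmless, since $0$ is never a BH threshold.
\end{itemize}
Your concern about the truncation $\epsilon\ge 0$ is also unnecessary, because $T_c=\ell(\epsilon_0)\mathbbm{1}(\hepsilon_G>\epsilon_0)$ holds exactly.
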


For disjoint groups $\{G\}_{G \in \mathcal{G}}$, $p$-values are independent, ensuring validity.

\subsection{Sensitivity-oriented screening} \label{sec-sel}

Violation certification and sensitivity screening serve different auditing objectives.
Violation certification supports a formal declaration that a disparity exceeds the prescribed tolerance and therefore prioritizes protection against false violation declarations.
Screening instead provides an early-warning signal for potentially problematic groups, for which missed violations may be more costly than additional alerts.
A screening alert should therefore not be interpreted as a formal violation certificate.

This distinction motivates an operating criterion different from least-favorable violation certification.
For screening, we seek a rule that retains a nondegenerate nominal asymptotic rejection probability at every fixed interior point of the tolerance region and meaningful sensitivity to local violations near the tolerance boundary.
At the boundary, the rule permits an explicitly characterized increase in rejection probability in exchange for greater detection sensitivity.
Consequently, the screening procedure does not claim uniform level-$\alpha$ control over the composite null.

We implement this screening objective through SEL, based on data-adaptive evidence aggregation.
The construction combines centered baseline evidence, boundary-oriented evidence, and a data-dependent gate that determines when the boundary component should be activated.
This design makes SEL an objective-specific auditing procedure rather than a power modification of the violation-certification test.

\subsubsection{Adaptive boundary-proxy inference}

The adaptive boundary-proxy principle combines centered baseline evidence with an additional component that captures departures from the tolerance boundary. A data-dependent gate activates this additional component only when the estimated disparity provides sufficient evidence of a tolerance violation. In the split-sample implementation below, paired differences supply the centered baseline evidence, while paired sums supply both the boundary-oriented evidence and the gate. Thus, adaptation occurs through the selective inclusion of boundary-oriented evidence, rather than through a change in the prespecified tolerance threshold.

To motivate the construction, we decompose the deviation from the boundary in the CEL statistic~\eqref{eq-logL-ineq}.
Let $\bar{g}(\epsilon_G)=n^{-1}\sum_{i=1}^n g_i(\epsilon_G)$. By the definition of $\hepsilon_G$, $\bar{g}(\epsilon_G)=\frac{n_G}{n}(\hepsilon_G-\epsilon_G)$.
It follows from the proof of Corollary~\ref{coro-2} that
\bea
\ell(\epsilon_0) = \left(\sqrt{n}\bar{g}(\epsilon_0) \over \sigma_G\right)^2 + o_p(1)
= \left(\frac{n_G}{\sqrt{n}\sigma_G}(\hepsilon_G-\epsilon_0)\right)^2 + o_p(1), \nn
\eea
where $\sigma_G^2 = { \mathbb{E} \left[(M - \theta_P - \epsilon_G^*)^2 \mathbbm{1}_{G} \right] } = { \mathbb{P}(G) }\Var(M\ |\ G)$.
Since
\( %\label{eq-mu-decomp}
\hepsilon_G-\epsilon_0=(\hepsilon_G-\epsilon_G^*)+(\epsilon_G^*-\epsilon_0),\nn
\)
we obtain
\bea
{\sqrt{n}\bar{g}(\epsilon_0) \over \sigma_G}
= \frac{n_G}{\sqrt{n}\sigma_G}(\hepsilon_G-\epsilon_G^*)+\frac{n_G}{\sqrt{n}\sigma_G}(\epsilon_G^*-\epsilon_0). \nn
\eea
The term $\hepsilon_G-\epsilon_G^*$ captures estimation error and is asymptotically centered, since ${\sqrt{n}\bar{g}(\epsilon_G^*) \over \sigma_G}=\frac{n_G(\hepsilon_G-\epsilon_G^*)}{\sqrt{n}\sigma_G}\tod Z$, whereas the boundary shift $\epsilon_G^*-\epsilon_0$ contributes a nonvanishing translation under local alternatives.
Motivated by this decomposition, the adaptive aggregation is designed to satisfy the following three properties:
\begin{enumerate}
\item[(i)] Under $H_0$, we construct a statistic that serves as a centered proxy for the estimation error $\hepsilon_G-\epsilon_G^*$, without requiring knowledge of the exact value of $\epsilon_G^*$, and that retains an asymptotic $\chi^2$ limit.
\item[(ii)] Under $H_1$, when the true shift $\epsilon_G^*-\epsilon_0$ is positive, we employ a gated statistic as a nonvanishing proxy for the boundary distance, thereby ensuring nontrivial power.
\item[(iii)] Under $H_0$, we introduce a threshold so that the boundary-distance proxy vanishes in probability when $\epsilon_G^* < \epsilon_0$.
\end{enumerate}
Together, these properties implement the screening criterion by retaining baseline evidence throughout the interior, activating boundary evidence under violations, and admitting an explicitly characterized increase in rejection probability at the tolerance boundary.

To implement these adaptive features, 
we partition the group-\(G\) subsample into two disjoint halves and pair observations across the two halves.
We use a primary estimator $\hepsilon_1$, defined below, to construct the test statistic and an auxiliary estimator $\hepsilon_2$ to supply the boundary-proxy component.
Without loss of generality, assume that $n_G=2m$ and that the index set $\{1,\ldots,n_G\}$ is partitioned into disjoint subsets $\{a_1,\ldots,a_m\}$ and $\{b_1,\ldots,b_m\}$.
For $i=1,\ldots,m$, define the paired contrasts
\[
\left\{
\begin{aligned}
\omega_{1i} &= g_{a_i}(0;\theta_P)-g_{b_i}(0;\theta_P) = M_{a_i}-M_{b_i}, \\
\omega_{2i} &= g_{a_i}(0;\theta_P)+g_{b_i}(0;\theta_P) = M_{a_i}+M_{b_i}-2\theta_P,
\end{aligned}
\right.
\]
In this construction, the difference contrast $\omega_{1i}$ serves as a proxy for estimation error, whereas the sum contrast $\omega_{2i}$ serves as a proxy for boundary shift.
Their sample means are
\(
\hepsilon_1=\frac{1}{m}\sum_{i=1}^m \omega_{1i},
\)
and
\(
\hepsilon_2=\frac{1}{m}\sum_{i=1}^m \omega_{2i}.
\)
Let $\epsilon_j^*=\mathbb{E}(\omega_{ji})$ and $\sigma_j^2=\Var(\omega_{ji})$ for $j=1,2$.
Under i.i.d.\ sampling with $\mathbb{E}(M_i-\theta_P\mid G)=\epsilon_G^*$ and $\Var(M_i-\theta_P\mid G)=\sigma_G^2/\mathbb{P}(G)$, we have
\(
\epsilon_1^*=0,\ \epsilon_2^*=2\epsilon_G^*,\ \sigma_1^2=\sigma_2^2=\frac{2\sigma_G^2}{\mathbb{P}(G)},
\)
and
\(
\Var(\hepsilon_1)=\Var(\hepsilon_2)=\frac{2\sigma_G^2}{m\,\mathbb{P}(G)}.
\)

For each $j\in\{1,2\}$, the empirical likelihood ratio for a generic mean value $\epsilon$ based on $\{\omega_{ji}\}_{i=1}^m$ is
\[
\begin{aligned}
R_j(\epsilon) =\sup_{p_i,1\le i\le m} \biggl\{
\prod_{i=1}^m (m p_i) 
\ \Big|\ 
p_i\ge 0, \ 
\sum_{i=1}^m p_i=1,\ 
\sum_{i=1}^m p_i (\omega_{ji}-\epsilon)=0 \ 
\biggr\},
\end{aligned}
\]
and let $\ell_j(\epsilon)=-2\log R_j(\epsilon)$ denote the corresponding empirical log-likelihood ratio statistic.
Although the transformed variables $\omega_{1i}$ and $\omega_{2i}$ are constructed from the same paired observations and hence are generally not independent in finite samples, they are orthogonal in the sense that their covariance vanishes. Together with the joint central limit theorem, this orthogonality implies that the limiting Gaussian components of $\hepsilon_1$ and $\hepsilon_2$ are independent. The quadratic expansion of the empirical likelihood ratios then transfers this asymptotic independence to the statistics $\ell_1(\epsilon)$ and $\ell_2(\epsilon)$. 
This independence greatly simplifies the subsequent asymptotic analysis.

To assess whether the adaptive boundary-proxy construction delivers the intended screening behavior, we first establish the limiting laws of its baseline and boundary-oriented evidence components.

\begin{theo}\label{theo-3}
Suppose that conditions~\eqref{C1}--\eqref{C2} hold and that, for some $\delta>0$, $\mathbb{E}|\omega_{ji}-\epsilon_j^*|^{2+\delta}<\infty$ for $j=1,2$.
Let $\epsilon_G^*$ denote the true performance disparity.
Then,
\[
\ell_j(\epsilon_j^*)\tod \chi^2_1 \quad \text{as} \quad m\to\infty, \quad j=1,2,
\]
where $\epsilon_1^*=0$, $\epsilon_2^*=2\epsilon_G^*$, and $\chi^2_1$ is a chi-squared distribution with one degree of freedom.
\end{theo}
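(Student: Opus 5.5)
The plan is to reduce the statement to Owen's classical Wilks theorem for the mean of an i.i.d.\ scalar sample, applied separately to $\{\omega_{1i}\}_{i=1}^m$ and $\{\omega_{2i}\}_{i=1}^m$.

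The first step is to verify that, conditionally on the group-$G$ subsample and the split, each sequence $\omega_{j1},\ldots,\omega_{jm}$ is i.i.d. Under condition~\eqref{C1}, the observations falling in $G$ are i.i.d.\ draws from the conditional law of $M$ given $G$. The pairs $(a_i,b_i)$ are disjoint, so the $\omega_{ji}$ are i.i.d.\ with mean $\epsilon_j^*$, namely $0$ and $2\epsilon_G^*$. Their common variance is $\sigma_j^2 = 2\Var(M\mid G)$, which is bounded away from zero by condition~\eqref{C2}. The moment assumption $\mathbb{E}|\omega_{ji}-\epsilon_j^*|^{2+\delta}<\infty$ is assumed directly. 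It also follows from the conditional $(2+\delta)$-moment in~\eqref{C2} via the $c_r$ inequality. Since $m = n_G/2$ and $n_G/n \to \mathbb{P}(G) > 0$ almost surely, we have $m\to\infty$. Conditioning on $n_G$ is harmless because the limit law will not depend on it.

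The second step runs the standard EL argument at $\epsilon=\epsilon_j^*$. Write $W_i=\omega_{ji}-\epsilon_j^*$.
\begin{itemize}
\item[(a)] By the CLT, $\bar W = O_p(m^{-1/2})$ and $S=m^{-1}\sum_i W_i^2 \pto \sigma_j^2 > 0$.
\item[(b)] Using the $(2+\delta)$ moment and a union bound, $\max_i|W_i| = o_p(m^{1/2})$. Together with $\sigma_j^2>0$, this shows that zero lies in the interior of the convex hull with probability tending to one, so $\lambda$ exists.
\item[(c)] From the estimating equation $m^{-1}\sum_i W_i/(1+\lambda W_i)=0$, the usual Owen bound gives $|\lambda|/(1+|\lambda|\max_i|W_i|)\le |\bar W|/S^{*}$ with $S^*\pto\sigma_j^2$. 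Hence $\lambda = O_p(m^{-1/2})$, and then $\lambda = \bar W/S + o_p(m^{-1/2})$.
\item[(d)] Taylor expanding $\log(1+x)=x-x^2/2+O(x^3)$ with $\max_i|\lambda W_i| = o_p(1)$ yields $\ell_j(\epsilon_j^*) = m\bar W^2/S + o_p(1)$. The remainder is bounded by $|\lambda|^3\max_i|W_i|\sum_i W_i^2 = o_p(1)$.
\end{itemize}

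The final step uses $\sqrt m\,\bar W/\sigma_j\tod N(0,1)$ and Slutsky's lemma to obtain $\chi^2_1$. The main obstacle is step (b): the control of $\max_i|W_i|$ and the remainder term, which must hold uniformly in the random $m$. I would handle it by arguing conditionally on $n_G$ and using the Borel--Cantelli form of the maximum bound. Since the conditional limit is the same $\chi^2_1$ for every sequence of $m\to\infty$, dominated convergence then gives the unconditional result.
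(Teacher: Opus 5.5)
Your proposal is correct and follows essentially the same route as the paper. Like the paper, you check that each $\{\omega_{ji}\}_{i=1}^m$ is an i.i.d.\ sample with mean $\epsilon_j^*$, variance $2\sigma_G^2/\mathbb{P}(G)>0$ and a finite $(2+\delta)$ moment, then apply Owen's one-dimensional expansion $\ell_j(\epsilon_j^*)=m(\hepsilon_j-\epsilon_j^*)^2/\sigma_j^2+o_p(1)$; the difference is that you spell it out while the paper only cites it, and your conditioning on $n_G$ with bounded convergence handles the random $m$, which the paper leaves implicit (the conditional law depends only on the deterministic size $m$, so this argument alone suffices and no Borel--Cantelli step is needed).
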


Theorem~\ref{theo-3} supplies the component limits needed to analyze data-adaptive evidence aggregation.

Let
\(T_1=\ell_1(0),\, T_2=\ell_2(2\epsilon_0),\) and \(T_3=\sqrt{m}\,(\hepsilon_2 - 2\epsilon_0).\)
Here, $T_1$ supplies centered baseline screening evidence, $T_2$ measures evidence relative to the tolerance boundary, and $T_3$ acts as a data-dependent gate that determines whether the boundary-oriented component should be activated.
We define the SEL statistic by aggregating these components adaptively:
\[
T_s=T_1+T_2\,\mathbbm{1}(T_3>c_{\alpha_2}),
\]
where the gate threshold is chosen as $c_{\alpha_2}=\Phi^{-1}(1-\alpha_2)\,\sigma_2$, so that the gate is activated when $\hepsilon_2$ significantly exceeds $2\epsilon_0$ at significance level $\alpha_2$.

The indicator $\mathbbm{1}(T_3>c_{\alpha_2})$ implements the adaptive aggregation by activating $T_2$ only when the paired-sum estimator $\hepsilon_2$ provides sufficient evidence that the disparity exceeds the tolerance threshold.
Intuitively, in the interior of $H_0$, the gate is asymptotically inactive: since $\mathbbm{1}(T_3>c_{\alpha_2})\pto 0$ and $T_2\ge 0$, we have
\(
T_2 \mathbbm{1}(T_3>c_{\alpha_2})=o_p(1),
\)
and
\(
T_s = T_1 + o_p(1) \tod \chi^2_1.
\)
Thus, in the interior of $H_0$, $T_s$ adapts to $T_1$ and converges in distribution to a $\chi^2_1$ random variable.
We reject $H_0$ whenever $T_s>\chi_{1,\alpha}^2$.
The boundary behavior, where the gate remains active with nonvanishing probability, is characterized in Theorem~\ref{theo-4}.
Under the alternative $H_1$, because $T_1=O_p(1)$, $T_2=O_p(n)$, and $T_3$ diverges at rate $\sqrt{n}$, we have
\(
\mathbbm{1}(T_3>c_{\alpha_2})\pto 1,
\)
thus
\(
T_s \pto \infty,
\)
and $T_s$ diverges to infinity at the $O(n)$ rate.
Therefore, despite the use of sample splitting, the SEL test remains consistent under fixed alternatives.

We next determine whether the adaptive aggregation satisfies the operating criterion for sensitivity-oriented screening across the interior and boundary of the tolerance region.

%\noindent
\begin{theo}\label{theo-4}
Suppose that conditions~\eqref{C1}--\eqref{C2} hold. 
Let $\epsilon_G^*$ denote the true performance disparity and $c_{\alpha_2}=\Phi^{-1}(1-\alpha_2)\sigma_2$. 
Then, under the null hypothesis $H_0: 0 \le \epsilon_G \le \epsilon_0$, we have 
\bea
\lim_{n_G \to \infty} \mathbb{P}\left\{ T_s> \chi_{1,\alpha_1}^2 \right\} =
\begin{cases}
\alpha_1, & \mathrm{if}\ 0 \le \epsilon_G^* < \epsilon_0\\
\alpha_1 \Phi(z_{\alpha_2}) + \displaystyle \int_{z_{\alpha_2}}^\infty \bar F_{\chi_1^2}(\chi_{1,\alpha_1}^2-z^2)\,\phi(z)\,dz, & \mathrm{if} \   \epsilon_G^* =\epsilon_0
\end{cases} \nn
\eea 
where $\bar F_{\chi_1^2}(x)=\mathbb{P}(\chi_1^2>x)$. Moreover, under $ \epsilon_G^* = \epsilon_0$, we have
\bea
\lim_{n_G \to \infty} \mathbb{P}\left\{ T_s>\chi_{1,\alpha_1}^2 \right\}
=
\begin{cases}
\alpha_1+ (1-\alpha_1)\alpha_2, & \alpha_1\ge 2\alpha_2,\\
\alpha_1\left(\frac32-\alpha_2\right) +  \displaystyle\int_{z_{\alpha_2}}^{z_{\frac{\alpha_1}{2}}} \bar F_{\chi_1^2}(z_{\frac{\alpha_1}{2}}^2-z^2)\phi(z)\,dz,
& \alpha_1<2\alpha_2.
\end{cases} \nn
\eea
\end{theo}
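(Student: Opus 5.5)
We will work through the joint asymptotics of the three components $(T_1,T_2,T_3)$ and then evaluate the rejection probability case by case.

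\textbf{Step 1: joint limit of the components.} Set
\[
Z_1=\sqrt m\,\hepsilon_1/\sigma_1, \qquad Z_2=\sqrt m\,(\hepsilon_2-2\epsilon_G^*)/\sigma_2 .
\]
The pairs $(\omega_{1i},\omega_{2i})$ are i.i.d. with $\Cov(\omega_{1i},\omega_{2i})=\Var(M_{a_i})-\Var(M_{b_i})=0$. The bivariate CLT therefore gives $(Z_1,Z_2)\tod(\xi_1,\xi_2)$, with $\xi_1,\xi_2$ independent standard normals.

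Next we use the standard quadratic expansion of the empirical likelihood ratio, the same one used in the proofs of Theorem~\ref{theo-3} and Corollary~\ref{coro-2}. It gives
\[
T_1=\ell_1(0)=Z_1^2+o_p(1),
\]
and, when $\epsilon_G^*=\epsilon_0$,
\[
T_2=\ell_2(2\epsilon_0)=Z_2^2+o_p(1).
\]
Finally, $T_3/\sigma_2=Z_2+\sqrt m\,(2\epsilon_G^*-2\epsilon_0)/\sigma_2$. Replacing $\sigma_2$ by a consistent estimator in the gate does not change the limit.

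\textbf{Step 2: interior null, $\epsilon_G^*<\epsilon_0$.} Here $T_3/\sigma_2\to-\infty$ in probability. Hence
\[
\mathbb{P}(T_3>c_{\alpha_2})=\mathbb{P}\bigl(Z_2>z_{\alpha_2}+2\sqrt m(\epsilon_0-\epsilon_G^*)/\sigma_2\bigr)\to0 .
\]
We need no control of $T_2$ on this event, because $T_2\mathbbm 1(T_3>c_{\alpha_2})$ is nonzero only on an event of vanishing probability, so it is $o_p(1)$. It follows that $T_s=T_1+o_p(1)\tod\chi^2_1$. Since $\chi^2_1$ is continuous, the limit of the rejection probability is $\alpha_1$.

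\textbf{Step 3: boundary, $\epsilon_G^*=\epsilon_0$.} Now $T_s=Z_1^2+Z_2^2\mathbbm 1(Z_2>z_{\alpha_2})+o_p(1)$. The map
\[
(x,y)\mapsto x^2+y^2\mathbbm 1(y>z_{\alpha_2})
\]
is discontinuous only on the null set $\{y=z_{\alpha_2}\}$. Moreover, the limit law puts no mass on the level $\chi^2_{1,\alpha_1}$. So the continuous mapping theorem and the portmanteau theorem apply, giving
\[
\lim\mathbb{P}(T_s>\chi^2_{1,\alpha_1})=\mathbb{P}\bigl(\xi_1^2+\xi_2^2\mathbbm 1(\xi_2>z_{\alpha_2})>\chi^2_{1,\alpha_1}\bigr).
\]
We compute this by conditioning on $\xi_2$, using that $\xi_1^2\sim\chi^2_1$ is independent of $\xi_2$:
\begin{itemize}
\item On $\{\xi_2\le z_{\alpha_2}\}$, which has probability $\Phi(z_{\alpha_2})$, the event reduces to $\xi_1^2>\chi^2_{1,\alpha_1}$, contributing $\alpha_1\Phi(z_{\alpha_2})$.
\item On $\{\xi_2=z>z_{\alpha_2}\}$ the conditional probability is $\bar F_{\chi^2_1}(\chi^2_{1,\alpha_1}-z^2)$, which integrates against $\phi(z)$.
\end{itemize}
This yields the first display. (Here $z_{\alpha_2}$ denotes the upper-$\alpha_2$ normal quantile, consistent with $c_{\alpha_2}$.)

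\textbf{Step 4: closed forms.} We use $\chi^2_{1,\alpha_1}=z_{\alpha_1/2}^2$, and note that $\bar F_{\chi^2_1}(x)=1$ for $x\le0$, that is, for $z\ge z_{\alpha_1/2}$.
\begin{itemize}
\item If $\alpha_1\ge2\alpha_2$, then $z_{\alpha_2}\ge z_{\alpha_1/2}$. The integrand equals $1$ on the whole range, so the integral is $\alpha_2$. The total is $\alpha_1(1-\alpha_2)+\alpha_2=\alpha_1+(1-\alpha_1)\alpha_2$.
\item If $\alpha_1<2\alpha_2$, split the integral at $z_{\alpha_1/2}$. The piece beyond $z_{\alpha_1/2}$ contributes $1-\Phi(z_{\alpha_1/2})=\alpha_1/2$. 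Adding $\alpha_1(1-\alpha_2)$ gives $\alpha_1(\tfrac32-\alpha_2)$ plus the remaining integral over $[z_{\alpha_2},z_{\alpha_1/2}]$, as stated.
\end{itemize}

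\textbf{Main obstacle.} The delicate point is Step 1's claim that $\ell_1$ and $\ell_2$, built from dependent (though uncorrelated) contrasts, have jointly independent limits. We handle this by establishing the EL expansions as deterministic functions of $(Z_1,Z_2)$ plus $o_p(1)$ remainders. These expansions use $\max_i|\omega_{ji}|=o_p(\sqrt m)$ (from the $2+\delta$ moments) and $\lambda_j=O_p(m^{-1/2})$. With the expansions in hand, the joint CLT carries over directly.
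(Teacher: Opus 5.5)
Your proposal is correct and follows essentially the same route as the paper's proof. Both arguments use a joint CLT for the uncorrelated paired contrasts and the quadratic EL expansions $T_1=Z_1^2+o_p(1)$ and, at the boundary, $T_2=Z_2^2+o_p(1)$; both show the gate vanishes at interior null points and evaluate the boundary limit by conditioning on the gate variable, then split the integral at $z_{\alpha_1/2}$. Your explicit checks that the discontinuity set $\{y=z_{\alpha_2}\}$ is null and that the limit law has no atom at $\chi^2_{1,\alpha_1}$ add rigor the paper leaves implicit.
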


%Theorem~\ref{theo-4} shows that the SEL test deliberately permits boundary size inflation while maintaining asymptotic rejection probability \(\alpha_1\) at every fixed interior null point.
Theorem~\ref{theo-4} characterizes the operating behavior induced by the adaptive boundary-proxy principle in its split empirical-likelihood implementation.
The proxy is asymptotically inactive at every fixed interior null point, remains nondegenerate at the tolerance boundary, and becomes active under violations.
Relative to least-favorable violation certification, this operating behavior yields less conservative screening in the interior and greater sensitivity near the tolerance boundary.
%Theorem~\ref{theo-4} shows that, at the boundary point $\epsilon_G^*=\epsilon_0$, the asymptotic size of the SEL test may exceed $\alpha_1$, leading to a mild size inflation. 
To calibrate the test to a prescribed nominal level $\alpha$, one can choose $\alpha_1$ as a function of $\alpha_2$. In the case $\alpha_1\ge 2\alpha_2$, the explicit calibration is
\(
\alpha_1=\frac{\alpha-\alpha_2}{1-\alpha_2}.
\)
When $\alpha_1<2\alpha_2$, no closed-form expression is available, and $\alpha_1$ can instead be obtained by numerically solving the corresponding limiting size equation in Theorem~\ref{theo-4}.
In fairness testing, the boundary $\epsilon_0$ represents the tolerance threshold. 
%A slightly elevated rejection probability at this boundary is therefore acceptable, as it corresponds to increased sensitivity near the limit of tolerable unfairness. 
%Accordingly, we set $\alpha_1=\alpha$, which maintains the Type~I error rate at level $\alpha$ throughout the interior region $0\le \epsilon_G^*<\epsilon_0$. 
%For sensitivity-oriented screening, we deliberately allow boundary size inflation in exchange for improved detection near the tolerance threshold.
For sensitivity-oriented screening, we admit this explicitly characterized increase in rejection probability at the tolerance boundary in exchange for improved detection near the threshold.
%Accordingly, we set \(\alpha_1=\alpha\), thereby prioritizing sensitivity near the tolerance boundary at the cost of explicitly characterized boundary size inflation.
Accordingly, we set \(\alpha_1=\alpha\), thereby retaining the nominal rejection probability at every fixed interior null point while prioritizing sensitivity near the tolerance boundary.
Because the tolerance boundary belongs to the composite null, this increase is formally an increase in the boundary Type~I rejection probability. Under the screening objective, however, it is interpreted as an explicit operating trade-off rather than as a violation-certification guarantee.
%Relative to LFP-calibrated CEL, this operating rule is less conservative at fixed interior null points and more sensitive near the tolerance boundary.
%These differences are consequences of the screening design rather than the primary motivation for SEL.
%By Theorem~\ref{theo-4}, the SEL-based liberal-screening inversion region for $\epsilon_G$ can be constructed as
By Theorem~\ref{theo-4}, the SEL screening inversion region for $\epsilon_G$ can be constructed as
\bea
\{ \epsilon: T_s \leq \chi_{1,\alpha}^2 \}. \nn
\eea

Interior and boundary behavior alone do not quantify the risk of missed violations. Corollary~\ref{coro-3} therefore characterizes screening sensitivity under local and fixed violations.

\begin{coro}\label{coro-3}
For the test $H_0:  0 \le \epsilon_G \le \epsilon_0$  versus $H_1:  \epsilon_G > \epsilon_0$,  
if the true performance disparity is \( \epsilon_G^* = \epsilon_0 + \tau n_G^{-1/2} \sigma_G \), where \( \tau > 0\), \(n_G=2m\) is the split group-$G$ sample size, and \( \sigma_G^2 = { \mathbb{P}(G) }\Var(M\mid G)  \), then we have 
\bea
\lim_{n_G \to \infty} \mathbb{P}\left\{ T_s>\chi_{1,\alpha_1}^2 \mid \epsilon_G^* \right\}
=  \alpha_1 \Phi(z_{\alpha_2}-\tau\sqrt{\mathbb{P}(G)}) + \int_{z_{\alpha_2}}^\infty \bar F_{\chi_1^2}(\chi_{1,\alpha_1}^2-v^2)\,\phi(v-\tau\sqrt{\mathbb{P}(G)})\,dv. \nn
\eea
More explicitly, this limiting rejection probability admits the following two forms. If $\alpha_1\ge 2\alpha_2$, then
\bea
\lim_{n_G \to \infty} \mathbb{P}\left\{ T_s>\chi_{1,\alpha_1}^2 \mid \epsilon_G^*  \right\}
=1-(1-\alpha_1)\Phi(z_{\alpha_2}-\tau\sqrt{\mathbb{P}(G)}). \nn
\eea
If $\alpha_1<2\alpha_2$, then
\bea
\begin{aligned}
\lim_{n_G \to \infty} \mathbb{P}\left\{ T_s>\chi_{1,\alpha_1}^2  \mid \epsilon_G^*  \right\}
={}&
\alpha_1\Phi(z_{\alpha_2}-\tau\sqrt{\mathbb{P}(G)})
+\int_{z_{\alpha_2}}^{z_{\frac{\alpha_1}{2}}}
\bar F_{\chi_1^2}(\chi_{1,\alpha_1}^2-v^2)\phi(v-\tau\sqrt{\mathbb{P}(G)})\,dv \\
&+1-\Phi(z_{\frac{\alpha_1}{2}}-\tau\sqrt{\mathbb{P}(G)}).
\end{aligned}\nn
\eea
Moreover, for any fixed \( \epsilon_G^* > \epsilon_0 \), 
\(
\lim_{n_G \to \infty} \mathbb{P}\left\{ T_s>\chi_{1,\alpha_1}^2 \mid \epsilon_G^* \right\} = 1.
\)
\end{coro}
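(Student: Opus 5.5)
The plan is to reduce $T_s$ under the local alternative to a function of two asymptotically independent Gaussian limits, and then integrate. Write $\epsilon_G^*=\epsilon_0+\tau n_G^{-1/2}\sigma_G$.

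\textbf{Step 1 (baseline component).} Since $\epsilon_1^*=0$ regardless of $\epsilon_G^*$, the quadratic expansion of Theorem~\ref{theo-3} applies under the local sequence (the moment condition holds uniformly along it). It gives $T_1=\ell_1(0)=Z_1^2+o_p(1)$ with $Z_1=\sqrt m\,\hepsilon_1/\sigma_1$.

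\textbf{Step 2 (boundary component and gate).} Write
\[
\sqrt m(\hepsilon_2-2\epsilon_0)/\sigma_2=\sqrt m(\hepsilon_2-2\epsilon_G^*)/\sigma_2+2\sqrt m(\epsilon_G^*-\epsilon_0)/\sigma_2 .
\]
The first term is $Z_2+o_p(1)$ with $Z_2\sim N(0,1)$. For the shift, $n_G=2m$ and $\sigma_2^2=2\sigma_G^2/\mathbb P(G)$ give
\[
2\sqrt m\,\tau\sigma_G/(\sqrt{2m}\,\sigma_2)=\tau\sqrt{\mathbb P(G)} .
\]
So $V:=T_3/\sigma_2\to Z_2+\tau\sqrt{\mathbb P(G)}$. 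The same EL quadratic expansion, applied at $2\epsilon_0$, which is within $O(m^{-1/2})$ of the truth, gives $T_2=\ell_2(2\epsilon_0)=V^2+o_p(1)$. This is the standard result that the Wilks expansion holds uniformly over $\epsilon$ in a $m^{-1/2}$-neighborhood, using $\lambda=O_p(m^{-1/2})$ and $\max|\omega_{2i}|=o_p(m^{1/2})$ from the $2+\delta$ moments. The gate is $\mathbbm 1(V>z_{\alpha_2})$, where $z_{\alpha_2}=\Phi^{-1}(1-\alpha_2)$.

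\textbf{Step 3 (joint limit and integration).} $\Cov(\omega_{1i},\omega_{2i})=\Var(M_{a_i})-\Var(M_{b_i})=0$, so the bivariate CLT makes $(Z_1,V)$ asymptotically independent, with $Z_1\sim N(0,1)$ and $V\sim N(\tau\sqrt{\mathbb P(G)},1)$. By the continuous mapping theorem,
\[
T_s\tod Z_1^2+V^2\mathbbm 1(V>z_{\alpha_2}) .
\]
The discontinuity set $\{V=z_{\alpha_2}\}$ is null, and the limit law is continuous at $\chi^2_{1,\alpha_1}$. Conditioning on $V$ then gives the first display:
\[
\alpha_1\,\mathbb P(V\le z_{\alpha_2})+\int_{z_{\alpha_2}}^\infty \bar F_{\chi^2_1}(\chi^2_{1,\alpha_1}-v^2)\,\phi(v-\tau\sqrt{\mathbb P(G)})\,dv ,
\]
with $\mathbb P(V\le z_{\alpha_2})=\Phi(z_{\alpha_2}-\tau\sqrt{\mathbb P(G)})$.

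\textbf{Step 4 (the two explicit forms).} Use $\chi^2_{1,\alpha_1}=z_{\alpha_1/2}^2$, so $\bar F(\chi^2_{1,\alpha_1}-v^2)=1$ when $v\ge z_{\alpha_1/2}$.
\begin{itemize}
\item If $\alpha_1\ge2\alpha_2$, then $z_{\alpha_2}\ge z_{\alpha_1/2}$, the integrand is identically $1$, and the integral equals $1-\Phi(z_{\alpha_2}-\tau\sqrt{\mathbb P(G)})$. Combining with the first term gives $1-(1-\alpha_1)\Phi(z_{\alpha_2}-\tau\sqrt{\mathbb P(G)})$.
\item Otherwise, split the integral at $z_{\alpha_1/2}$. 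This yields the stated three-term formula.
\end{itemize}

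\textbf{Step 5 (fixed alternatives).} For fixed $\epsilon_G^*>\epsilon_0$, $V\to\infty$ in probability at rate $\sqrt m$. Hence the gate tends to $1$ in probability and $T_2\ge$ a quantity of order $m$, so $T_s\to\infty$ and the power tends to $1$. Alternatively, let $\tau\to\infty$ monotonically in the local formula.

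\textbf{Main obstacle.} The delicate step is Step 2: justifying $\ell_2(2\epsilon_0)=V^2+o_p(1)$ under a triangular array whose mean drifts, since Theorem~\ref{theo-3} is stated only at the true mean. I would first re-run Owen's argument with a Lindeberg/Lyapunov CLT for the array, using the uniform $2+\delta$ moment bound in Condition~\eqref{C2}. I would then expand $\lambda$ around $\epsilon=2\epsilon_0$, noting $|\hepsilon_2-2\epsilon_0|=O_p(m^{-1/2})$.
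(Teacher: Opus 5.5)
Your proposal is correct and follows essentially the same route as the paper. It uses local quadratic EL expansions for $T_1$ and $T_2$, then the zero covariance of $\omega_{1i}$ and $\omega_{2i}$ with the joint CLT to get $T_s\tod U+V^2\mathbbm{1}(V>z_{\alpha_2})$ with $U$ and $V$ independent. It finishes by conditioning on $V$, splitting the integral at $z_{\alpha_1/2}$, and invoking EL consistency for fixed alternatives. Your treatment of the drifting-mean (triangular-array) issue in Step~2 is more careful than the paper, which simply applies its local expansion; note, though, that the ``let $\tau\to\infty$'' shortcut in Step~5 does not by itself give fixed-alternative power without extra monotonicity or uniformity, so rely on your primary consistency argument there.
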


Together, Theorems~\ref{theo-3}--\ref{theo-4} and Corollary~\ref{coro-3} establish the operating profile required for the basic adaptive boundary-proxy screening procedure.

\subsubsection{Power-enhanced screening}
%Let \( \widetilde{T}_2=\ell_2(0)\). We define the adjusted split empirical likelihood statistic (ASEL) as follows:
ASEL provides a power-enhanced implementation of the adaptive boundary-proxy principle by replacing the boundary component $T_2$ with $\widetilde{T}_2=\ell_2(0)$, which is more strongly separated from its reference value under violations.
The resulting statistic is
\[
T_a=T_1+\widetilde{T}_2\,\mathbbm{1}(T_3>c_{\alpha_2}),
\]
where $c_{\alpha_2}=\Phi^{-1}(1-\alpha_2)\,\sigma_2$. Intuitively, in the interior of $H_0$, $T_a$ adapts to $T_1$ and converges in distribution to a $\chi^2_1$ random variable.
Under fixed alternatives $H_1$ with $\epsilon_G^*>\epsilon_0$, we have $T_1=O_p(1)$, whereas $T_3$ diverges to $+\infty$. Hence,
\(
\mathbbm{1}(T_3>c_{\alpha_2})\pto 1.
\)
In addition, the empirical likelihood ratio $\widetilde T_2=\ell_2(0)$ is centered farther away from the true mean of $\omega_{2i}$ than $T_2=\ell_2(2\epsilon_0)$. More precisely,
\(
\widetilde T_2-T_2\pto +\infty .
\)
Thus, $\widetilde T_2$ provides a stronger signal against the null under such fixed alternatives. Consequently,
\(
T_a-T_s=(\widetilde T_2-T_2)\mathbbm{1}(T_3>c_{\alpha_2})
\pto +\infty,
\)
and therefore
\(
\mathbbm{1}(T_a>T_s)\pto 1.
\)
This shows that, under fixed alternatives, $T_a$ is asymptotically larger than $T_s$ with probability tending to one. Hence, when the same rejection threshold is used, $T_a$ has asymptotically no smaller rejection probability than $T_s$, and typically yields higher power.
When $\epsilon_0=0$, we have $T_s=T_a$. We therefore focus on the case $\epsilon_0>0$ in what follows.
A power-enhanced implementation is useful for sensitivity-oriented auditing only if it preserves the screening operating criterion. The following result therefore characterizes ASEL in the interior, at the tolerance boundary, and under local violations.
\begin{theo}\label{theo-5}
Suppose that $\epsilon_0>0$ and conditions~\eqref{C1}--\eqref{C2} hold. 
Let $\epsilon_G^*$ denote the true performance disparity and $c_{\alpha_2}=\Phi^{-1}(1-\alpha_2)\sigma_2$. 
Then, under the null hypothesis $H_0: 0 \le \epsilon_G \le \epsilon_0$, we have 
\bea
\lim_{n_G \to \infty} \mathbb{P}\left\{ T_a> \chi_{1,\alpha_1}^2 \right\} =
\begin{cases}
\alpha_1, & \mathrm{if}\ 0 \le \epsilon_G^* < \epsilon_0\\
\alpha_1+(1-\alpha_1)\alpha_2, & \mathrm{if} \   \epsilon_G^* =\epsilon_0
\end{cases} \nn
\eea 
and under the alternative hypothesis $H_1:  \epsilon_G^* = \epsilon_0 + \tau n_G^{-1/2} \sigma_G $, where \( \tau > 0\), \(n_G=2m\) is the split group-$G$ sample size, and \( \sigma_G^2 = { \mathbb{P}(G) }\Var(M\mid G)  \), then we have 
\bea
\lim_{n_G \to \infty} \mathbb{P}\left\{ T_a>\chi_{1,\alpha_1}^2 \mid \epsilon_G^* \right\}
= 
1-(1-\alpha_1)\Phi(z_{\alpha_2}-\tau\sqrt{\mathbb{P}(G)}). \nn
\eea
\end{theo}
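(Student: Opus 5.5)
Because $\omega_{1i}$ and $\omega_{2i}$ are uncorrelated, I would first establish a joint limit for the two standardized means and then analyze $T_a$ separately under the interior null, at the boundary, and under local alternatives. Set $Z_1=\sqrt{m}\,\hepsilon_1/\sigma_1$ and $Z_2=\sqrt{m}(\hepsilon_2-\epsilon_2^*)/\sigma_2$. Under conditions~\eqref{C1}--\eqref{C2}, the bivariate Lindeberg CLT together with $\Cov(\omega_{1i},\omega_{2i})=\Var(M_{a_i})-\Var(M_{b_i})=0$ gives $(Z_1,Z_2)\tod(W_1,W_2)$, a pair of independent $N(0,1)$ variables. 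This step also covers local alternatives: there $\omega_{2i}$ forms a triangular array, but the $2+\delta$ moment bound still supplies the Lyapunov condition. As in the proof of Theorem~\ref{theo-3}, the standard expansion of Owen gives $T_1=Z_1^2+o_p(1)$, so $T_1\tod W_1^2\sim\chi^2_1$ jointly with $Z_2$. The gate can be rewritten as $T_3/\sigma_2=Z_2+\sqrt{m}(\epsilon_2^*-2\epsilon_0)/\sigma_2$.

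For an interior point $\epsilon_G^*<\epsilon_0$, the drift $\sqrt{m}(2\epsilon_G^*-2\epsilon_0)/\sigma_2\to-\infty$, so $\mathbb{P}(T_3>c_{\alpha_2})\to0$. Since $\widetilde T_2\ge0$, we get $T_a=T_1+o_p(1)$, and the limiting rejection probability is $\alpha_1$.

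At the boundary $\epsilon_G^*=\epsilon_0>0$, the gate has the limit $\mathbbm{1}(W_2>z_{\alpha_2})$, so it is active with limiting probability $\alpha_2$. The key difference from SEL is that $\widetilde T_2=\ell_2(0)$ is centered at $0$, whereas the true mean is $2\epsilon_0>0$. Hence $\hepsilon_2\pto2\epsilon_0\neq0$. A standard argument then shows $\widetilde T_2\pto\infty$ (indeed at rate $m$): either $0$ falls outside the convex hull of the $\omega_{2i}$, in which case $\ell_2(0)=\infty$, or the Lagrange multiplier bound yields $\ell_2(0)\ge c\,m$ with probability tending to one. For this I would use the usual lower bound $\ell_2(\mu)\ge 2\sum\log(1+t(\omega_{2i}-\mu))$ with a fixed small $t$ of the right sign, combined with the law of large numbers. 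It follows that on the event $\{T_3>c_{\alpha_2}\}$ the test rejects with probability tending to one, and on its complement $T_a=T_1$. Therefore
\[
\mathbb{P}(T_a>\chi^2_{1,\alpha_1})=\mathbb{P}(T_1>\chi^2_{1,\alpha_1},\,Z_2\le z_{\alpha_2})+\mathbb{P}(Z_2>z_{\alpha_2})+o(1)\to\alpha_1(1-\alpha_2)+\alpha_2,
\]
where the limit uses the asymptotic independence of $W_1$ and $W_2$ and the continuity of their law at the relevant boundaries. This equals $\alpha_1+(1-\alpha_1)\alpha_2$.

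Under the local alternative $\epsilon_G^*=\epsilon_0+\tau n_G^{-1/2}\sigma_G$, with $n_G=2m$ and $\sigma_2^2=2\sigma_G^2/\mathbb{P}(G)$, the drift equals $2\sqrt{m}\,\tau\sigma_G/(\sqrt{2m}\,\sigma_2)=\tau\sqrt{\mathbb{P}(G)}$. Hence $T_3/\sigma_2\tod W_2+\tau\sqrt{\mathbb{P}(G)}$, and the gate is active with limiting probability $1-\Phi(z_{\alpha_2}-\tau\sqrt{\mathbb{P}(G)})$. Because $\epsilon_2^*\to2\epsilon_0\neq0$, we still have $\widetilde T_2\pto\infty$, and the same decomposition gives the limit $\alpha_1\Phi(z_{\alpha_2}-\tau\sqrt{\mathbb{P}(G)})+1-\Phi(z_{\alpha_2}-\tau\sqrt{\mathbb{P}(G)})=1-(1-\alpha_1)\Phi(z_{\alpha_2}-\tau\sqrt{\mathbb{P}(G)})$.

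I expect the main obstacle to be the rigorous divergence of $\widetilde T_2$ on the gate event under a triangular array. The event must be controlled jointly with $(T_1,Z_2)$, and the divergence has to be uniform enough that $\mathbb{P}(\widetilde T_2\le K,\,T_3>c_{\alpha_2})\to0$ for every fixed $K$. The cleanest route is to show $\mathbb{P}(\widetilde T_2\le K)\to0$ unconditionally, which can be done via the fixed-$t$ lower bound together with a weak law of large numbers for triangular arrays under the $2+\delta$ moment condition.
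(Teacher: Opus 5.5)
Your proof follows the same route as the paper's. The gate is written as $Z_m$ plus a deterministic drift, the zero covariance of $\omega_{1i}$ and $\omega_{2i}$ gives asymptotic independence of $T_1$ and the gate, and the boundary and local-alternative limits follow once $\widetilde T_2=\ell_2(0)\pto\infty$. The paper simply attributes that last step to ``consistency of the empirical likelihood ratio under fixed misspecification.'' Your reduction to $\mathbb{P}(\widetilde T_2\le K)\to0$ is the right way to make it rigorous. However, the device you propose for it fails as written, and your rate claim is wrong.

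The dual bound $\ell_2(0)\ge 2\sum_i\log(1+t\,\omega_{2i})$ holds only for $t$ with $1+t\,\omega_{2i}>0$ for every $i$. Condition~\eqref{C2} allows $\omega_{2i}$ to be unbounded below, as with Gaussian or $t(3)$ metrics in the simulations. In that case any fixed $t>0$ violates the constraint with probability tending to one, so no fixed-$t$ bound is available.

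The claim of growth at rate $m$ is also false there. Since $0<\lambda<1/|\min_i\omega_{2i}|$ and $\log(1+x)\le x$, you get $\ell_2(0)\le 2m\hepsilon_2/|\min_i\omega_{2i}|$. For Gaussian $M$ this is $O_p(m/\sqrt{\log m})$, so $\ell_2(0)/m\pto0$.

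The repair is to let $t$ shrink, for example $t_m=m^{-1/2}$. The uniform $2+\delta$ moment bound gives $\max_i|\omega_{2i}|=o_p(m^{1/2})$, so $t_m\max_i|\omega_{2i}|\le 1/2$ with probability tending to one. On that event, $\log(1+x)\ge x-x^2$ for $x\ge -1/2$ yields
\[
\ell_2(0)\ \ge\ 2\sqrt m\,\hepsilon_2-\frac{2}{m}\sum_{i=1}^m\omega_{2i}^2 .
\]
The right-hand side diverges because $\hepsilon_2\pto 2\epsilon_0>0$, both at the boundary and along the local sequence, while the second term is $O_p(1)$. The resulting $\sqrt m$ rate is all the theorem needs, and with this substitution the rest of your argument goes through.

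One small correction in the interior case: $T_a=T_1+o_p(1)$ holds because $T_a=T_1$ exactly on the gate-closed event. It does not follow from $\widetilde T_2\ge 0$, since $\widetilde T_2$ itself diverges when $0<\epsilon_G^*<\epsilon_0$.
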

Theorem~\ref{theo-5} shows that ASEL preserves the interior and boundary operating structure of the adaptive boundary-proxy principle while strengthening detection under violations.
By Theorem~\ref{theo-5}, the ASEL screening inversion region for $\epsilon_G$ can be constructed as
\bea
\{ \epsilon: T_a \leq \chi_{1,\alpha}^2 \}. \nn
\eea

The theoretical rejection ratios of the EL, CEL, SEL, and ASEL tests are displayed in Figure~\ref{fig-theory-reject-ratio}.
We also consider three modified procedures, denoted by CEL2, SEL2, and ASEL2.
These variants correspond, respectively, to $T_c$ with $\epsilon_0=0$, $T_s$ with the adjusted critical value $\chi^2_{1,\alpha_1}$, where $\alpha_1=(\alpha-\alpha_2)/(1-\alpha_2)$, and $T_a$ with the adjusted critical value $\chi^2_{1,\alpha_1}$, where $\alpha_1=0.027$ and $\alpha_2=0.049$.
As shown in the figure, the CEL2 procedure, which corresponds to the exact-zero fairness hypothesis, yields a relatively high rejection ratio within the tolerance region.
Although CEL and SEL2 attain the maximal size at the boundary point, SEL2 alleviates the excessive conservativeness exhibited by CEL within the null region.
ASEL achieves the highest power under the alternative.
In practice, the true disparity is unknown, and one may not wish to reject the null hypothesis as aggressively as CEL2 when the disparity deviates only slightly from zero.
In this sense, CEL is the most conservative procedure, whereas SEL and ASEL are less conservative alternatives.

\begin{figure}[H]
\centering
\includegraphics[width=0.6\linewidth]{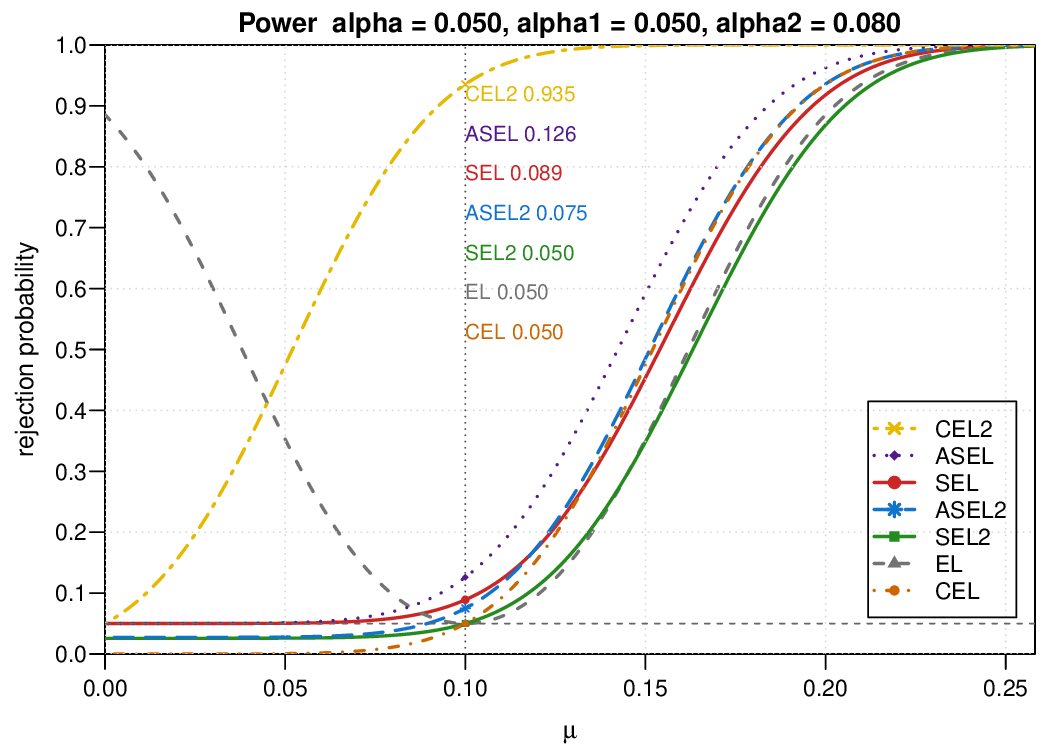}
\caption{The theoretical limiting rejection probabilities of the EL, CEL, SEL, and ASEL statistics, together with the CEL2, SEL2, and ASEL2 variants. The threshold is set to $0.1$, with $\alpha=\alpha_1=0.05$ and $\alpha_2=0.08$.}
\label{fig-theory-reject-ratio}
\end{figure}

\section{Simulations}\label{sec-sim}

% This section evaluates the finite-sample behavior of CEL, SEL, and ASEL in single-group and simultaneous subgroup auditing. We first examine rejection frequencies and power under normal, heavy-tailed, and skewed distributions, then assess false flagging rates and power when the procedures are combined with BH adjustment. These experiments illustrate the trade-offs between error control and detection sensitivity across the two auditing objectives.
This section evaluates the finite-sample behavior of CEL, SEL, and ASEL in single-group and simultaneous subgroup auditing. Section~\ref{sec-Single} examines rejection frequencies and power under normal, heavy-tailed, and skewed distributions. Section~\ref{sec-multiplicity} assesses false flagging rates and power when the procedures are combined with BH adjustment. These experiments illustrate the trade-offs between error control and detection sensitivity across the two auditing objectives. The code related to this article is available at \url{https://github.com/Tang-Jay/TFA}.

\subsection{Single group auditing}\label{sec-Single}

We conduct simulation studies to evaluate the finite-sample power of the one-sided test \eqref{test-1}, 
comparing the CEL, SEL, and ASEL tests with the EL statistic $\ell(\epsilon_0)$ and the one-sided Student $t$ test.
Standard EL is included as an exact-equality benchmark rather than as a tolerance-certification procedure.
Without loss of generality, we set $\theta_P=0$ in the simulations.
We set $\epsilon_0=0.25$, $\alpha=0.05$, and the data-dependent threshold $c_{\alpha_2}=\Phi^{-1}(1-\alpha_2)\,{\hat{\sigma}}_{G_2}$, where $\alpha_2=0.08$ and $\hat{\sigma}_2$ is the sample standard deviation of $\{\omega_{2i}\}_{i=1}^m$.
SEL rejects when $T_s>\chi_{1,\alpha}^2$ and ASEL rejects when $T_a>\chi_{1,\alpha}^2$; EL rejects when $\ell(\epsilon_0)>\chi_{1,\alpha}^2$; CEL rejects when $T_c>\chi_{1,2\alpha}^2$; and the $t$ test rejects when the one-sided $p$-value is below $\alpha$. 
% All simulation results are based on 2,000 replications. The corresponding results are presented in Figures~\ref{fig-Power-N}--\ref{fig-Power-E}.
The single-group simulation results are based on 2,000 replications and are presented in Figures~\ref{fig-Power-N}--\ref{fig-Power-E}.
In each panel, the curves correspond to CEL, ASEL, SEL,  EL, and the one-sided $t$ test. 

We consider three data-generating processes, presented in the order normal, heavy-tailed, and skewed.
Under the symmetric normal benchmark, independent observations are generated as $M_i\sim N(\epsilon,1)$ with $\epsilon\in\{0,0.01,\ldots,0.5\}$.
For heavy tails, we take $M_i=\epsilon+Z_i$ with $Z_i\sim t(3)$ on the same grid of $\epsilon_G$.
For skewness, we consider 
a right-skewed exponential model $M_i\sim \mathrm{Exp}(\frac{1}{\epsilon})$. Since these families require $\epsilon>0$, we use $\epsilon\in\{0.05,0.06,\ldots,0.45\}$. 
In both designs, $\mathbb{E}(M_i)=\epsilon$.

% Figure 1: normal
\begin{figure}[H]
\centering
\begin{subfigure}[b]{0.25\textwidth}
\centering
\includegraphics[width=\linewidth]{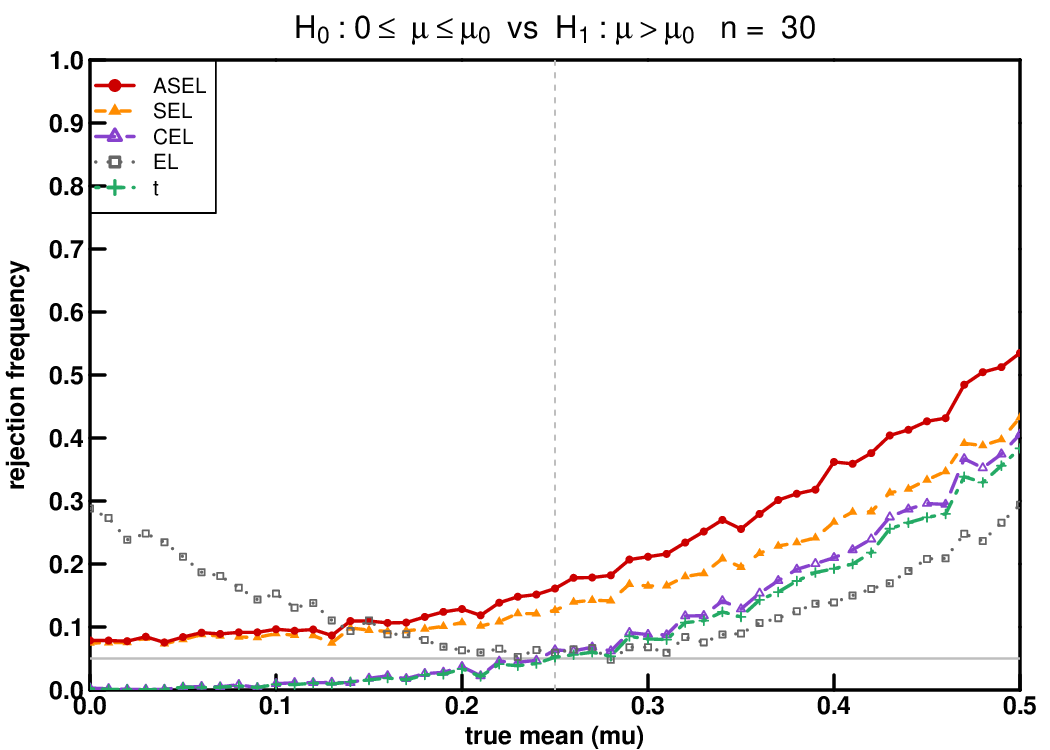}
\subcaption{}
\label{fig-Power-N-a}
\end{subfigure}\hfill
\begin{subfigure}[b]{0.25\textwidth}
\centering
\includegraphics[width=\linewidth]{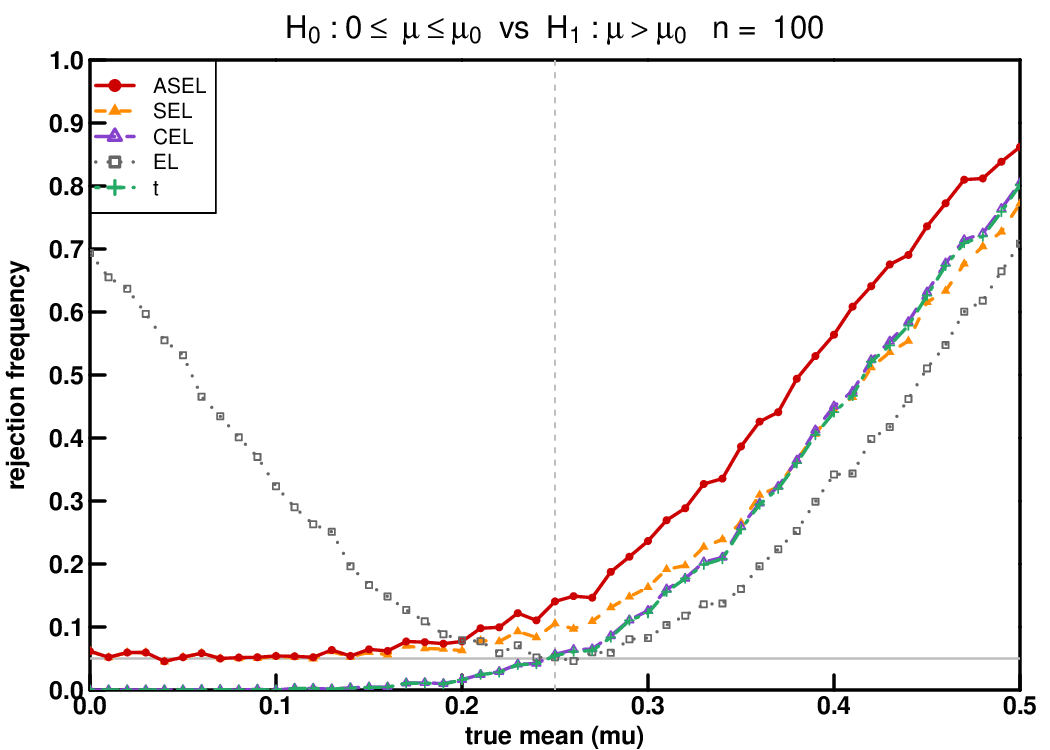}
\subcaption{}
\label{fig-Power-N-b}
\end{subfigure}\hfill
\begin{subfigure}[b]{0.25\textwidth}
\centering
\includegraphics[width=\linewidth]{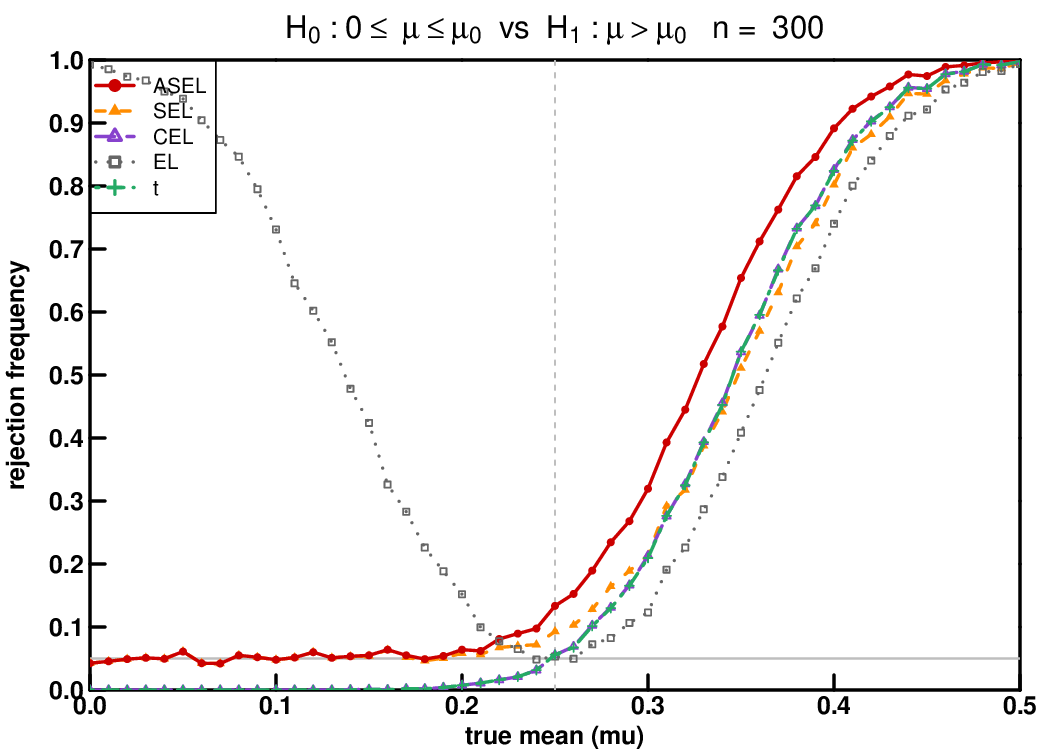}
\subcaption{}
\label{fig-Power-N-c}
\end{subfigure}\hfill
\begin{subfigure}[b]{0.25\textwidth}
\centering
\includegraphics[width=\linewidth]{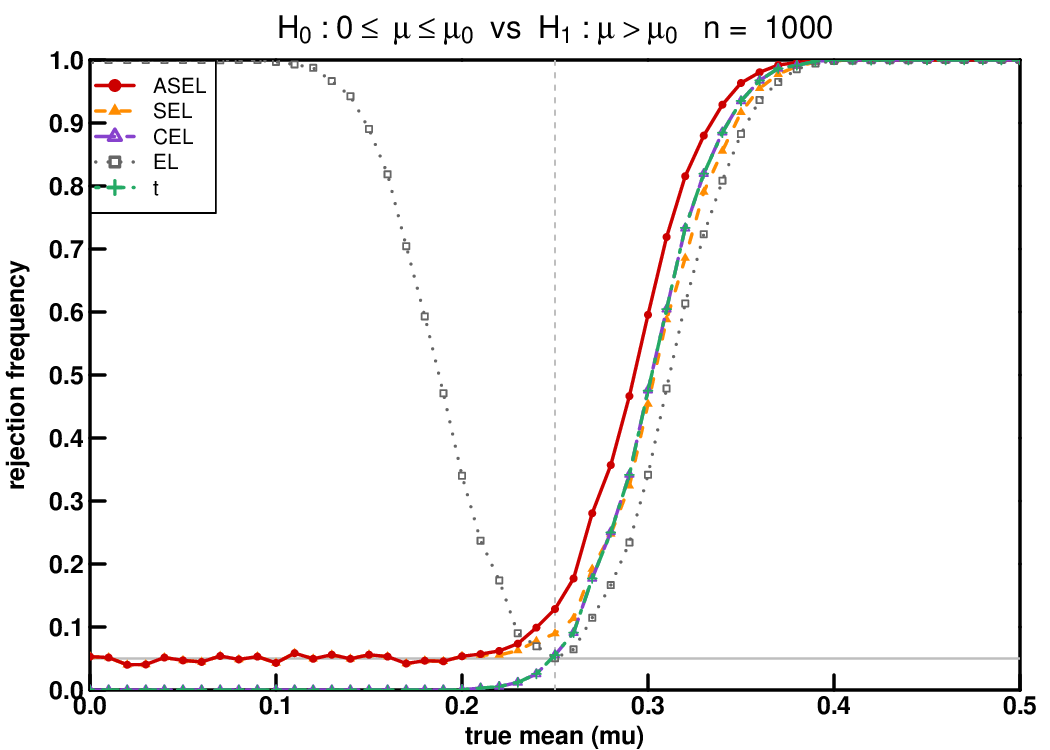}
\subcaption{}
\label{fig-Power-N-d}
\end{subfigure}
\caption{Rejection frequency plots under the symmetric normal model: (a)--(d) correspond to $n=30,100,300,1000$ with $N(\epsilon,1)$ data.
The horizontal gray line is the nominal level $\alpha=0.05$ and the vertical dashed line marks $\epsilon=0.25$.}
\label{fig-Power-N}
\end{figure}

Under the tolerance null $0\leq\epsilon_G^*\leq\epsilon_0$, rejection frequencies should be interpreted in light of the procedures' distinct operating criteria.
For CEL, the relevant criterion is protection against false violation declarations across the tolerance region.
For SEL and ASEL, the screening criterion allows nondegenerate rejection probabilities at fixed interior null points and an explicitly characterized increase at the tolerance boundary,
without targeting uniform level-$\alpha$ control over the composite null.
Subfigures~\eqref{fig-Power-N-a}--\eqref{fig-Power-N-d} illustrate these differences.
As $\epsilon_G^*$ moves farther into the interior of $H_0$, EL rejection frequencies approach one, whereas those of CEL and the one-sided $t$ test approach zero.
For SEL and ASEL, rejection frequencies at fixed interior null points approach the nominal level as the sample size increases.
At the tolerance boundary, their rejection frequencies exceed the nominal level, consistent with the asymptotic limits in Theorems~\ref{theo-4}--\ref{theo-5}.
These boundary exceedances reflect the screening trade-off between detection sensitivity and false alerts.

% Figure 2: heavy-tailed t(3)
\begin{figure}[!htbp]
\centering
\begin{subfigure}[b]{0.25\textwidth}
\centering
\includegraphics[width=\linewidth]{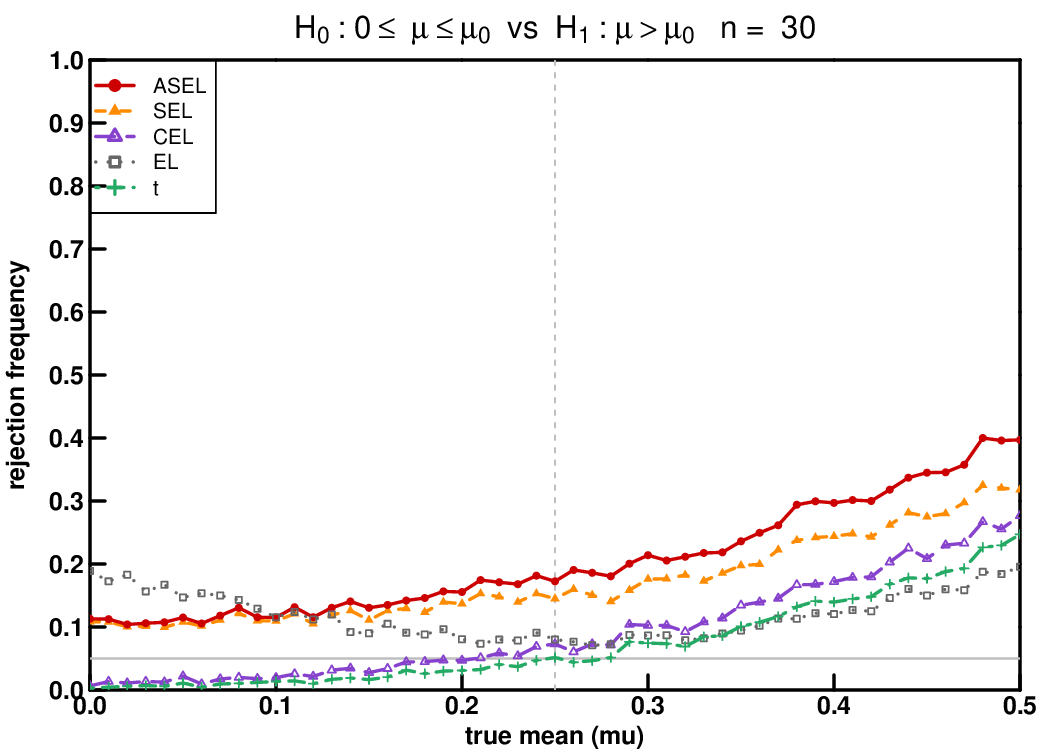}
\subcaption{}
\label{fig-Power-T-a}
\end{subfigure}\hfill
\begin{subfigure}[b]{0.25\textwidth}
\centering
\includegraphics[width=\linewidth]{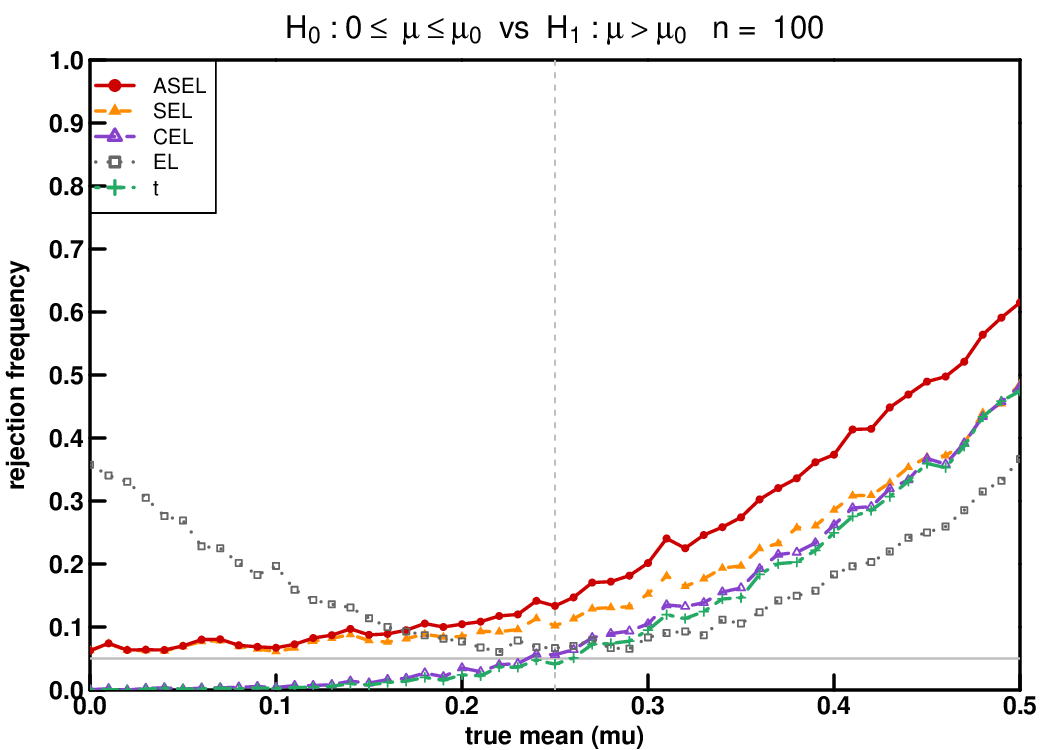}
\subcaption{}
\label{fig-Power-T-b}
\end{subfigure}\hfill
\begin{subfigure}[b]{0.25\textwidth}
\centering
\includegraphics[width=\linewidth]{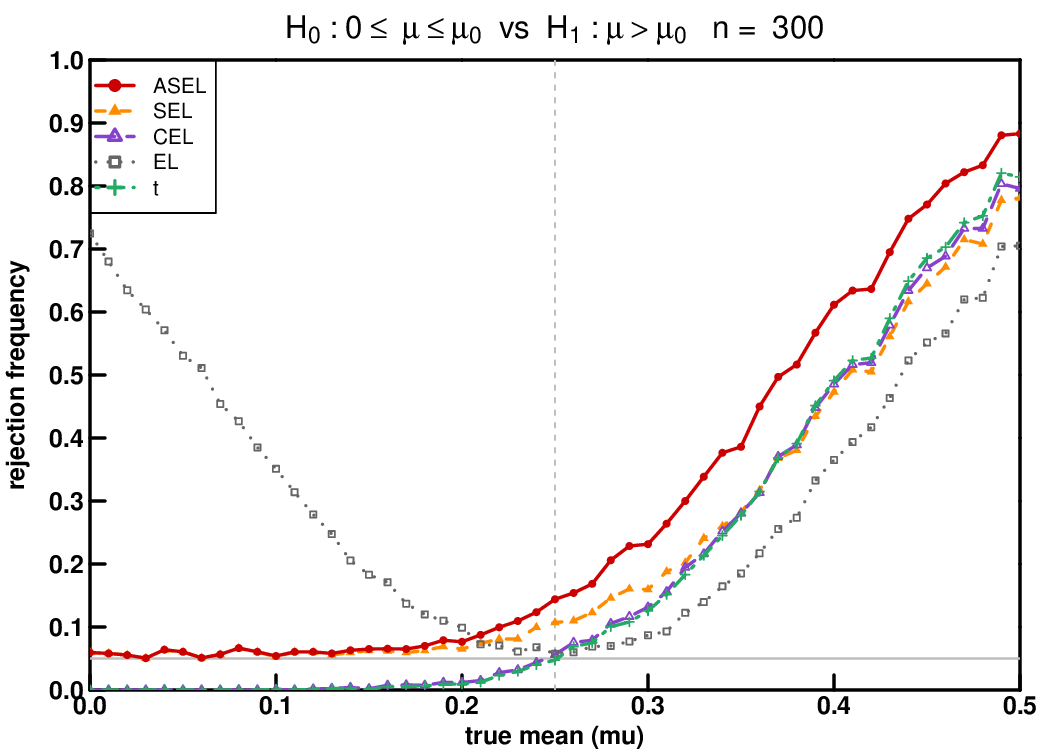}
\subcaption{}
\label{fig-Power-T-c}
\end{subfigure}\hfill
\begin{subfigure}[b]{0.25\textwidth}
\centering
\includegraphics[width=\linewidth]{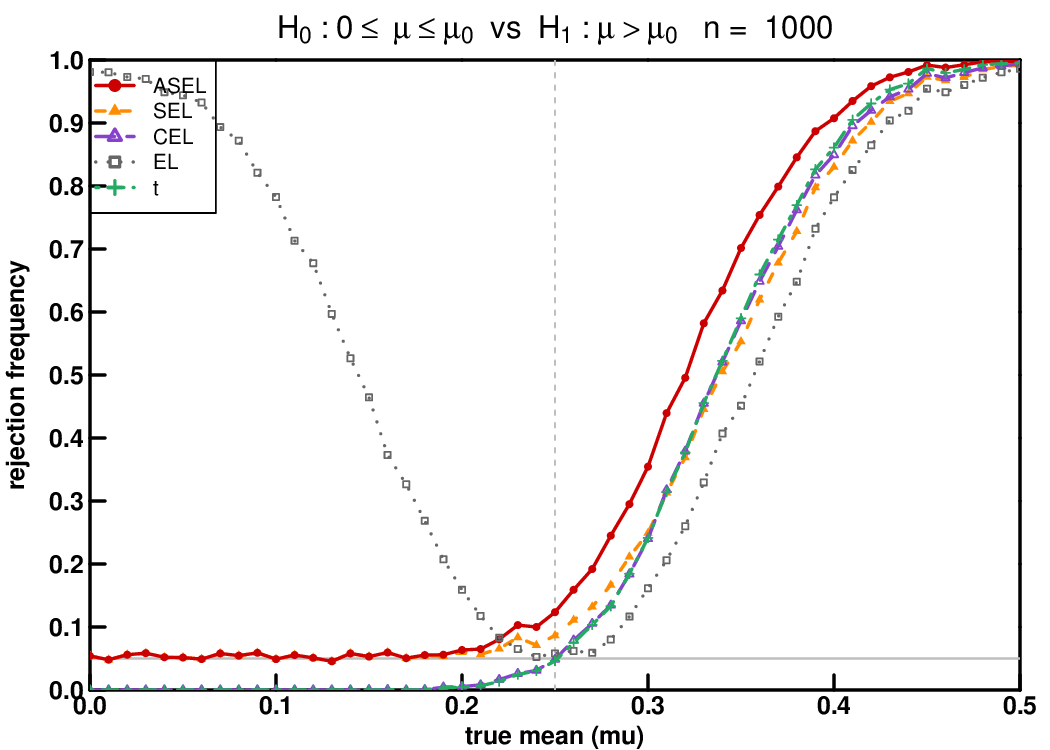}
\subcaption{}
\label{fig-Power-T-d}
\end{subfigure}
\caption{Rejection frequency plots under the heavy-tailed Student~$t$ model: (a)--(d) correspond to $n=30,100,300,1000$ with $t(3)+\epsilon$ data.
The horizontal gray line is the nominal level $\alpha=0.05$ and the vertical dashed line marks $\epsilon=0.25$.}
\label{fig-Power-T}
\end{figure}

Figure~\ref{fig-Power-T} reports rejection frequency curves under the heavy-tailed design.
Because the location-shifted $t(3)$ distribution remains symmetric about $\epsilon_G$, the qualitative patterns resemble those under normality.
Subfigures~\eqref{fig-Power-T-a}--\eqref{fig-Power-T-d} show that EL has the lowest power overall.
Subfigures~\eqref{fig-Power-T-a} and~\eqref{fig-Power-T-b} indicate that CEL is more powerful than the one-sided $t$ test for small sample sizes.
Subfigures~\eqref{fig-Power-T-c} and~\eqref{fig-Power-T-d} show that, for larger sample sizes, SEL and ASEL attain higher power under the alternative while maintaining rejection rates close to the nominal level at fixed interior null points.

Figure~\ref{fig-Power-E} reports rejection frequency curves under the skewed design.
When $\epsilon_G^*>\epsilon_0$, Subfigures~\eqref{fig-Power-E-a} and~\eqref{fig-Power-E-b} show that, for small and moderate sample sizes, the one-sided $t$ test is sensitive to skewness and exhibits reduced power, whereas CEL is more powerful than the one-sided $t$ test and is more robust to skewness.
Subfigures~\eqref{fig-Power-E-a}--\eqref{fig-Power-E-d} show that SEL improves power near the boundary $\epsilon_0$, and ASEL attains the highest power among the five procedures.
When $0.05\le\epsilon_G^*\le\epsilon_0$, as $\epsilon_G^*$ recedes further from the boundary $\epsilon_0$ into the interior of $H_0$, CEL and the one-sided $t$ test become increasingly conservative with rejection rates approaching zero, whereas SEL and ASEL maintain rejection rates close to the nominal level as $n$ increases.

% Figure 4: right-skewed exponential
\begin{figure}[!htbp]
\centering
\begin{subfigure}[b]{0.25\textwidth}
\centering
\includegraphics[width=\linewidth]{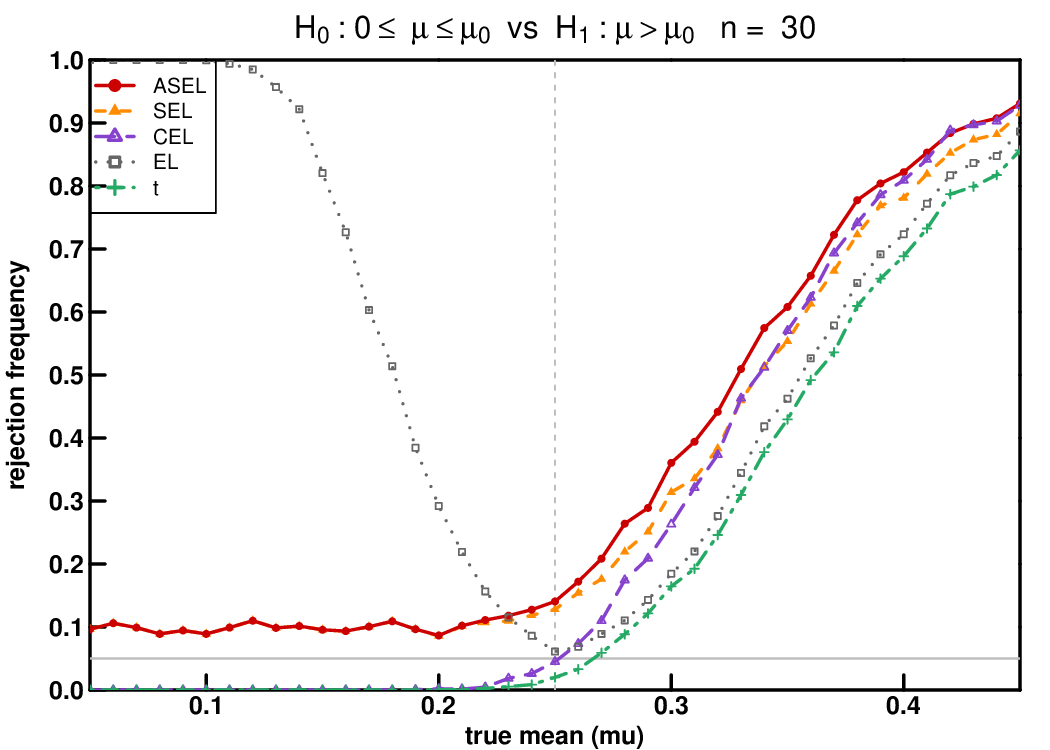}
\subcaption{}
\label{fig-Power-E-a}
\end{subfigure}\hfill
\begin{subfigure}[b]{0.25\textwidth}
\centering
\includegraphics[width=\linewidth]{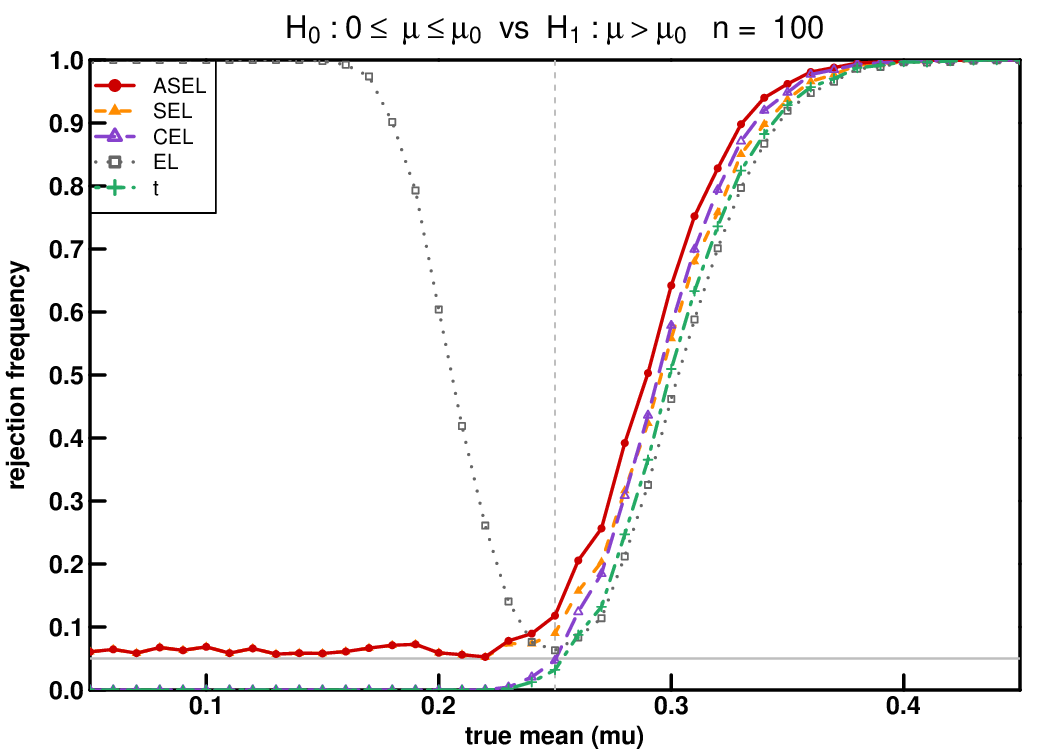}
\subcaption{}
\label{fig-Power-E-b}
\end{subfigure}\hfill
\begin{subfigure}[b]{0.25\textwidth}
\centering
\includegraphics[width=\linewidth]{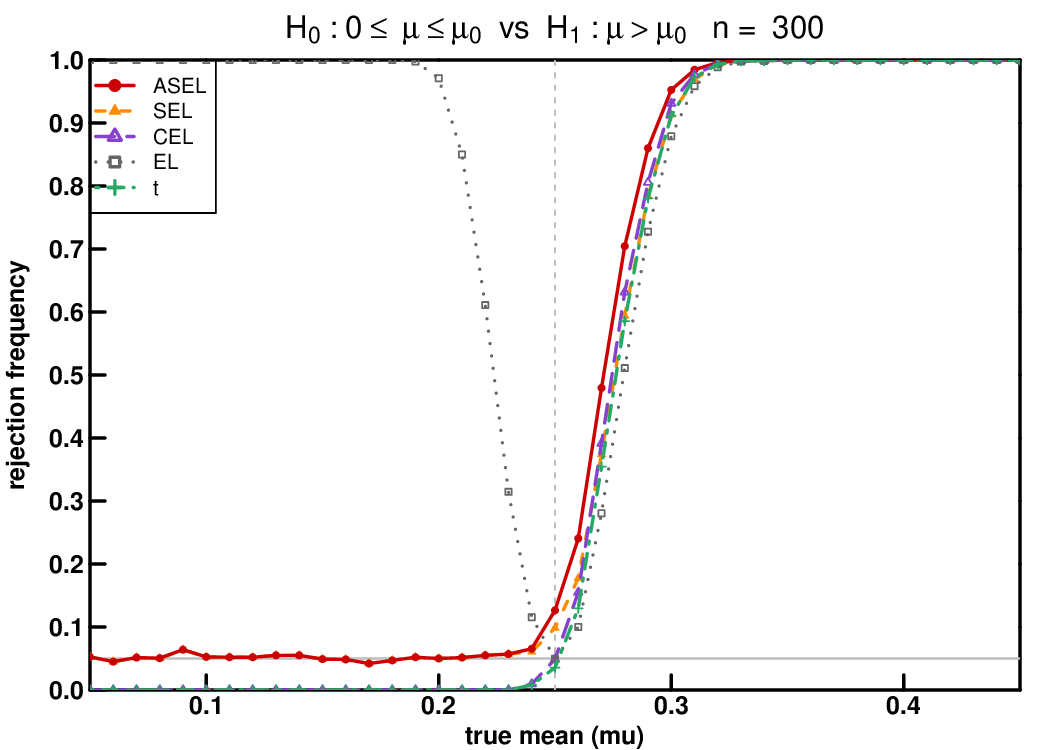}
\subcaption{}
\label{fig-Power-E-c}
\end{subfigure}\hfill
\begin{subfigure}[b]{0.25\textwidth}
\centering
\includegraphics[width=\linewidth]{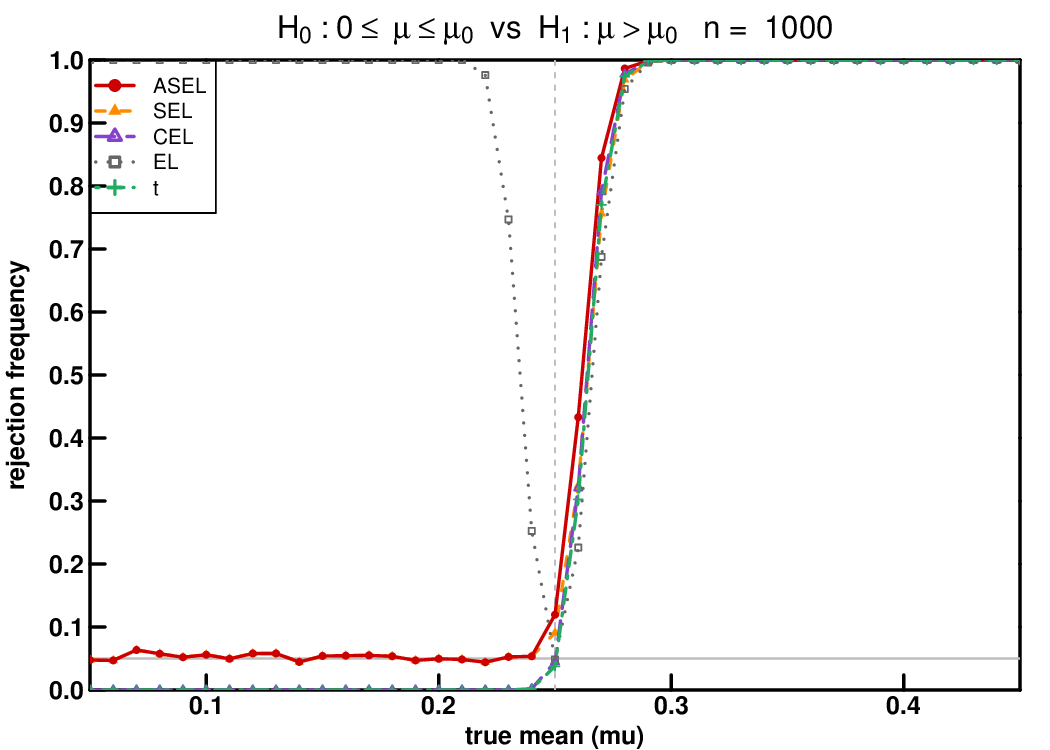}
\subcaption{}
\label{fig-Power-E-d}
\end{subfigure}
\caption{Rejection frequency plots under the right-skewed exponential design $\mathrm{Exp}(\frac{1}{\epsilon})$: (a)--(d) correspond to $n=30,100,300,1000$.
The horizontal gray line is the nominal level $\alpha=0.05$ and the vertical dashed line marks $\epsilon=0.25$.}
\label{fig-Power-E}
\end{figure}

% Across these designs, CEL improves on the one-sided $t$ test in small samples and exhibits greater robustness under skewed distributions.
Across these designs, CEL controls false violation declarations at the prescribed level while achieving higher power than the one-sided $t$ test in small samples.
SEL maintains rejection probabilities close to the nominal level at fixed interior null points as $n$ increases, while ASEL delivers the strongest power gains among the proposed procedures.
These results illustrate the distinct operating characteristics of violation certification and sensitivity-oriented screening.
CEL provides conservative protection against false violation declarations, whereas SEL and ASEL maintain nondegenerate interior rejection probabilities and exhibit greater detection
sensitivity in the simulation designs considered, with increased rejection probabilities at the tolerance boundary.
These operating characteristics make SEL and ASEL better aligned with sensitivity-oriented screening settings in which missed violations are of primary concern.
CEL is better aligned with violation-certification settings in which false declarations require stronger protection.

\subsection{Simultaneous subgroup auditing}\label{sec-multiplicity}

% Experimental purpose
We next compare false discoveries and detection power when CEL, SEL, ASEL, and the one-sided $t$ test are each combined with BH for simultaneous auditing.
The design uses disjoint groups and a known reference value, as in Theorem~\ref{theoFFR} for CEL.
Including SEL and ASEL assesses how their screening behavior translates into the joint discovery process under the same multiplicity adjustment.

% Data-generating mechanisms
In each replication, we draw $n=1000$ independent uniform random variables $U_i\sim\mathrm{Unif}(0,1)$.
We partition $[0,1)$ into $K=26$ equal-length and disjoint intervals and assign observation $i$ to group $G_j$ with $j=1,\ldots,K$.
Each configuration is evaluated over $B=100$ replications, with all methods applied to the same observations within each replication.
We fix the reference value at $\theta_P=0$ and the tolerance at $\epsilon_0=0.25$.
Conditional on group membership, observations are generated independently from either $N(\mu_j,1)$ or $\mathrm{Exp}(1/\mu_j)$, where $\mu_j=\epsilon_j$ is the mean in group $G_j$.
The conditional variances are $1$ and $\mu_j^2$, respectively, so comparisons across these designs reflect differences in scale as well as skewness.
For each group, we test $H_{0j}:0\leq\epsilon_j\leq\epsilon_0$ against $H_{1j}:\epsilon_j>\epsilon_0$ and set
\[
\epsilon_j=
\begin{cases}
\epsilon \, (\le \epsilon_0), & \pi_1=0,\\
\epsilon_0, & \pi_1>0,\ j\leq K(1-\pi_1),\\
\epsilon_0+\delta, & \pi_1>0,\ j>K(1-\pi_1).
\end{cases}
\]
Here, $\pi_1$ is the proportion of violating groups, and $\delta>0$ is the excess above the tolerance for violating groups.
Under the global null ($\pi_1=0$), we vary $\epsilon$ over $\{0,0.05,\ldots,0.25\}$ for the normal design and $\{0.05,0.10,\ldots,0.25\}$ for the exponential design.
Under the alternative configurations, we consider $\pi_1\in\{0.5,1\}$ with $\delta\in\{0.05,0.10,\ldots,0.60\}$.

% Evaluation criteria and Monte Carlo precision
We set the BH target level to $q=0.05$ and the SEL/ASEL gate parameter to $\alpha_2=0.08$.
The BH inputs are the one-sided boundary-calibrated $p$-values for CEL, the one-sided $p$-values for the $t$ test, and the $\chi_1^2$ upper-tail scores for SEL and ASEL without additional boundary recalibration.
We apply BH to the unadjusted SEL and ASEL screening scores as an exploratory benchmark, without claiming FFR control.
Let $R_b$ and $V_b$ denote the total and false discoveries in replication $b$, and let $m_1=K\pi_1$ denote the number of false null hypotheses.
We report
\[
\widehat{\mathrm{FFR}}=\frac{1}{B}\sum_{b=1}^{B}\frac{V_b}{\max(R_b,1)},
\qquad
\widehat{\mathrm{Power}}=\frac{1}{B}\sum_{b=1}^{B}\frac{R_b-V_b}{m_1},\quad m_1>0.
\]
A lower FFR indicates a smaller proportion of false flags among flagged groups on average, whereas higher power indicates a smaller proportion of missed violations among truly violating groups on average.

% Case one
Under the global null $\pi_1=0$, every rejection is a false flagging, so $V_b=R_b$ and the FFR equals the probability of falsely flagging at least one group in an audit and Power is undefined.
Table~\ref{tab:fdr-global-null} shows that the empirical FFR of CEL-BH approaches the nominal level as the true disparity approaches the tolerance boundary under both designs. 
In contrast, the empirical FFR of $t$-BH remains below the nominal level under the normal design and, notably, is zero under the exponential design, highlighting its pronounced conservativeness in the latter setting.
At the group sample sizes considered, both SEL-BH and ASEL-BH exhibit substantial FFR inflation. 

\begin{table}[!htbp]
\centering
\caption{Empirical FFR under the global null $\pi_1=0$. Bold entries correspond to the tolerance boundary.}
\label{tab:fdr-global-null}
\footnotesize
\setlength{\tabcolsep}{3pt}
\begin{tabular}{@{}crrrr@{\hspace{1.5em}}crrrr@{}}
\toprule
\multicolumn{5}{c}{$N(\epsilon,1)$} &
\multicolumn{5}{c}{$\mathrm{Exp}(1/\epsilon)$} \\
\cmidrule(r){1-5}\cmidrule(l){6-10}
$\epsilon$ & CEL-BH & SEL-BH & ASEL-BH & $t$-BH &
$\epsilon$ & CEL-BH & SEL-BH & ASEL-BH & $t$-BH \\
\midrule
0.00 & 0.00 & 0.23 & 0.30 & 0.00 & 0.00 & - & - & -  & -\\
0.05 & 0.00 & 0.18 & 0.26 & 0.00 & 0.05 & 0.00 & 0.32 & 0.32 & 0.00 \\
0.10 & 0.00 & 0.23 & 0.38 & 0.00 & 0.10 & 0.00 & 0.25 & 0.25 & 0.00 \\
0.15 & 0.01 & 0.28 & 0.56 & 0.01 & 0.15 & 0.00 & 0.26 & 0.26 & 0.00 \\
0.20 & 0.04 & 0.18 & 0.70 & 0.03 & 0.20 & 0.00 & 0.40 & 0.42 & 0.00 \\
\textbf{0.25} & \textbf{0.05} & \textbf{0.27} & \textbf{0.90} & \textbf{0.03} &
\textbf{0.25} & \textbf{0.05} & \textbf{0.37} & \textbf{0.85} & \textbf{0.00} \\
\bottomrule
\end{tabular}
\end{table}

\begin{table}[!htbp]
\centering
\caption{Empirical power for selected configurations with $\pi_1=1$.}
\label{tab:power-all-alternative}
\footnotesize
\setlength{\tabcolsep}{3pt}
\begin{tabular}{@{}crrrr@{\hspace{1.5em}}crrrr@{}}
\toprule
\multicolumn{5}{c}{$N(\epsilon,1)$} &
\multicolumn{5}{c}{$\mathrm{Exp}(1/\epsilon)$} \\
\cmidrule(r){1-5}\cmidrule(l){6-10}
$\delta$ & CEL-BH & SEL-BH & ASEL-BH & $t$-BH &
$\delta$ & CEL-BH & SEL-BH & ASEL-BH & $t$-BH \\
\midrule
0.05 & 0.0092 & 0.0319 & 0.1562 & 0.0054 &
0.05 & 0.1042 & 0.1408 & 0.3692 & 0.0058 \\
0.20 & 0.1165 & 0.1785 & 0.4638 & 0.0908 &
0.10 & 0.5988 & 0.5635 & 0.6931 & 0.2827 \\
0.25 & 0.2854 & 0.3238 & 0.5858 & 0.2485 &
0.15 & 0.8735 & 0.8415 & 0.8896 & 0.7496 \\
0.35 & 0.6235 & 0.6123 & 0.7915 & 0.5912 &
0.20 & 0.9619 & 0.9446 & 0.9596 & 0.9127 \\
0.50 & 0.9112 & 0.8881 & 0.9496 & 0.8973 &
0.25 & 0.9915 & 0.9858 & 0.9896 & 0.9796 \\
0.60 & 0.9750 & 0.9604 & 0.9838 & 0.9719 &
0.60 & 1.0000 & 1.0000 & 1.0000 & 1.0000 \\
\bottomrule
\end{tabular}
\end{table}

% Case two
With $\pi_1=0.5$, Figure~\ref{fig-FFR} shows that the empirical FFR of SEL-BH and ASEL-BH generally decreases as $\delta$ increases, whereas that of CEL-BH and $t$-BH remains at or below the nominal level in most configurations.
Under the exponential design, the power advantage of CEL-BH over $t$-BH is more pronounced. For example, at $\delta=0.10$, both procedures have empirical FFR below $0.05$, with values of $0.0305$ for CEL-BH and $0.0081$ for $t$-BH. However, CEL-BH achieves power of $0.4692$, compared with $0.1323$ for $t$-BH, a gain of $33.69$ percentage points. This result indicates that the conservativeness of $t$-BH comes at a substantial cost in detection power.
The power advantage of ASEL-BH is more apparent at weaker signals. For example, under the normal design with $\delta=0.25$, ASEL-BH achieves power of $0.5769$, compared with $0.1854$ for CEL-BH, a gain of $39.15$ percentage points. Its empirical FFR, however, is $0.1553$, compared with $0.0440$ for CEL-BH, an increase of $11.13$ percentage points. Thus, the substantial power gain of ASEL-BH is accompanied by greater false flagging risk, illustrating the trade-off between screening sensitivity and error control.

\begin{figure}[H]
\centering
\includegraphics[width=0.98\linewidth]{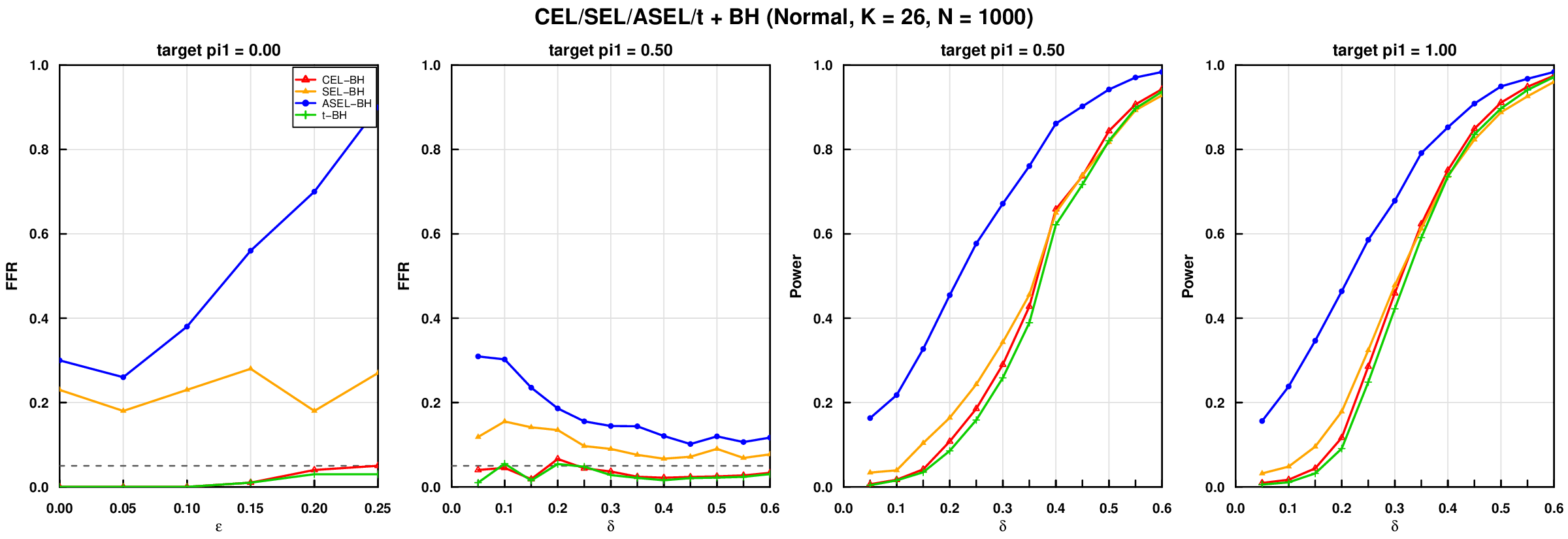}\\
\includegraphics[width=0.98\linewidth]{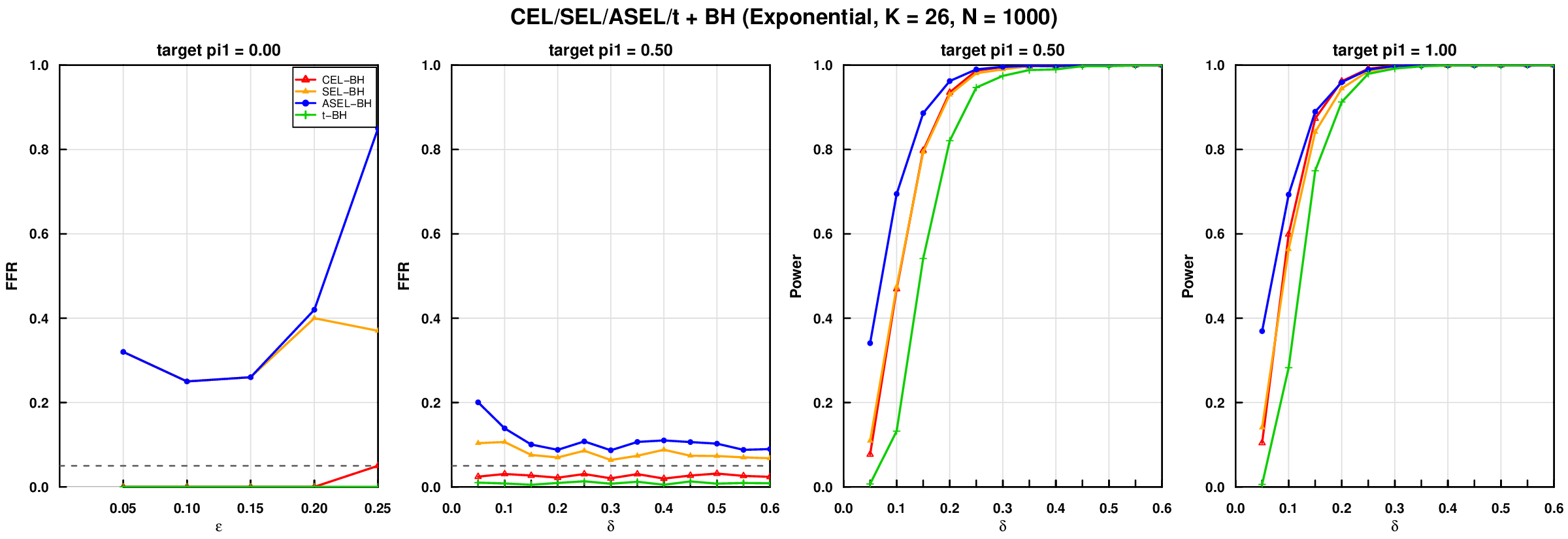}
\caption{Empirical FFR and power under the normal (top) and exponential (bottom) designs. The dashed lines mark the nominal FFR level $0.05$.}
\label{fig-FFR}
\end{figure}

% Case three
When $\pi_1=1$, the FFR is zero by definition because $V_b=0$, so these panels assess detection power alone.
First, Figure~\ref{fig-FFR} shows that  ASEL-BH achieves the highest empirical power across all evaluated effect sizes under the normal design, with its advantage most pronounced at weak to moderate signals. As shown in Table~\ref{tab:power-all-alternative}, its largest power gain over CEL-BH under this design occurs at $\delta=0.20$, reaching $34.73$ percentage points. Under the exponential design, ASEL-BH also exhibits a substantial advantage at weak signals, with a power gain of $26.50$ percentage points over CEL-BH at $\delta=0.05$.
Second, CEL-BH achieves empirical power at least as high as that of $t$-BH across all evaluated configurations. The advantage is particularly pronounced under the exponential design: at $\delta=0.10$, CEL-BH attains power of $0.5988$, compared with $0.2827$ for $t$-BH, a gain of $31.62$ percentage points.
Finally, the power differences among the methods narrow as the signal strengthens, with all four procedures approaching complete detection.

In summary, these finite-sample results illustrate the trade-off between false flags and missed violations. CEL-BH provides stronger protection against false discoveries than SEL-BH and ASEL-BH while achieving higher power than the conservative $t$-BH benchmark, particularly under the exponential design. ASEL-BH offers substantial detection gains at weak to moderate signals, but directly applying BH to the unadjusted SEL and ASEL screening scores yields inflated FFR when true null hypotheses are present.

\section{Empirical application}\label{sec-emp}

In this section, we present an empirical evaluation of the proposed tolerance-based fairness auditing framework using a real-world dataset.  
Our analysis is based on the well-known COMPAS recidivism prediction instrument (RPI), which assigns each defendant a risk score ranging from 1 to 10 that represents the estimated likelihood of reoffending. 
We use the publicly available COMPAS dataset compiled and released by ProPublica, available at \url{https://raw.githubusercontent.com/propublica/compas-analysis/master/compas-scores-two-years.csv}. The dataset has been widely used to study algorithmic bias in criminal justice.
Prior studies have documented systematic disparities, showing that African-American defendants are more likely to receive higher risk scores than their Caucasian counterparts \citep{AngwinEtAl2016, CherianCandes2024}. 
In response to this finding, the creators of COMPAS, Northpointe Inc., argued that the algorithm is fair when evaluated by the \textit{predictive parity} criterion \citep{DieterichEtAl2016,FloresEtAl2016}. 
While they provide statistical evidence for the absence of significant racial bias by this measure, 
our approach builds on this same benchmark to construct explicit bounds on the true performance disparity. 

The predictive parity criterion is satisfied for a single group, $G$, when the positive predictive value (PPV) of $f$ for $G$ matches the PPV for the complement of $G$, i.e.,
\(
 \mathbb{P}(Y = 1 \mid f(X) = 1, X \in G) =  \mathbb{P}(Y = 1 \mid f(X) = 1, X \in G^c).
\)
We define the PPV disparity for group $G$ as
\(
\epsilon(G) := \mathbb{P}(Y = 1 \mid f(X) = 1, X \in G) - \mathbb{P}(Y = 1 \mid f(X) = 1, X \in G^c),
\)
which represents the difference in PPV between the two groups.
Intuitively, the PPV measures how informative a positive prediction is. For example, if COMPAS classifies a defendant as high-risk ($f(X) = 1$), the PPV corresponds to the probability that they actually recidivate ($Y = 1$). 
We employ the proposed auditing procedure to quantify such disparity and derive tolerance-based bounds that provide rigorous statistical guarantees for fairness auditing.

Following prior work, we use the COMPAS dataset and binarize the risk scores by defining $f(X) = 1$ if the RPI score is greater than or equal to 5, and $f(X) = 0$ otherwise. We restrict our analysis to the subset ($n = 2174$) of the holdout data with $f(X) = 1$ and $X_{\mathrm{race}} \in \{\text{African-American}, \text{Caucasian}\}$. 
We instantiate our audit by taking the loss to be $M(f(X),Y)=\mathbbm{1}_{\{f(X)=Y\}}=Y$, the audited group $G$ to consist of African-American defendants with $f(X)=1$, while the reference quantity $\theta_P$ denotes the PPV among Caucasian defendants:
\(
M(f(X), Y) = Y
\), 
\(
G = \{\{(X, Y) \mid X_{\mathrm{race}} = \text{African-American}, f(X) = 1\}\},
\)
and
\(
\theta_P = \mathbb{E}[Y = 1 \mid X_{\mathrm{race}} = \text{Caucasian}].
\)
The parameter $\theta_P$ is estimated by its empirical conditional expectation, denoted as
\(
\hat{\theta} = \hat{\mathbb{E}}_n [Y = 1 \mid X_{\text{race}} = \text{Caucasian}].
\)
In this application, all empirical-likelihood procedures use the plug-in estimator $\hat{\theta}$ in place of $\theta_P$. 

First, to evaluate the claim in \citet{DieterichEtAl2016} that African-American defendants incur no PPV deficit relative to Caucasian defendants, we construct empirical-likelihood inversion regions for $\epsilon(G)$ via testing
\[
H_0(G): 0 \le \epsilon(G) \le  \epsilon_0 \quad \text{versus} \quad H_1(G): \epsilon(G) > \epsilon_0. 
\]
Applying the proposed tolerance-based auditing framework, we construct 90\% and 95\% empirical-likelihood inversion regions for $\epsilon(G)$ in the overall African-American group. Table~\ref{tab:aa-all-lb} reports the corresponding lower bounds under EL, CEL, SEL, and ASEL at these nominal levels.

\begin{table}[!htbp]
\centering
\caption{Lower bounds for the African-American group at nominal levels 90\% and 95\%.}
\label{tab:aa-all-lb}
\begin{tabular}{lrrrr}
\toprule
Nominal level & EL & CEL & SEL & ASEL \\
\midrule
90\% & 0.0212 & 0.0251 & 0.0209 & 0.0235 \\
95\% & 0.0180 & 0.0212 & 0.0175 & 0.0235 \\
\bottomrule
\end{tabular}
\end{table}
\noindent

Table~\ref{tab:aa-all-lb} reports one-sided lower bounds for the PPV disparity $\epsilon(G)$ obtained by EL, CEL, SEL, and ASEL.
At the 90\% nominal level, CEL yields a formal lower bound for violation certification of 2.51\%. 
SEL and ASEL yield screening lower bounds of 2.09\% and 2.35\%, respectively.
These bounds lead to distinct auditing conclusions depending on the prespecified tolerance.
For $\epsilon_0=0.01$, at both the 90\% and 95\% nominal levels, CEL formally certifies a tolerance violation, while SEL and ASEL also issue screening alerts.
For $\epsilon_0=0.03$, at either nominal level, CEL does not certify a tolerance violation
 and neither SEL nor ASEL issues a screening alert, since their lower bounds also fall below the
3-percentage-point threshold.
These results also contextualize earlier findings on COMPAS.
The positive CEL lower bounds at both nominal levels support the narrower claim of \citet{DieterichEtAl2016} that the PPV among African-American defendants is no lower than that among Caucasian defendants.
Relative to the bootstrap-based audit of \citet{CherianCandes2024}, which reports a certified 90\% lower bound of 1.87\%, CEL yields a larger lower bound of 2.51\% at the same nominal level, providing a stronger numerical lower-bound certificate for this dataset. 
Thus, the COMPAS application illustrates how the proposed framework translates the same observed predictive disparity into distinct auditing conclusions under different prespecified tolerances.

% Figure 5: African-American disparity, 2 rows $\times$ 4 columns (90% / 95%)
\begin{figure}[!htbp]
\centering
\begin{subfigure}[b]{0.24\textwidth}
\centering
\includegraphics[width=\linewidth]{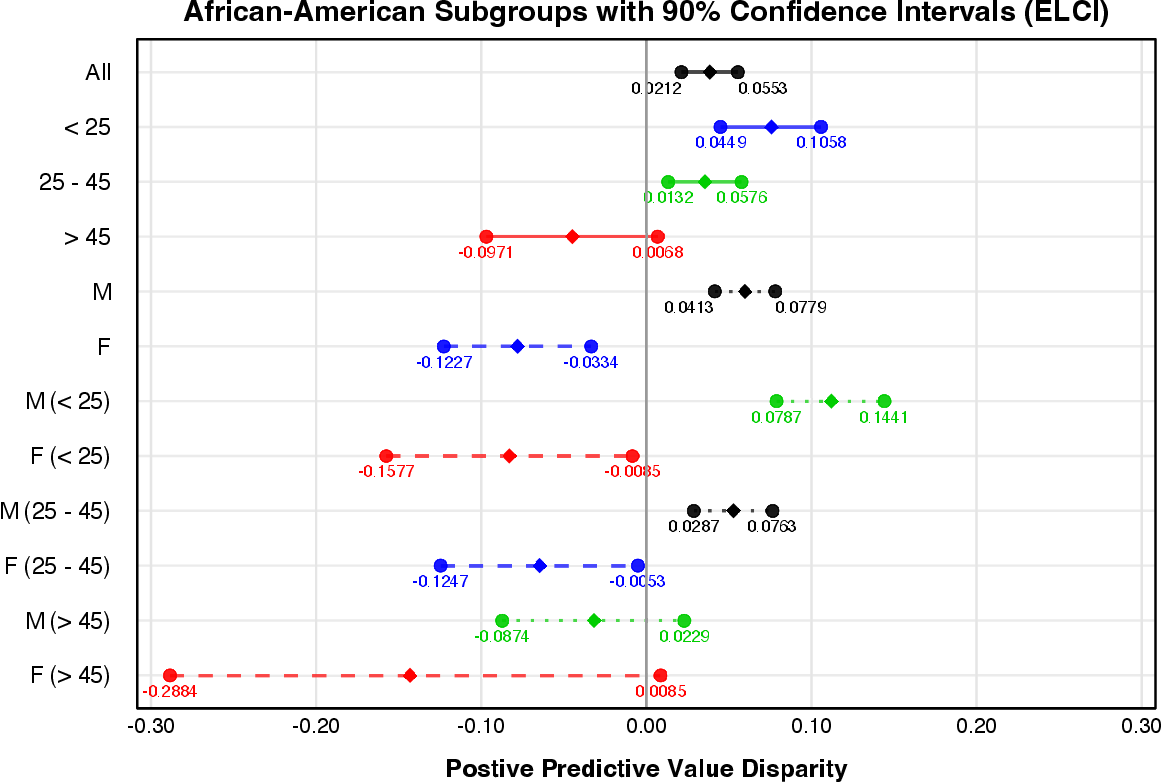}
\subcaption{EL}
\label{fig-PPV-African-American-0.9-a}
\end{subfigure}\hfill
\begin{subfigure}[b]{0.24\textwidth}
\centering
\includegraphics[width=\linewidth]{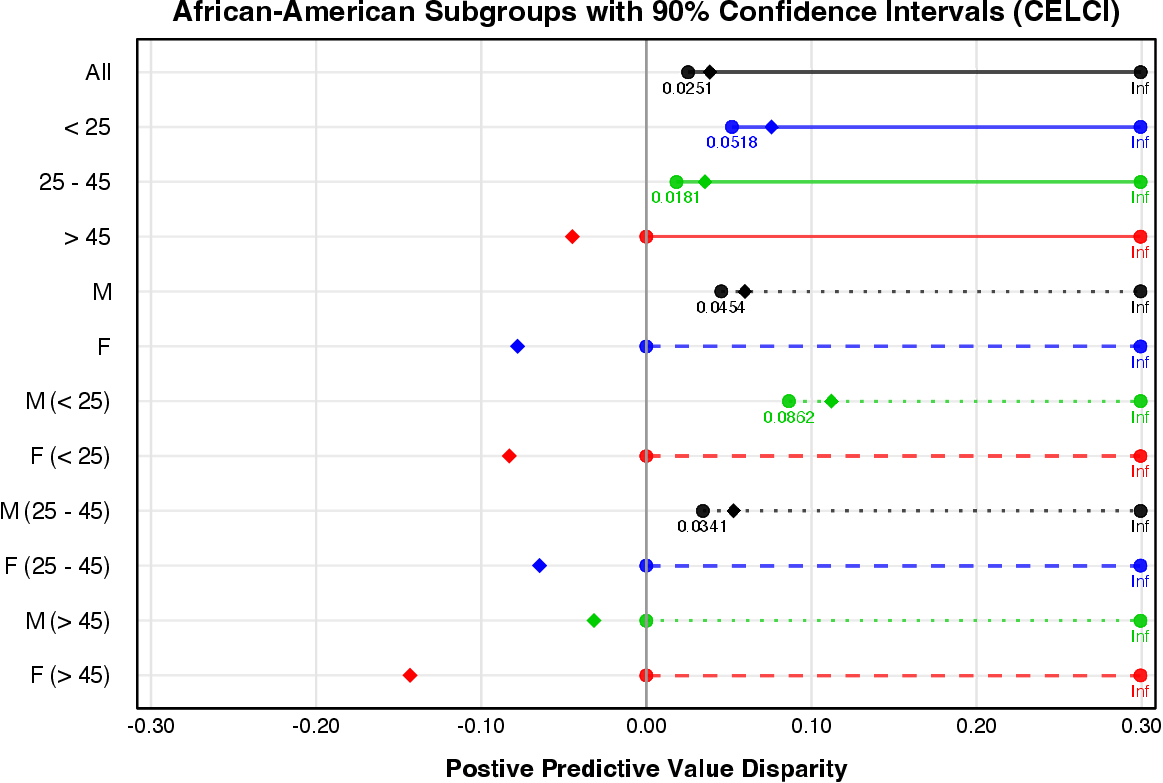}
\subcaption{CEL}
\label{fig-PPV-African-American-0.9-b}
\end{subfigure}\hfill
\begin{subfigure}[b]{0.24\textwidth}
\centering
\includegraphics[width=\linewidth]{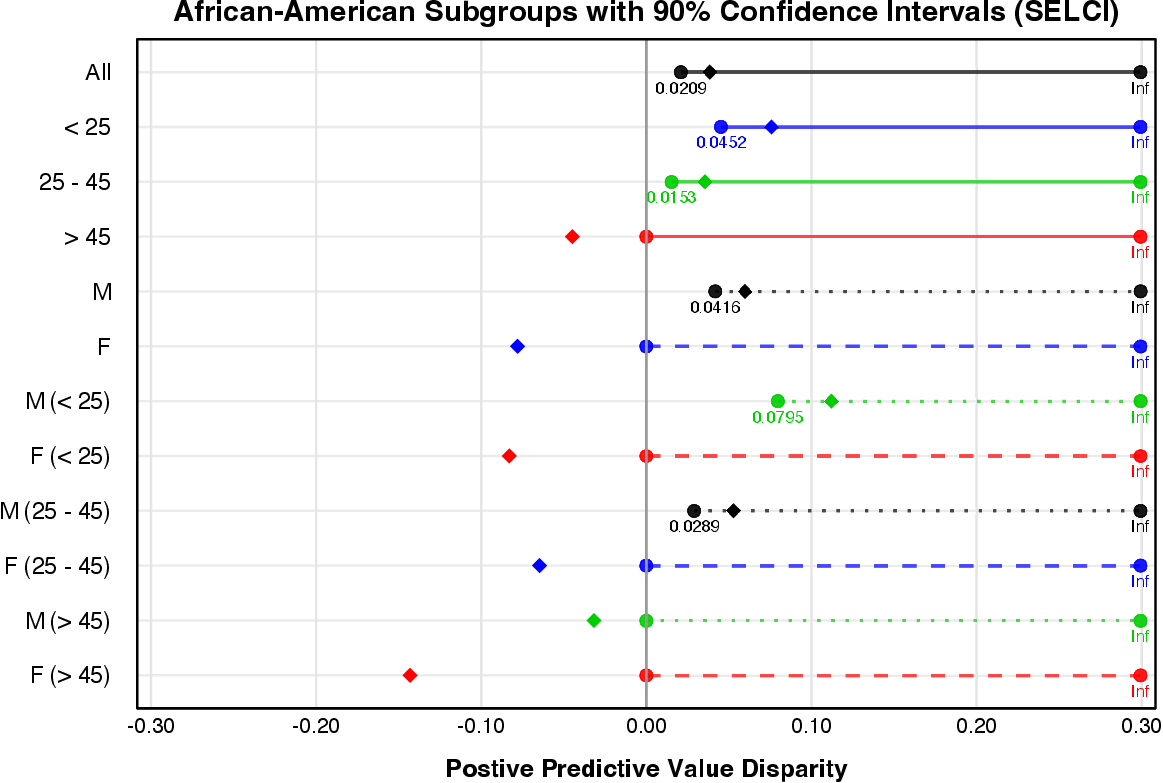}
\subcaption{SEL}
\label{fig-PPV-African-American-0.9-c}
\end{subfigure}\hfill
\begin{subfigure}[b]{0.24\textwidth}
\centering
\includegraphics[width=\linewidth]{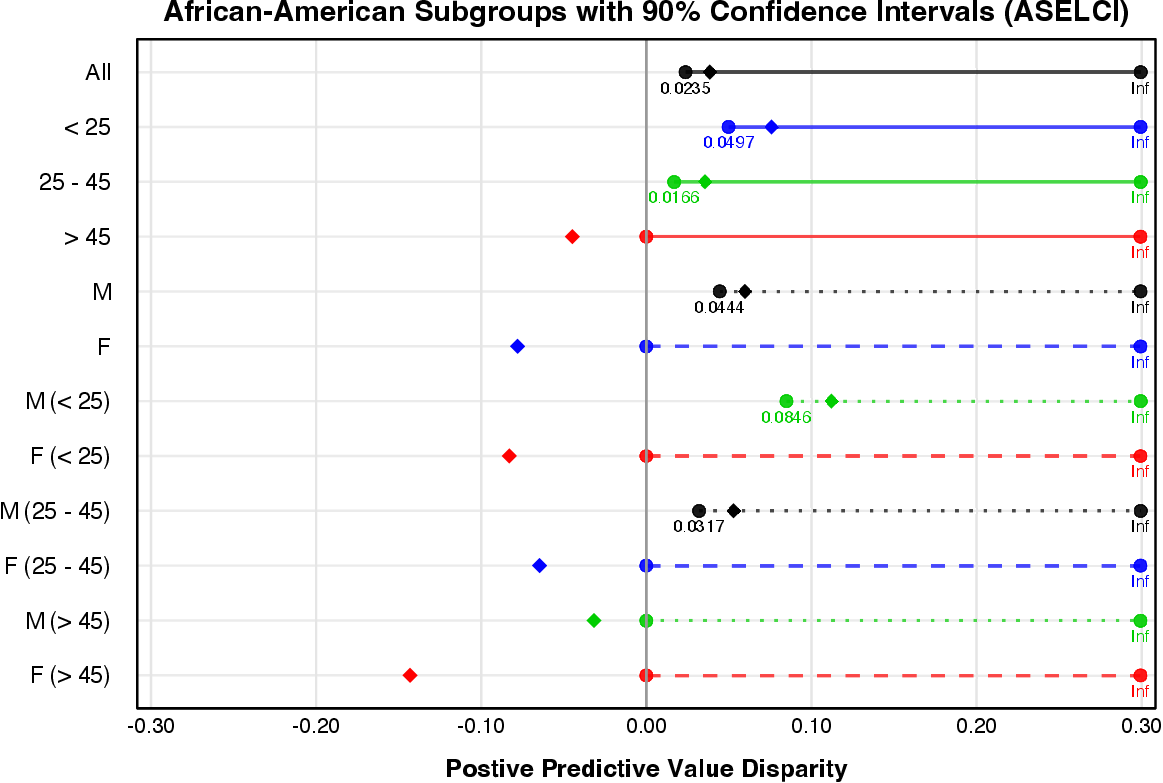}
\subcaption{ASEL}
\label{fig-PPV-African-American-0.9-d}
\end{subfigure}\\[0.6em]
\begin{subfigure}[b]{0.24\textwidth}
\centering
\includegraphics[width=\linewidth]{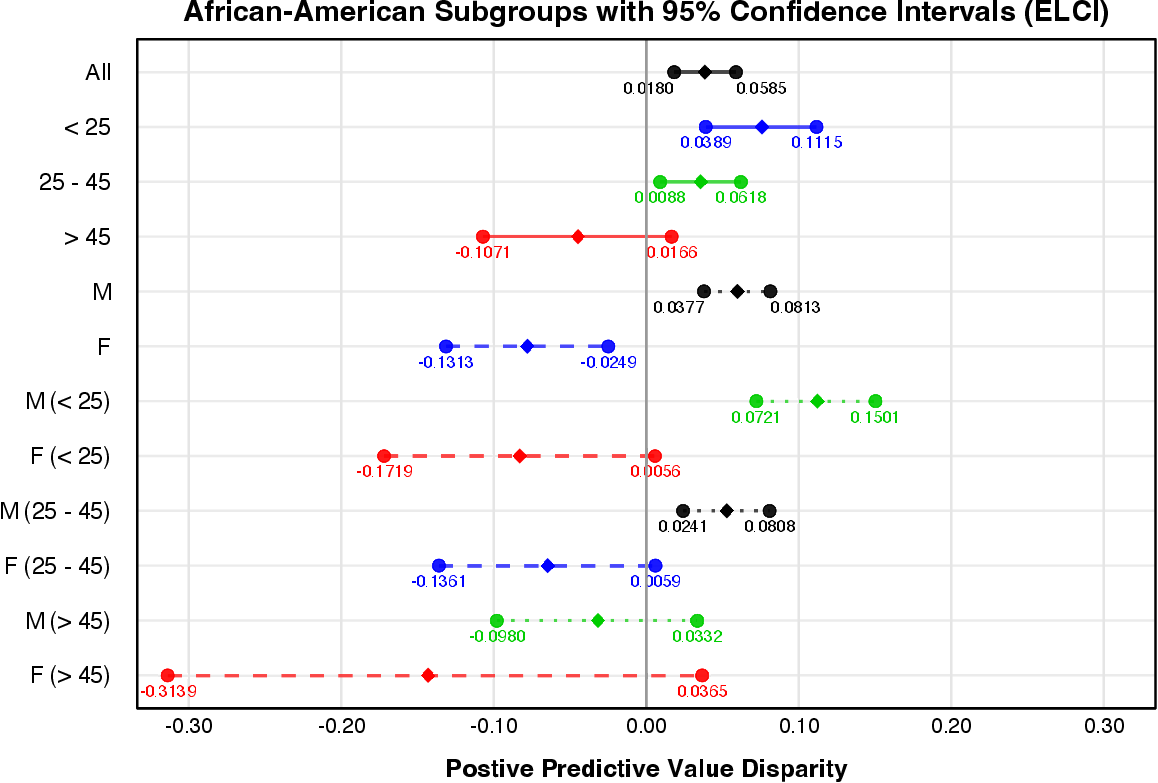}
\subcaption{EL}
\label{fig-PPV-African-American-0.95-a}
\end{subfigure}\hfill
\begin{subfigure}[b]{0.24\textwidth}
\centering
\includegraphics[width=\linewidth]{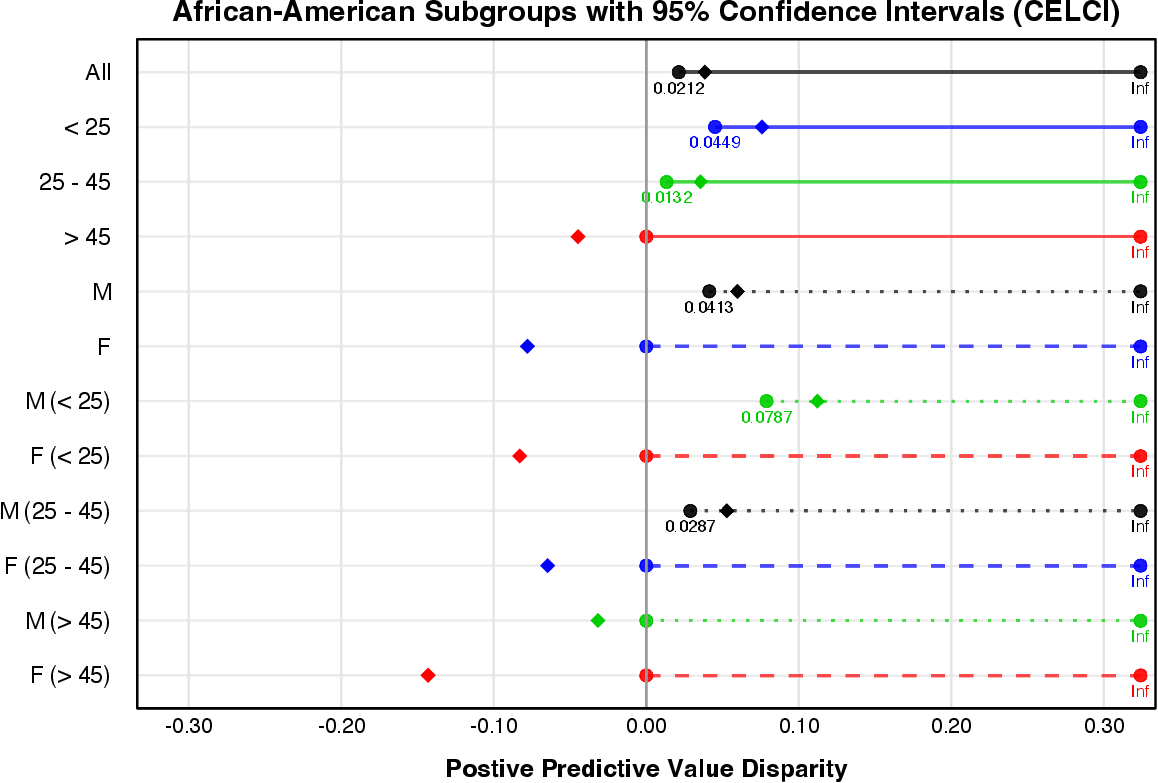}
\subcaption{CEL}
\label{fig-PPV-African-American-0.95-b}
\end{subfigure}\hfill
\begin{subfigure}[b]{0.24\textwidth}
\centering
\includegraphics[width=\linewidth]{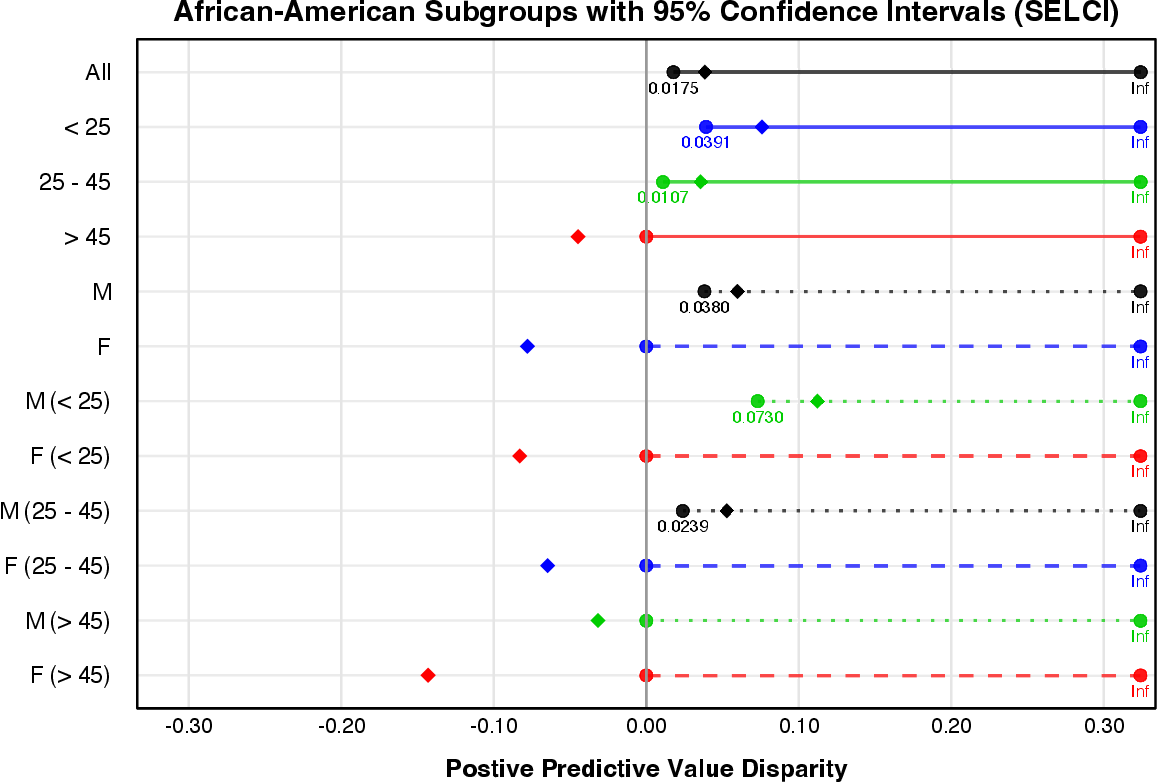}
\subcaption{SEL}
\label{fig-PPV-African-American-0.95-c}
\end{subfigure}\hfill
\begin{subfigure}[b]{0.24\textwidth}
\centering
\includegraphics[width=\linewidth]{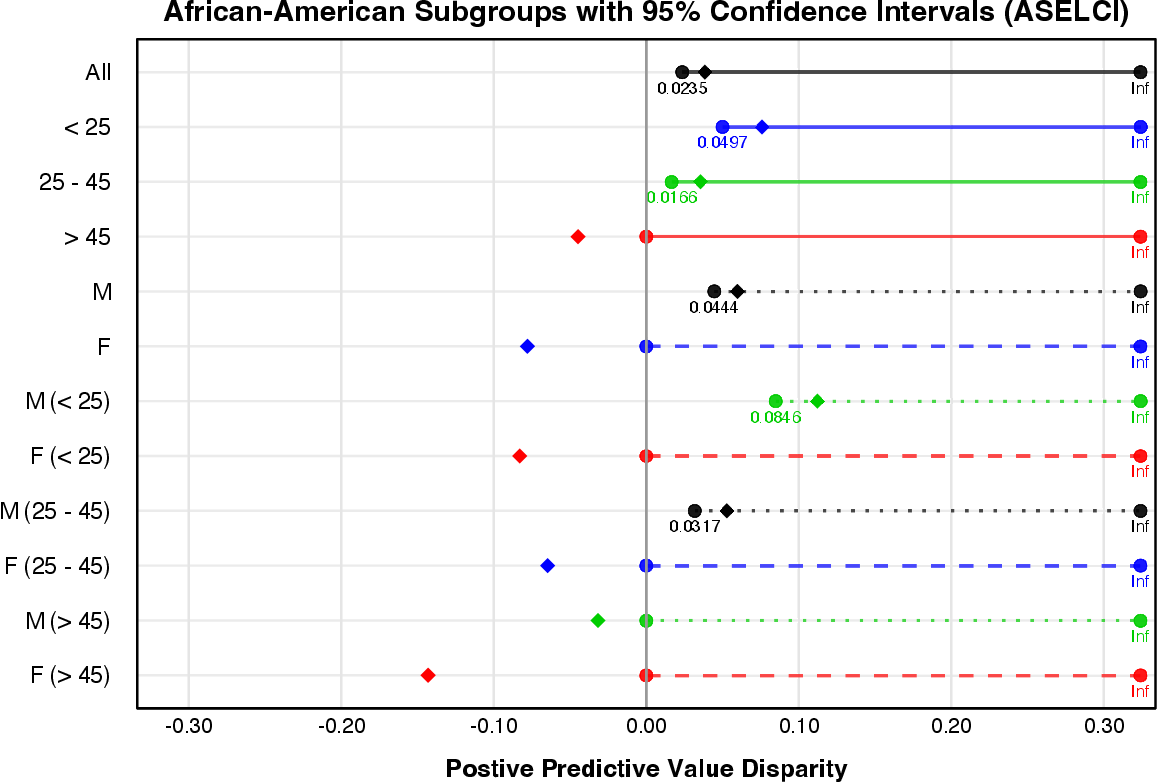}
\subcaption{ASEL}
\label{fig-PPV-African-American-0.95-d}
\end{subfigure}
\caption{
Pointwise inversion regions for the COMPAS PPV disparity of each African-American subgroup relative to the entire Caucasian reference group. 
Top row: 90\% nominal level and bottom row: 95\% nominal level.
}
\label{fig-PPV_African-American_Disparity}
\end{figure}

Second, again using the PPV for Caucasian defendants as the reference, we construct pointwise 90\% and 95\% inversion regions for the PPV disparity in each intersectional subgroup of African-American defendants defined by sex and age. 
Results are displayed in Figure~\ref{fig-PPV_African-American_Disparity}. 
Relative to the 95\% regions, those at the 90\% nominal level are narrower and therefore more sensitive for detecting nonzero disparities; the 95\% regions are wider and correspondingly more conservative.
At both nominal levels, the lower bounds from all four procedures are strictly positive for \textit{All}, $\!<25$, $25$--$45$, \textit{M}, \textit{M}($<25$), and \textit{M}($25$--$45$), supporting positive PPV disparities relative to the Caucasian reference. In particular, the signal is strongest for \textit{M}($<25$), whose lower bound at the 90\% nominal level reaches at least 7.95\%.
For the remaining subgroups, the one-sided lower bounds obtained from CEL, SEL, and ASEL do not uniformly support the existence of positive disparities, as several lower bounds are equal to zero.
Overall, Figure~\ref{fig-PPV_African-American_Disparity} examines Northpointe Inc.'s claims of PPV parity for African-American subpopulations and finds that younger male subgroups exhibit higher COMPAS PPV relative to the Caucasian reference.

% Figure 6: Sex--age disparity, 2 rows $\times$ 4 columns (90% / 95%)
\begin{figure}[!htbp]
\centering
\begin{subfigure}[b]{0.24\textwidth}
\centering
\includegraphics[width=\linewidth]{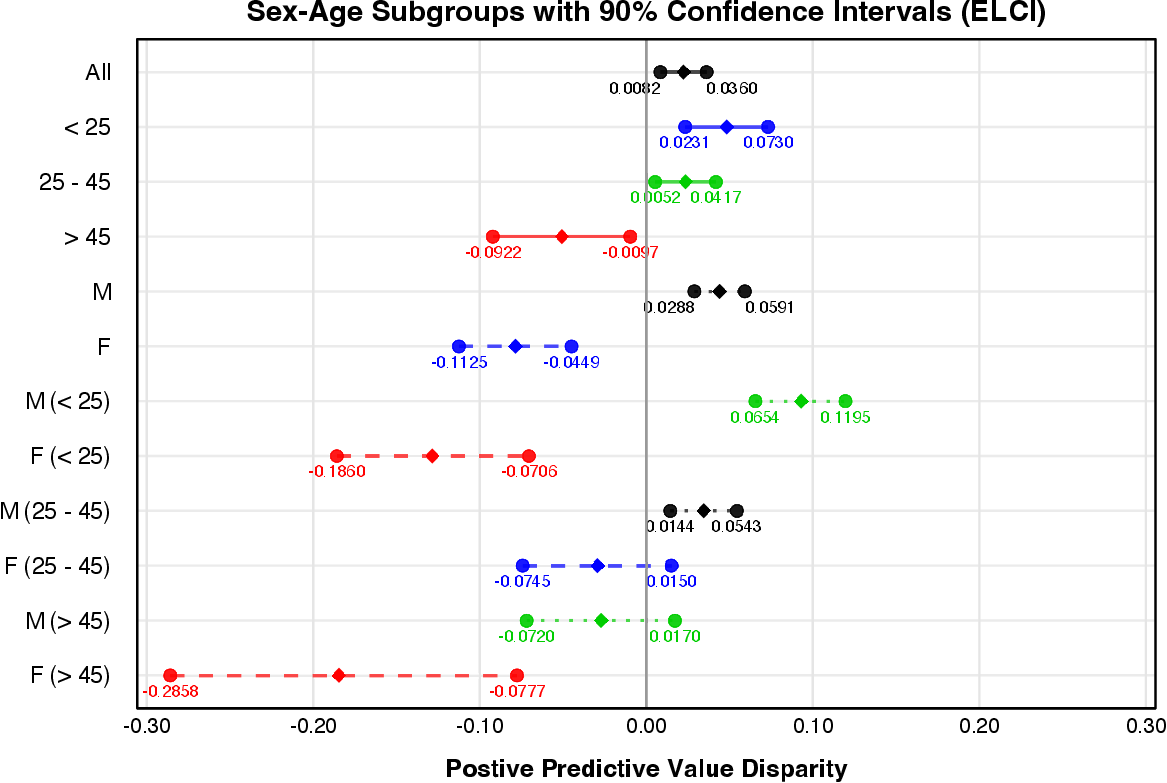}
\subcaption{EL}
\label{fig-PPV-Sex-Age-0.9-a}
\end{subfigure}\hfill
\begin{subfigure}[b]{0.24\textwidth}
\centering
\includegraphics[width=\linewidth]{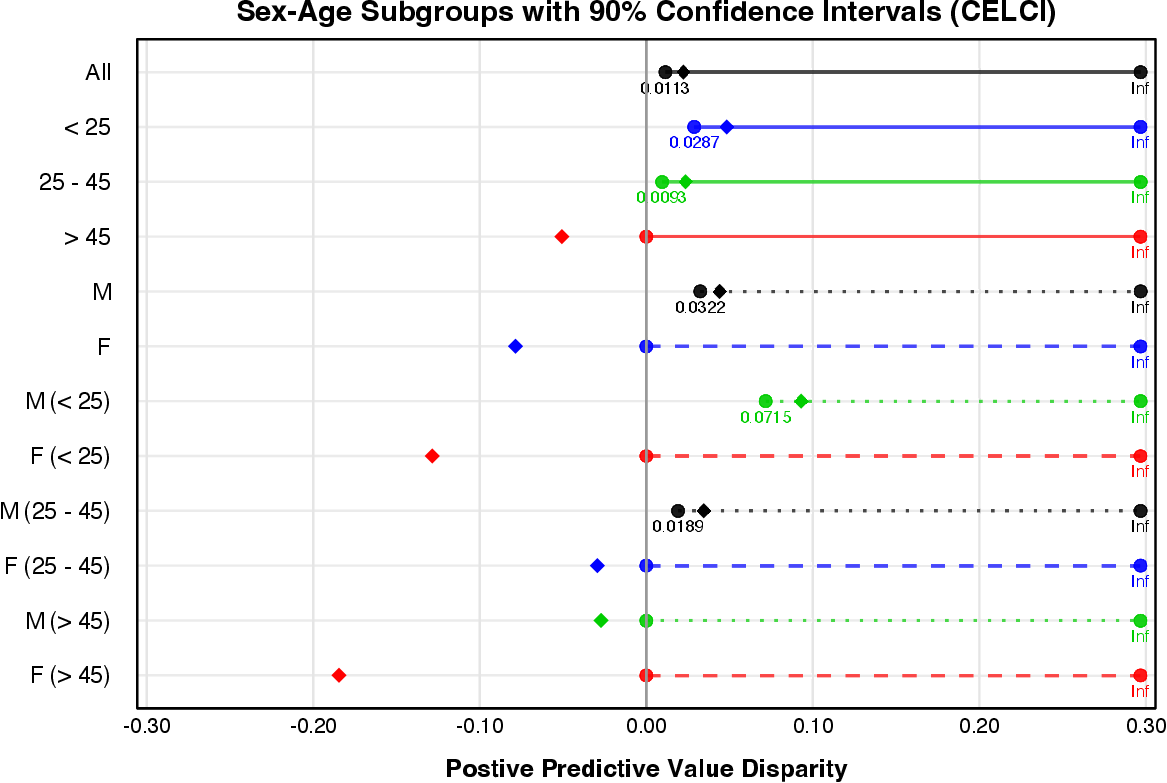}
\subcaption{CEL}
\label{fig-PPV-Sex-Age-0.9-b}
\end{subfigure}\hfill
\begin{subfigure}[b]{0.24\textwidth}
\centering
\includegraphics[width=\linewidth]{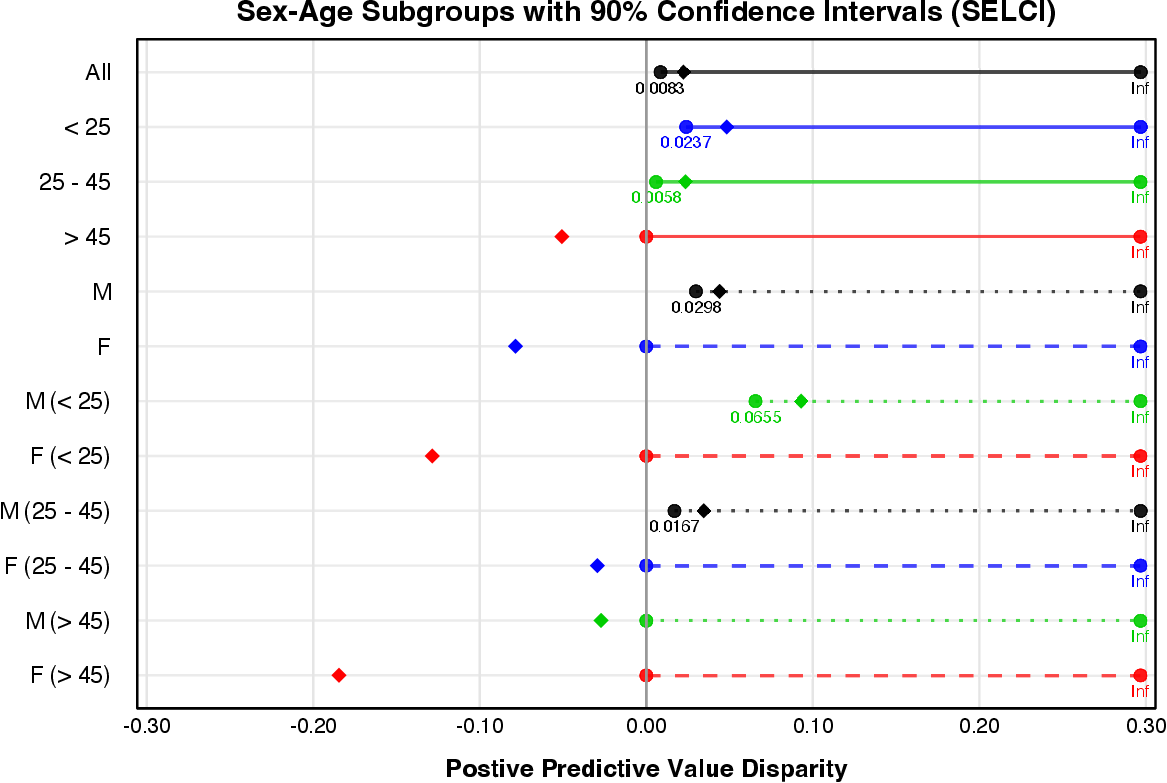}
\subcaption{SEL}
\label{fig-PPV-Sex-Age-0.9-c}
\end{subfigure}\hfill
\begin{subfigure}[b]{0.24\textwidth}
\centering
\includegraphics[width=\linewidth]{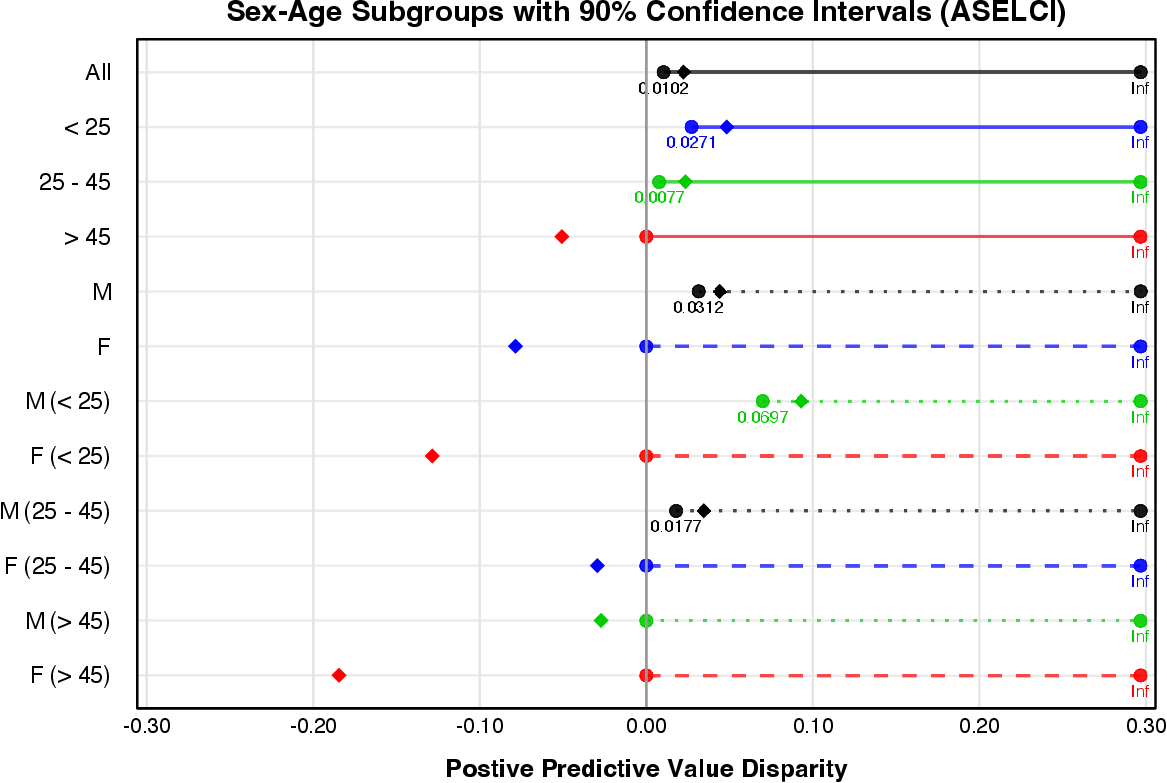}
\subcaption{ASEL}
\label{fig-PPV-Sex-Age-0.9-d}
\end{subfigure}\\[0.6em]
\begin{subfigure}[b]{0.24\textwidth}
\centering
\includegraphics[width=\linewidth]{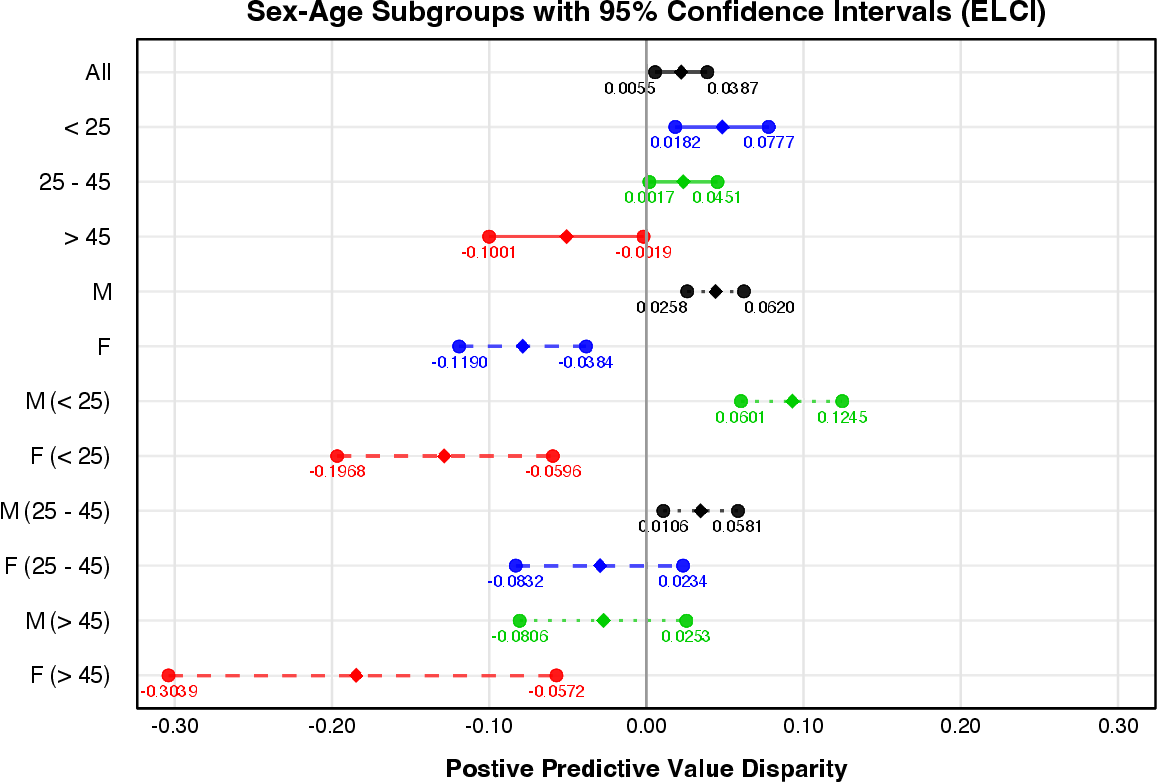}
\subcaption{EL}
\label{fig-PPV-Sex-Age-0.95-a}
\end{subfigure}\hfill
\begin{subfigure}[b]{0.24\textwidth}
\centering
\includegraphics[width=\linewidth]{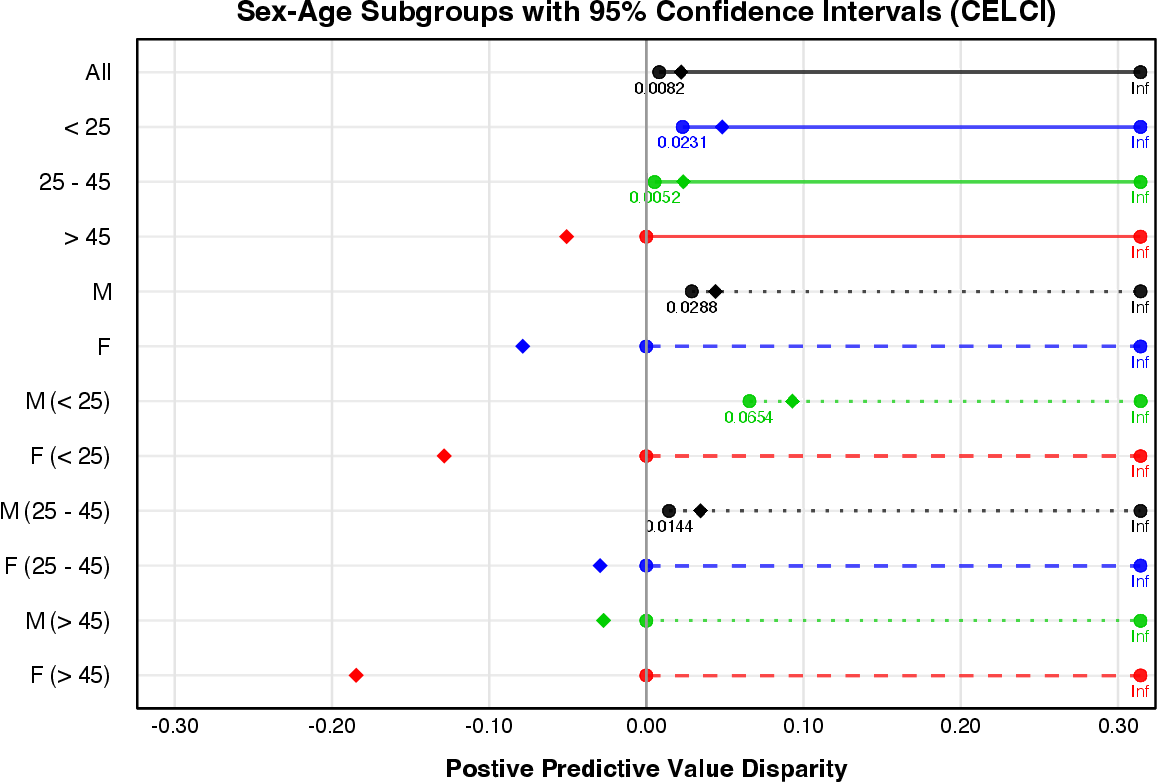}
\subcaption{CEL}
\label{fig-PPV-Sex-Age-0.95-b}
\end{subfigure}\hfill
\begin{subfigure}[b]{0.24\textwidth}
\centering
\includegraphics[width=\linewidth]{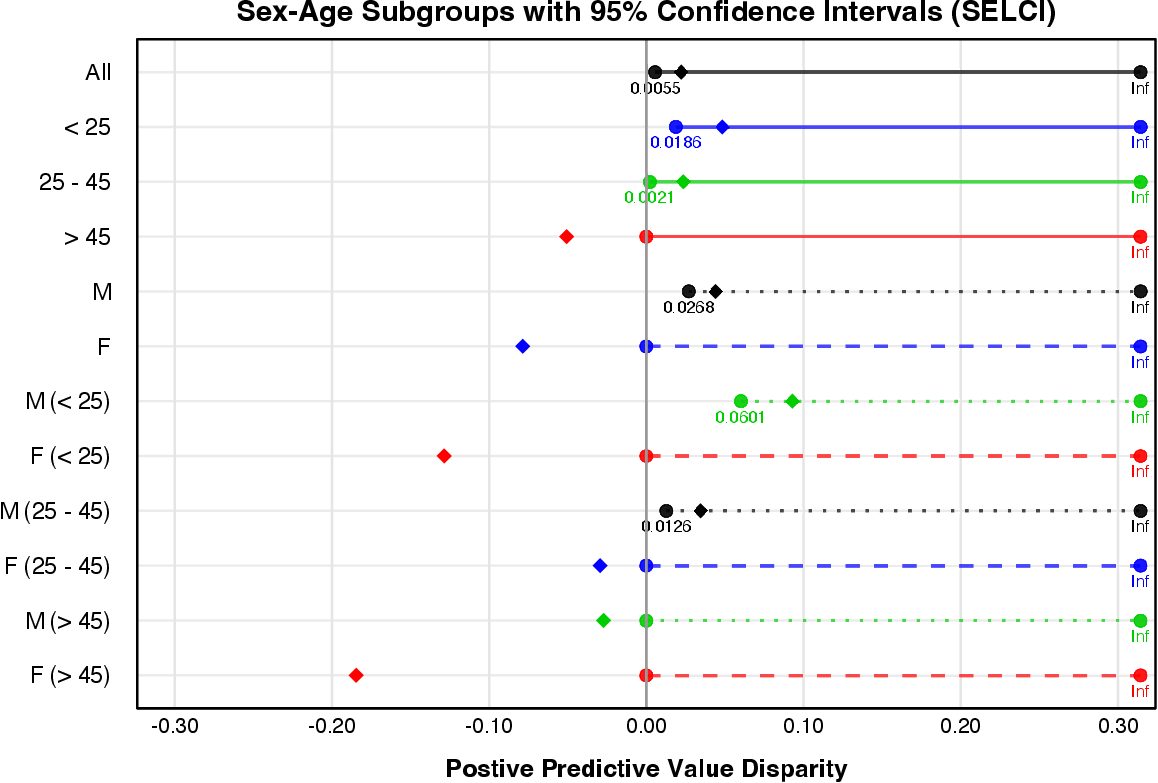}
\subcaption{SEL}
\label{fig-PPV-Sex-Age-0.95-c}
\end{subfigure}\hfill
\begin{subfigure}[b]{0.24\textwidth}
\centering
\includegraphics[width=\linewidth]{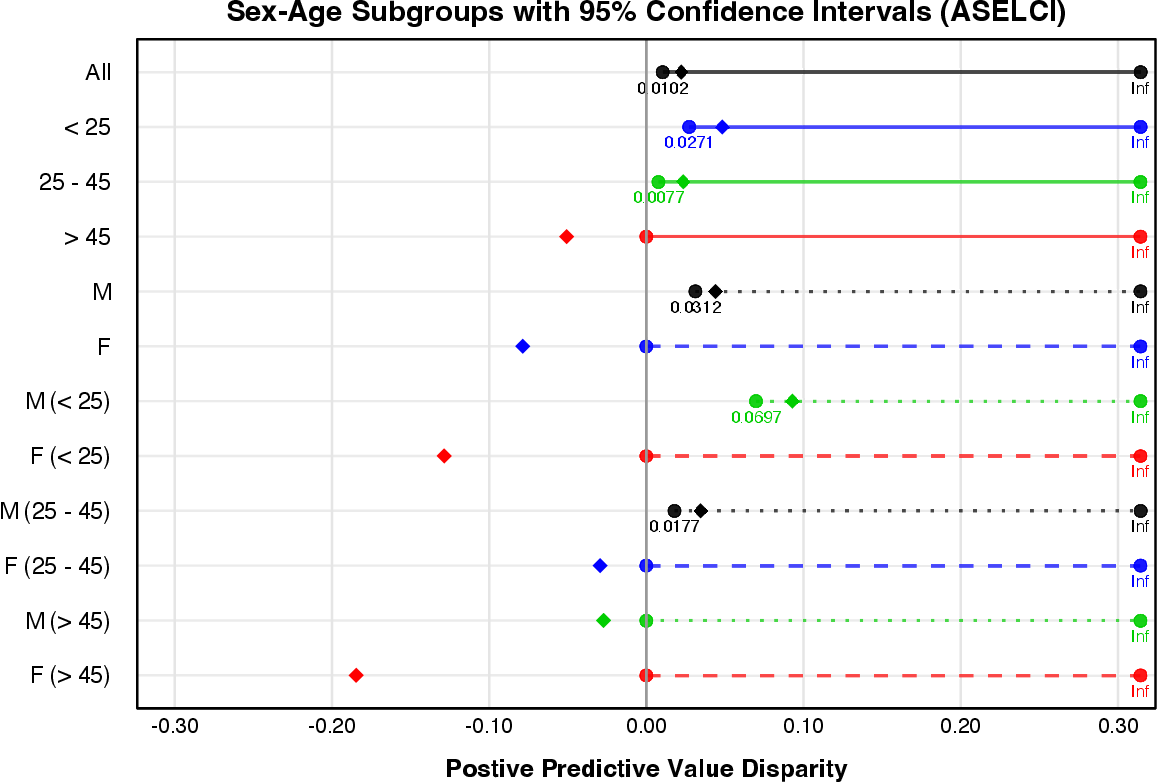}
\subcaption{ASEL}
\label{fig-PPV-Sex-Age-0.95-d}
\end{subfigure}
\caption{
Pointwise inversion regions for the COMPAS PPV disparity of each subgroup formed by intersections of age and sex relative to the entire Caucasian reference group.
Top row: 90\% nominal level and bottom row: 95\% nominal level.
}
\label{fig-PPV_Sex-Age_Disparity}
\end{figure}

Third, taking the PPV for Caucasian defendants as the reference, we construct pointwise 90\% and 95\% inversion regions for the PPV disparity in each subpopulation defined solely by the intersection of sex and age, based on $n=3317$ observations.
As shown in Figure~\ref{fig-PPV_Sex-Age_Disparity}, the largest positive disparities relative to this benchmark occur for \textit{All}, $\!<25$, $25$--$45$, \textit{M}, \textit{M}($<25$), and \textit{M}($25$--$45$). In particular, the signal is strongest for \textit{M}($<25$), whose lower bound at the 90\% nominal level reaches at least 6.55\%. 
In summary, conditional on a positive COMPAS prediction, the PPV for Caucasian defendants is lower than the overall PPV pooled across all racial categories and younger male subgroups exhibit higher COMPAS PPV relative to the Caucasian reference.

Finally, we use the same procedure to certify whether any demographic groups exhibit negative PPV disparities in Figure~\ref{fig-PPV-Race-Sex-Age-Disparity}. We conduct the hypothesis test:
\[
H_0(G_j): -\epsilon_0 \le \epsilon(G) \le 0 \quad \text{vs} \quad H_1(G_j): \epsilon(G) < -\epsilon_0, \quad j = 1, \dots, |\mathcal{G}|
\]
using the overall average PPV as the reference, i.e., $\hat{\theta} = \hat{\mathbb{E}}_n [Y = 1]$. 
This formulation is used to certify whether a group's PPV falls below the overall reference by more than the prespecified tolerance.
If \(U_G<-\epsilon_0\), the data provide sufficient evidence that the negative disparity exceeds the tolerance threshold, and the audited subpopulation is therefore flagged as exhibiting a PPV deficit.
If \(-\epsilon_0<U_G <0\), the data support a negative PPV disparity but do not provide sufficient evidence that its magnitude exceeds the specified tolerance. 
If \(U_G=0\), the data do not certify a negative disparity, and assessing a disparity in the opposite direction requires a separate violation-certification procedure.

Using negativity of the upper bound as evidence of a negative PPV disparity, Figure~\ref{fig-PPV-Race-Sex-Age-Disparity} shows negative disparities across many female intersectional subgroups, with the largest deficits concentrated among age--race intersections within Caucasian females.
The subgroup of Caucasian females under age~25 yields the strongest negative disparity signal: at the 90\% nominal level, all four procedures produce upper bounds below $-10\%$, while the formal CEL upper bound for violation certification is $-15.42\%$, certifying that its PPV is at least 15.42 percentage points lower than the overall average.
For the subgroup $>45$, \textit{Caucasian} ($n=27$), CEL reports an upper bound of $-12.06\%$, whereas SEL and ASEL return empty screening inversion regions, indicating rejection under the corresponding screening rule, thereby generating screening alerts for a negative PPV disparity relative to the overall average.
%so the data nevertheless support a PPV deficit relative to the overall average. 
In the ASEL panels, an upper bound equal to zero indicates non-rejection of $H_0$ for \textit{Caucasian} ($25$--$45$) and \textit{African-American}($>45$) at the 90\% nominal level, with the aggregate $25$--$45$ group added at the 95\% level.
Accordingly, ASEL does not provide screening evidence sufficient to identify these three subgroups as exhibiting a negative PPV disparity at the corresponding nominal level.
Overall, Figure~\ref{fig-PPV-Race-Sex-Age-Disparity} indicates that, conditional on a positive COMPAS prediction, PPV tends to be lower across most female intersection subgroups, with the strongest deficits among females under age~25 and Caucasian females over age~45.

% Figure 7: Female--Race--age disparity, 2 rows $\times$ 4 columns (90% / 95%)
\begin{figure}[!htbp]
\centering
\begin{subfigure}[b]{0.24\textwidth}
\centering
\includegraphics[width=\linewidth]{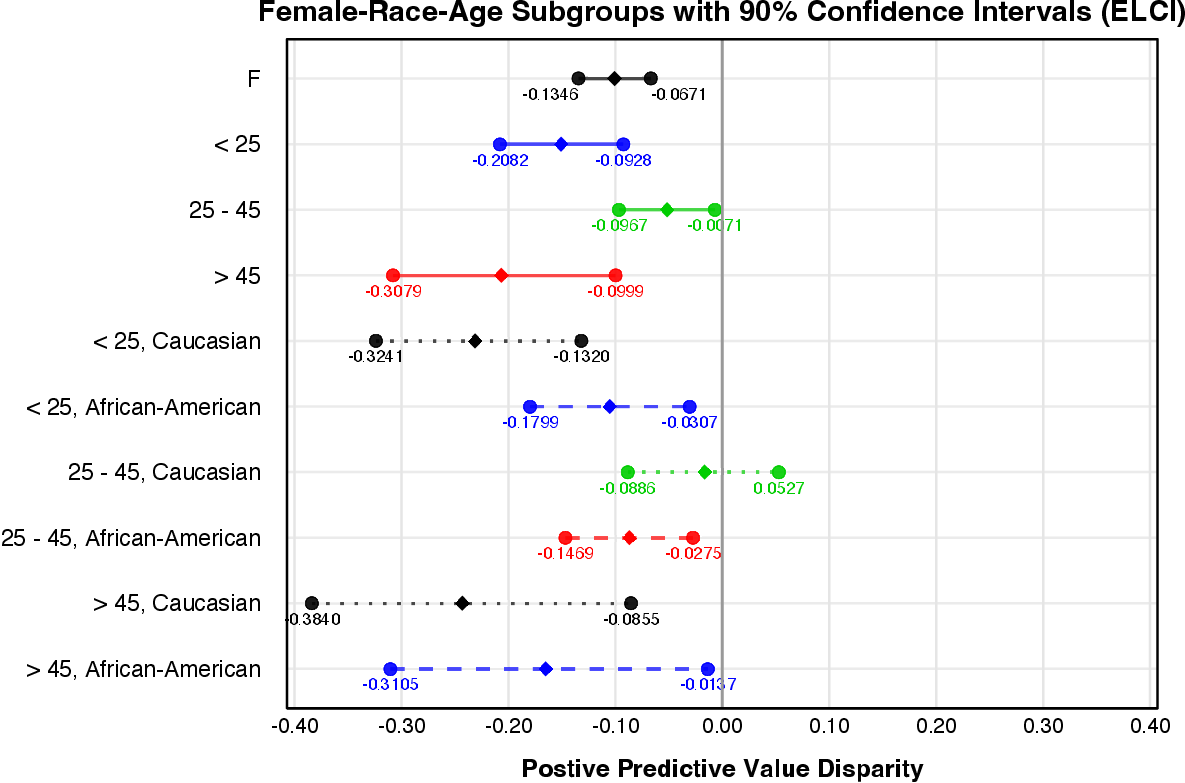}
\subcaption{EL}
\label{fig-PPV-Female-Race-Age-0.9-a}
\end{subfigure}\hfill
\begin{subfigure}[b]{0.24\textwidth}
\centering
\includegraphics[width=\linewidth]{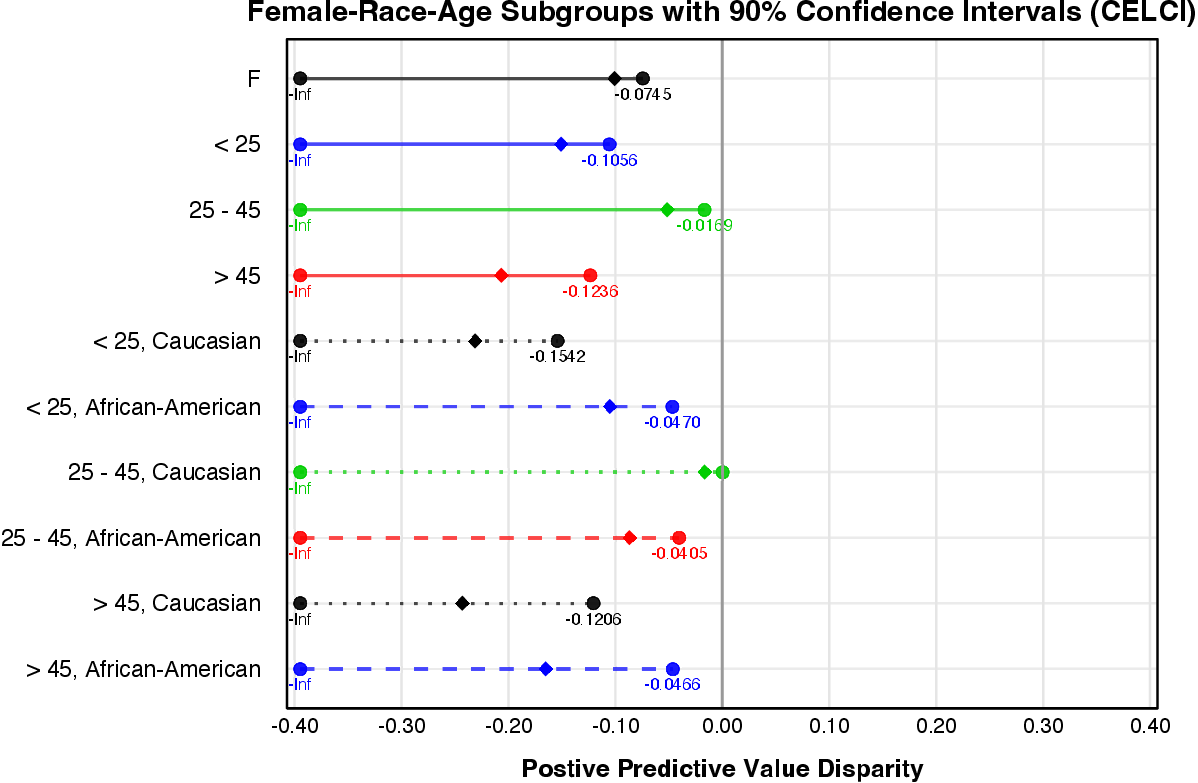}
\subcaption{CEL}
\label{fig-PPV-Female-Race-Age-0.9-b}
\end{subfigure}\hfill
\begin{subfigure}[b]{0.24\textwidth}
\centering
\includegraphics[width=\linewidth]{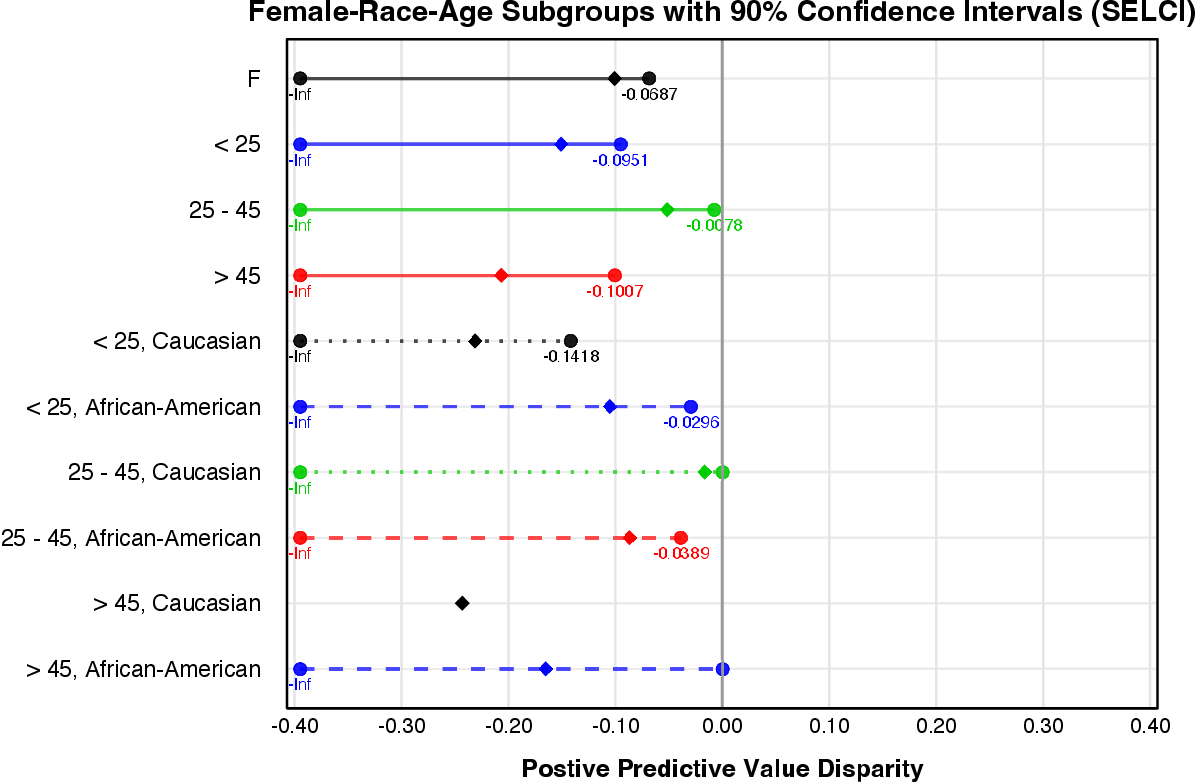}
\subcaption{SEL}
\label{fig-PPV-Female-Race-Age-0.9-c}
\end{subfigure}\hfill
\begin{subfigure}[b]{0.24\textwidth}
\centering
\includegraphics[width=\linewidth]{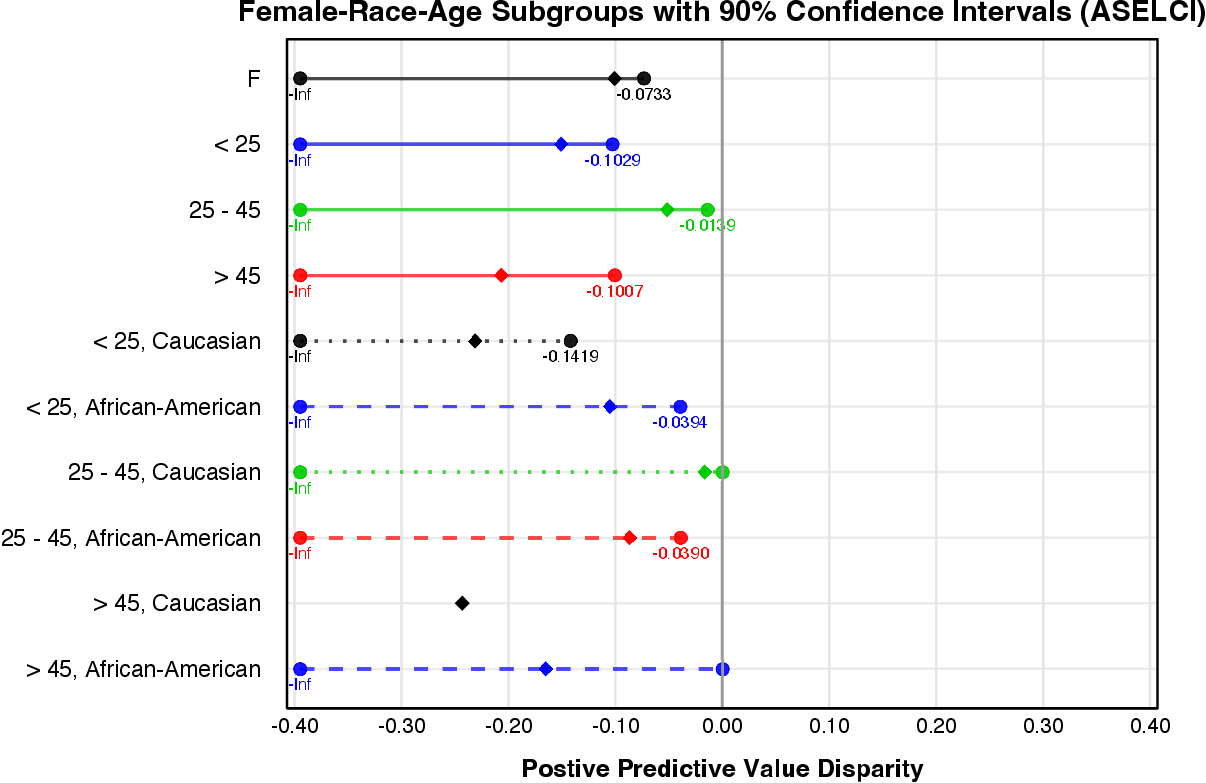}
\subcaption{ASEL}
\label{fig-PPV-Female-Race-Age-0.9-d}
\end{subfigure}\\[0.6em]
\begin{subfigure}[b]{0.24\textwidth}
\centering
\includegraphics[width=\linewidth]{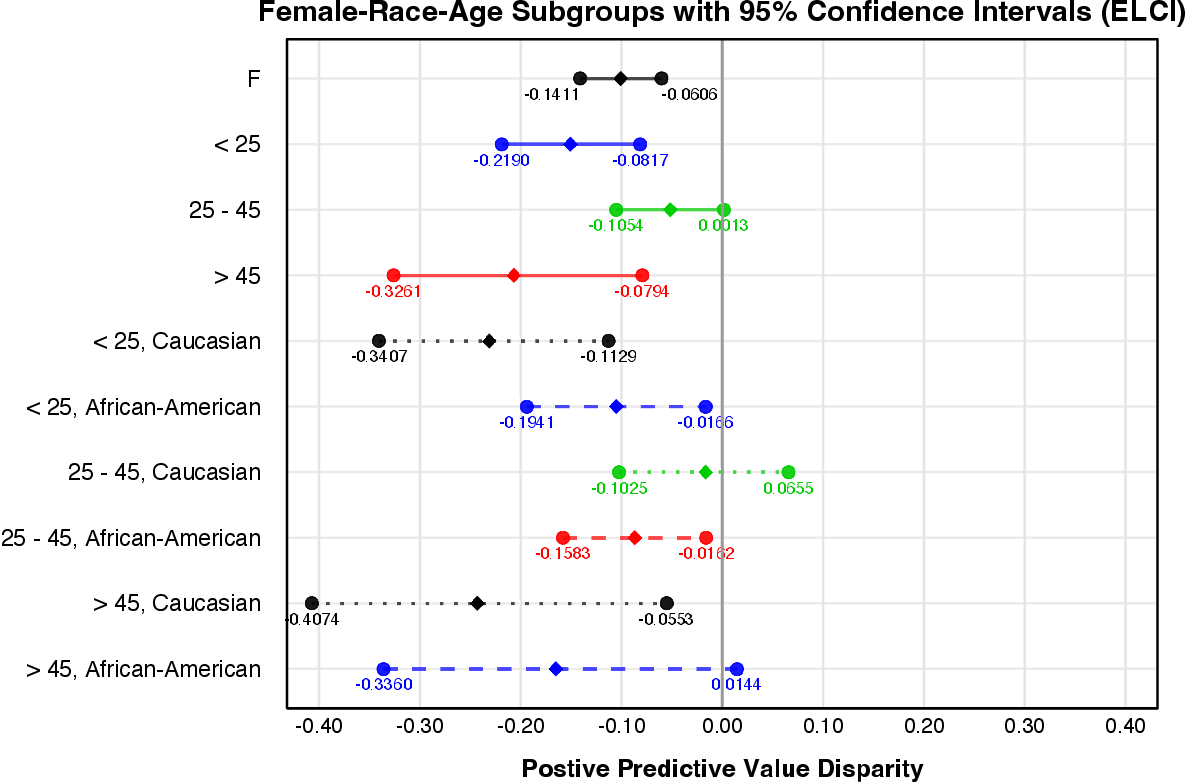}
\subcaption{EL}
\label{fig-PPV-Female-Race-Age-0.95-a}
\end{subfigure}\hfill
\begin{subfigure}[b]{0.24\textwidth}
\centering
\includegraphics[width=\linewidth]{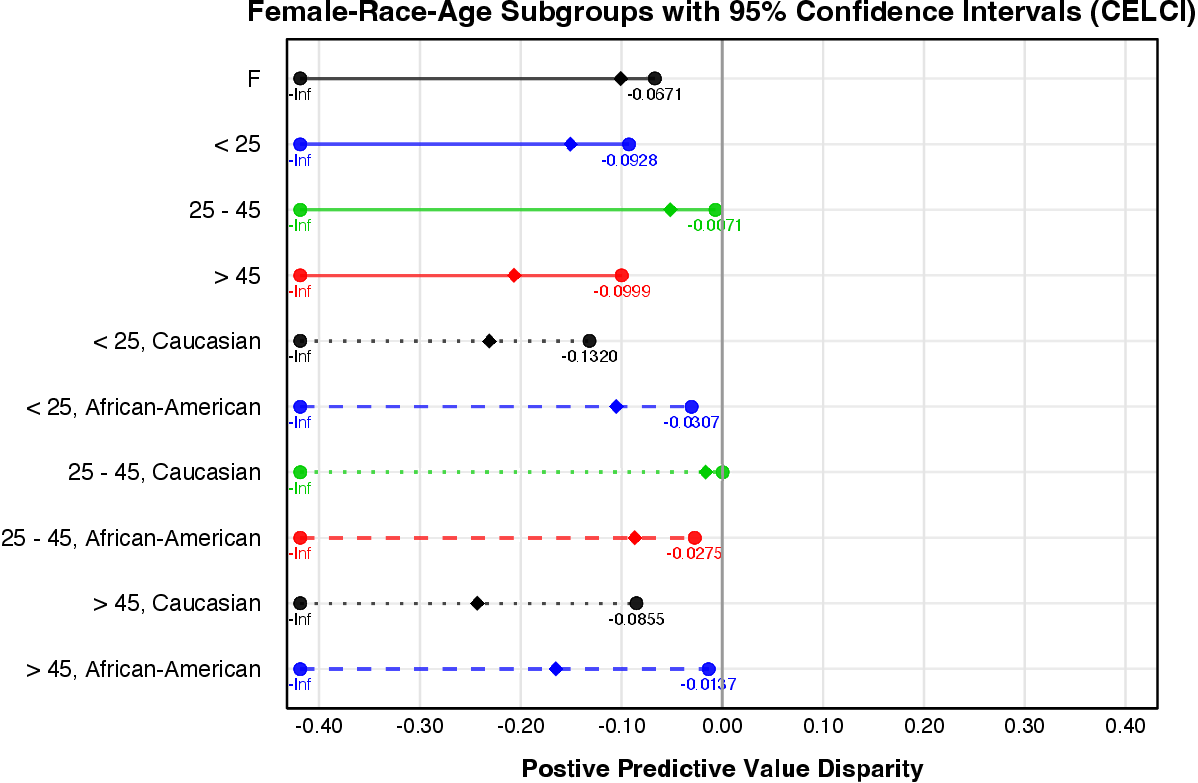}
\subcaption{CEL}
\label{fig-PPV-Female-Race-Age-0.95-b}
\end{subfigure}\hfill
\begin{subfigure}[b]{0.24\textwidth}
\centering
\includegraphics[width=\linewidth]{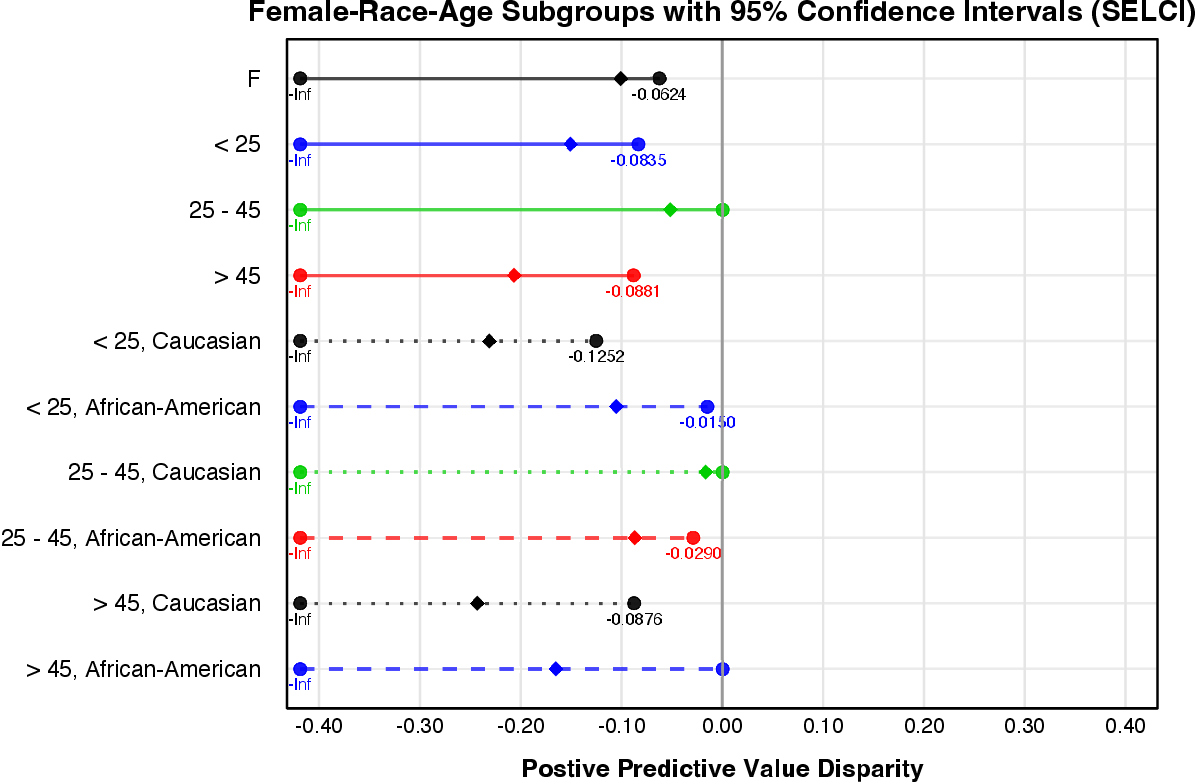}
\subcaption{SEL}
\label{fig-PPV-Female-Race-Age-0.95-c}
\end{subfigure}\hfill
\begin{subfigure}[b]{0.24\textwidth}
\centering
\includegraphics[width=\linewidth]{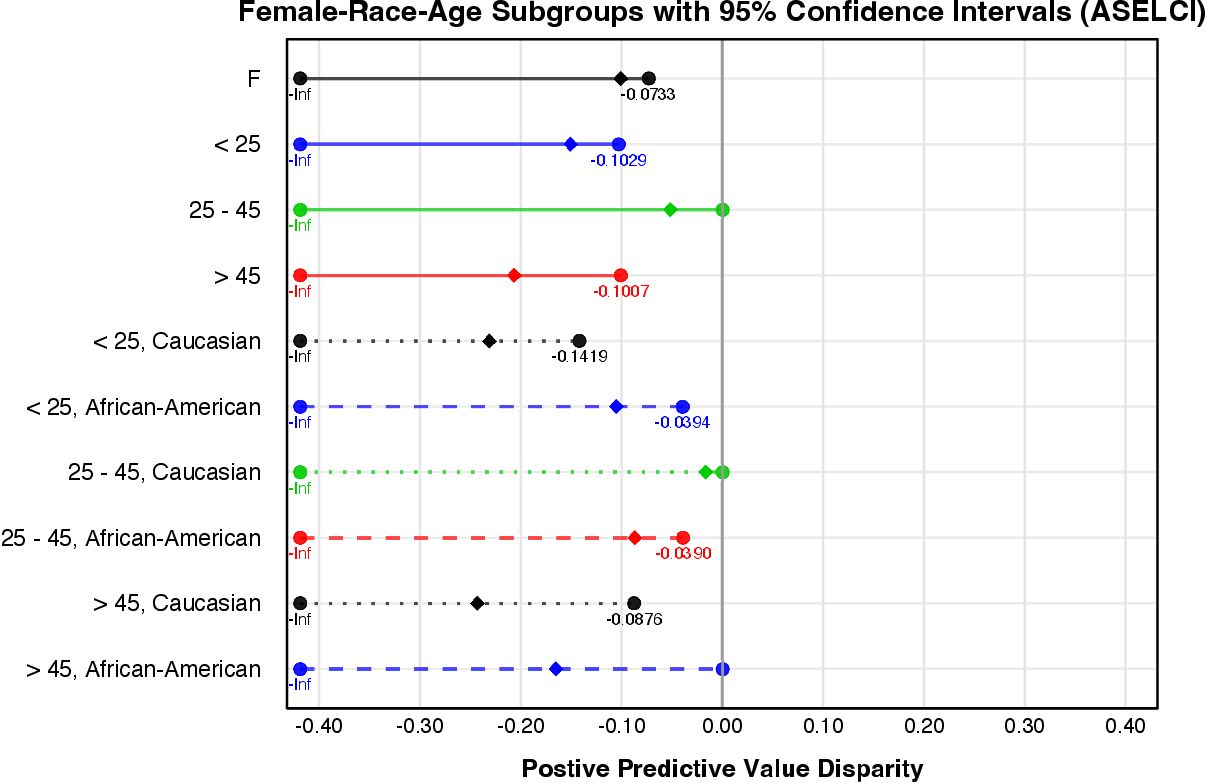}
\subcaption{ASEL}
\label{fig-PPV-Female-Race-Age-0.95-d}
\end{subfigure}
\caption{
Pointwise inversion regions for the COMPAS PPV disparity of each female subgroup defined by the intersection of race and age relative to the overall PPV among defendants receiving a positive COMPAS prediction.
Top row: 90\% nominal level; bottom row: 95\% nominal level. Six race--age intersection subgroups are omitted because the corresponding cell sizes fall below the minimum of $n=8$: $\!<25$, \textit{Hispanic}; $\!<25$, \textit{Other}; $25$--$45$, \textit{Hispanic}; $25$--$45$, \textit{Native American}; $>45$, \textit{Hispanic}; and $>45$, \textit{Native American}.
}
\label{fig-PPV-Race-Sex-Age-Disparity}
\end{figure}

In summary, the proposed tolerance-based auditing framework identifies three prominent patterns of predictive disparity in the COMPAS algorithm: 
higher PPV among African-American males under age~25 relative to the Caucasian reference; 
higher PPV among younger males pooled across racial categories relative to the same reference; 
and lower PPV among Caucasian females under age~25 and over age~45 relative to the overall average.

\section{Conclusion}\label{sec-conclusion}

This article develops a unified tolerance-based  fairness auditing framework that accommodates two complementary auditing objectives through three procedures: CEL, SEL, and ASEL.
CEL uses LFP calibration to control the risk of false violation declarations, thereby supporting formal violation certification.
For sensitivity-oriented auditing, we introduce an adaptive boundary-proxy principle based on data-adaptive evidence aggregation.
SEL and ASEL instantiate this principle through split empirical likelihood and exhibit the screening behavior characterized by the theory.
Simulations under normal, heavy-tailed, and skewed distributions illustrate the distinct finite-sample operating characteristics of violation certification and sensitivity screening, including their trade-offs between null rejection probabilities and detection sensitivity.
%The COMPAS analysis demonstrates how the proposed procedures and one-sided inversion bounds can be used to assess predictive-parity disparities across demographic and intersectional subgroups. The resulting one-sided CEL bound provides a quantitative violation-certification boundary against which application-specific tolerance levels can be directly assessed.
%In particular, the analysis identifies higher PPV among African-American defendants than among Caucasian defendants, with the disparity especially pronounced among African-American males under age~25.
The COMPAS application illustrates how the proposed framework interprets observed disparities in light of both the prespecified tolerance and the auditing objective, distinguishing formal violation certification from sensitivity-oriented screening.

There are two limitations that suggest directions for future research.
First, when the null and alternative hypotheses are interchanged, constructing a powerful procedure suitable for sensitivity screening remains technically challenging.
Second, sensitive attributes may be unavailable in practice because of privacy protections or difficulties in identifying reliable proxy variables.
Extending tolerance-based auditing to settings with missing sensitive attributes is therefore an important topic for future research.

\section{Conflicts of interest}
The authors have no conflicts of interest to declare.

\section{Funding}
Chuanlong Xie was supported by the National Natural Science Foundation of China (NSFC.12201048). Lixing Zhu was supported by the grants (NSFC12131006, NSFC12471276) from the National Natural Scientific Foundation of China and the grant (CI2023C063YLL) from the Scientific and Technological Innovation Project of China Academy of Chinese Medical Science.

\section{Supplementary Material}
The supplementary material contains an extension to lower-tail tolerance auditing and all proofs for the theoretical results presented in the main text of this paper.
\bibliographystyle{abbrvnat}
\bibliography{references}

@article{BenjaminiHochberg1995,
title={Controlling the false discovery rate: a practical and powerful approach to multiple testing},
author={Benjamini, Yoav and Hochberg, Yosef},
journal={Journal of the Royal statistical society: series B (Methodological)},
volume={57},
number={1},
pages={289--300},
year={1995},
publisher={Wiley Online Library}
}

@article{Owen1988,
  author = {Owen, Art B.},
  year = {1988},
  title = {Empirical likelihood ratio confidence intervals for a single functional},
  journal = {Biometrika},
  volume = {75},
  pages = {237--249}
}

@article{Owen1990,
  author = {Owen, Art B.},
  year = {1990},
  title = {Empirical likelihood ratio confidence regions},
  journal = {The Annals of Statistics},
  volume = {18},
  pages = {90--120}
}

@article{CaldersVerwer2010,
  author = {Calders, Toon and Verwer, Sicco},
  year = {2010},
  title = {Three naive {Bayes} approaches for discrimination-free classification},
  journal = {Data Mining and Knowledge Discovery},
  volume = {21},
  number = {2},
  pages = {277--292},
  doi = {10.1007/s10618-010-0190-x}
}

@inproceedings{DworkEtAl2012,
  author = {Dwork, Cynthia and Hardt, Moritz and Pitassi, Toniann and Reingold, Omer and Zemel, Richard},
  year = {2012},
  title = {Fairness through awareness},
  booktitle = {Proceedings of the 3rd Innovations in Theoretical Computer Science Conference},
  pages = {214--226},
  publisher = {Association for Computing Machinery},
  address = {New York, NY},
  doi = {10.1145/2090236.2090255}
}

@article{PessachShmueli2022,
  author = {Pessach, Dana and Shmueli, Erez},
  year = {2022},
  title = {A review on fairness in machine learning},
  journal = {ACM Computing Surveys},
  volume = {55},
  number = {3},
  pages = {1--44},
  doi = {10.1145/3494672}
}

@article{CatonHaas2024,
  author = {Caton, Simon and Haas, Christian},
  year = {2024},
  title = {Fairness in machine learning: A survey},
  journal = {ACM Computing Surveys},
  volume = {56},
  number = {7},
  articleno = {166},
  numpages = {38},
  doi = {10.1145/3616865}
}

@book{LehmannRomano2005,
  title = {Testing Statistical Hypotheses},
  author = {Lehmann, Erich L. and Romano, Joseph P.},
  year = {2005},
  edition = {3},
  publisher = {Springer},
  address = {New York}
}

@incollection{AngwinEtAl2016,
  author = {Angwin, J. and Larson, J. and Mattu, S. and Kirchner, L.},
  year = {2016},
  title = {Machine Bias},
  booktitle = {Ethics of Data and Analytics},
  pages = {254--264},
  publisher = {Auerbach Publications}
}

@article{CherianCandes2024,
  author = {Cherian, John J. and Cand{\`e}s, Emmanuel J.},
  year = {2024},
  title = {Statistical Inference for Fairness Auditing},
  journal = {Journal of Machine Learning Research},
  volume = {25},
  number = {149},
  pages = {1--49},
  url = {http://jmlr.org/papers/v25/23-0739.html}
}

@techreport{DieterichEtAl2016,
  author = {Dieterich, William and Mendoza, Christina and Brennan, Tim},
  year = {2016},
  title = {{COMPAS} Risk Scales: Demonstrating Accuracy Equity and Predictive Parity},
  institution = {Northpointe Inc.},
  type = {Technical report},
  number = {7.4},
  pages = {1}
}

@article{FloresEtAl2016,
  author = {Flores, Anthony W. and Bechtel, Kristin and Lowenkamp, Christopher T.},
  year = {2016},
  title = {False Positives, False Negatives, and False Analyses: A Rejoinder to ``Machine Bias''},
  journal = {Federal Probation},
  volume = {80},
  pages = {38--46}
}

@article{BrundageEtAl2020,
  author = {Brundage, Miles and Avin, Shahar and Wang, Jasmine and Belfield, Haydn and Krueger, Gretchen and Hadfield, Gillian and Khlaaf, Heidy and Yang, Jingying and Toner, Helen and Fong, Ruth and Maharaj, Tegan and Koh, Pang Wei and Hooker, Sara and Leung, Jade and Trask, Andrew and Bluemke, Emma and Lebensold, Jonathan and O'Keefe, Cullen and Koren, Mark and Ryffel, Theo and Rubinovitz, Jesse and Besiroglu, Tamay and Carugati, Francesco and Clark, Jack and Eckersley, Peter and de Haas, Sarah and Johnson, Maritza and Laurie, Ben and Ingerman, Alex and Krawczuk, Igor and Askell, Amanda and Cammarota, Rob and Lohn, Andrew and Krueger, David and Stix, Charlotte and Henderson, Peter and Graham, L{\'e}onard and Prunkl, Carina and Martin, Brian and Seger, Elizabeth and Zilberman, Nathan and {h{\'E}igeartaigh}, Se{\'a}n {\'O}},
  year = {2020},
  title = {Toward Trustworthy {AI} Development: Mechanisms for Supporting Verifiable Claims},
  journal = {arXiv preprint arXiv:2004.07213}
}

@article{LaineEtAl2024,
  author = {Laine, Joonas and Minkkinen, Matti and M{\"a}ntym{\"a}ki, Matti},
  year = {2024},
  title = {Ethics-based auditing to develop trustworthy artificial intelligence},
  journal = {Journal of Business Ethics}
}

@article{LacmanovicSkare2025,
  author = {Lacmanovi{\'c}, Sara and {\v{S}}kare, Marinko},
  year = {2025},
  title = {Algorithmic auditing and trustworthy artificial intelligence},
  journal = {Technology in Society}
}

@inproceedings{TramerEtAl2017,
  author = {Tram{\`e}r, Florian and Atlidakis, Vaggelis and Geambasu, Roxana and Hsu, Daniel and Hubaux, Jean-Pierre and Humbert, Mathias and Juels, Ari and Lin, Huang},
  year = {2017},
  title = {FairTest: Discovering unwarranted associations in data-driven applications},
  booktitle = {Proceedings of the 2nd IEEE European Symposium on Security and Privacy},
  pages = {401--416}
}

@article{XueEtAl2020,
  author = {Xue, Songkai and Ma, Wenshuo and Wang, Jingshu and Yang, Yun},
  year = {2020},
  title = {Auditing machine learning algorithms for fairness and discrimination},
  journal = {arXiv preprint arXiv:2009.06151}
}

@article{YanZhang2022,
  author = {Yan, Tom and Zhang, Chicheng},
  year = {2022},
  title = {Active fairness auditing},
  journal = {arXiv preprint arXiv:2206.13063}
}

@article{vonZahnEtAl2023,
  author = {von Zahn, Moritz and Feuerriegel, Stefan and K{\"u}hl, Niklas},
  year = {2023},
  title = {Towards a framework for certifying fairness in machine learning},
  journal = {arXiv preprint arXiv:2302.08509}
}

@inproceedings{DiCiccioEtAl2020,
  author = {DiCiccio, Cyrus J. and Vasudevan, Sriram and Basu, Kinjal and Kenthapadi, Krishnaram and Agarwal, Deepak},
  year = {2020},
  title = {Evaluating fairness using permutation tests},
  booktitle = {Proceedings of the 26th ACM SIGKDD International Conference on Knowledge Discovery and Data Mining},
  pages = {1467--1477}
}

@article{TangEtAl2026,
  author = {Tang, Jie and Xie, Chuanlong and Zeng, Xiang and Zhu, Lixing},
  year = {2026},
  title = {Empirical likelihood-based fairness auditing: distribution-free certification and flagging},
  journal = {arXiv preprint arXiv:2601.20269}
}

@inproceedings{GargEtAl2020,
  author = {Garg, P. and Villasenor, J. and Foggo, V.},
  title = {Fairness Metrics: A Comparative Analysis},
  booktitle = {IEEE International Conference on Big Data (Big Data)},
  pages = {3662--3666},
  year = {2020},
  publisher = {IEEE}
}

@inproceedings{VermaRubin2018a,
  author = {Verma, S. and Rubin, J.},
  title = {Fairness Definitions Explained},
  booktitle = {Proceedings of the International Workshop on Software Fairness},
  pages = {1--7},
  year = {2018}
}

@article{MehrabiEtAl2021,
  author = {Mehrabi, N. and Morstatter, F. and Saxena, N. and Lerman, K. and Galstyan, A.},
  title = {A Survey on Bias and Fairness in Machine Learning},
  journal = {ACM Computing Surveys (CSUR)},
  volume = {54},
  number = {6},
  pages = {1--35},
  year = {2021}
}

@article{DresselFarid2018,
  author = {Dressel, J. and Farid, H.},
  year = {2018},
  title = {The Accuracy, Fairness, and Limits of Predicting Recidivism},
  journal = {Science Advances},
  volume = {4},
  number = {1},
  pages = {eaao5580}
}

@inproceedings{FraserEtAl2022,
  author = {Fraser, K. C. and Kiritchenko, S. and Nejadgholi, I.},
  year = {2022},
  title = {Extracting Age-Related Stereotypes from Social Media Texts},
  booktitle = {Proceedings of the 2022 Annual Meeting of the Association for Computational Linguistics},
  pages = {3183--3194},
  publisher = {Association for Computational Linguistics}
}

@article{GargEtAl2023,
  author = {Garg, T. and Masud, S. and Suresh, T. and Chakraborty, T.},
  year = {2023},
  title = {Handling Bias in Toxic Speech Detection: A Survey},
  journal = {ACM Computing Surveys},
  volume = {55},
  number = {13s},
  pages = {1--32}
}

@inproceedings{HardtEtAl2016,
  author = {Hardt, M. and Price, E. and Srebro, N.},
  year = {2016},
  title = {Equality of Opportunity in Supervised Learning},
  booktitle = {Advances in Neural Information Processing Systems},
  volume = {29},
  pages = {3315--3323}
}

@inproceedings{JiangEtAl2023,
  author = {Jiang, X. and Dai, Y. and Wu, Y.},
  year = {2023},
  title = {Fair Selection through Kernel Density Estimation},
  booktitle = {Proceedings of the 2023 International Joint Conference on Neural Networks},
  pages = {1--8},
  publisher = {IEEE}
}

@article{LuoEtAl2024,
  author = {Luo, R. and Tang, T. and Xia, F. and Liu, J. and Xu, C. and Zhang, L. Y. and Xiang, W. and Zhang, C.},
  year = {2024},
  title = {Algorithmic Fairness: A Tolerance Perspective},
  journal = {arXiv preprint arXiv:2405.09543},
  doi = {10.48550/arXiv.2405.09543},
  url = {https://arxiv.org/abs/2405.09543}
}

@article{MehrabiEtAl2022,
  author = {Mehrabi, N. and Morstatter, F. and Saxena, N. and Lerman, K. and Galstyan, A.},
  year = {2022},
  title = {A Survey on Bias and Fairness in Machine Learning},
  journal = {arXiv preprint arXiv:1908.09635},
  doi = {10.48550/arXiv.1908.09635},
  url = {https://arxiv.org/abs/1908.09635}
}

@article{PoriaEtAl2020,
  author = {Poria, S. and Hazarika, D. and Majumder, N. and Mihalcea, R.},
  year = {2020},
  title = {Beneath the Tip of the Iceberg: Current Challenges and New Directions in Sentiment Analysis Research},
  journal = {IEEE Transactions on Affective Computing},
  pages = {1--1}
}

@article{SalimiEtAl2020,
  author = {Salimi, B. and Howe, B. and Suciu, D.},
  year = {2020},
  title = {Database Repair Meets Algorithmic Fairness},
  journal = {ACM SIGMOD Record},
  volume = {49},
  number = {1},
  pages = {34--41}
}

% When \showappendixtrue is selected, merge the supplementary material into
% this PDF starting on a fresh page.
% supp-body.tex is the single source, shared with the standalone
% "Supplementary Material.tex"; edit the content there, never here.
\ifshowappendix
\clearpage
\begin{appendices}
%%
%% Shared supplementary body.
%% \input by BOTH:
%%   - SEL-BRSSB.tex          (merged into the manuscript PDF, after \clearpage)
%%   - Supplementary Material.tex (standalone supplement PDF)
%% Both parents wrap it in \begin{appendices} ... \end{appendices}.
%% Do NOT put a preamble or \begin{document} here.
%%

% Restart supplementary page numbering at 1. Both parent documents enable
% hypertexnames=false in the preamble to keep internal page anchors unique.
\pagenumbering{arabic}

% Shared supplementary title, author, and affiliation block.
\begingroup
\noindent\rule{\textwidth}{1pt}\par
\vspace{3pt}%
{\Large\bfseries\raggedright
Supplementary Material: Tolerance-Based Fairness Auditing: Violation Certification and Sensitivity Screening\par}%
\vspace{8.8pt}%
{\bfseries\boldmath\fontsize{12}{15}\selectfont\raggedright
Jie Tang\textsuperscript{1},
Chuanlong Xie\ORCID{0000-0003-4292-8782}\textsuperscript{2}, and
Lixing Zhu\ORCID{0000-0003-1611-4267}\textsuperscript{3,$\ast$}\par}%
\vspace{3pt}%
{\fontsize{10}{13}\selectfont\rightskip10pt\raggedright
\textsuperscript{1,2,3}School of Statistics, Beijing Normal University,
Zhuhai 519087, Guangdong, China\par}%
\vspace{14pt}%
\endgroup
\thispagestyle{plain}

% Number supplementary equations and figures within each appendix section:
% A.1, A.2, ...; B.1, B.2, ... .
\counterwithin{equation}{section}
\counterwithin{figure}{section}

\section{Extension to lower-tail tolerance auditing}\label{app-extended-test}

Tolerance-based fairness auditing may also concern whether a negative disparity exceeds a prespecified tolerance in magnitude. 
We next consider one representative extension:
\bea \label{test-2} 
H_0:\ -\epsilon_0 \le \epsilon_G \le 0 \quad \text{versus} \quad H_1:\ \epsilon_G <-\epsilon_0,
\eea
where $\epsilon_0$ denotes a fixed positive real number.
Analogously to the discussion of test~\meqref{test-1}, the procedure constructs a one-sided upper confidence bound for the performance disparity and compares this bound with the prespecified lower tolerance threshold $-\epsilon_0$.
If the upper confidence bound is no smaller than $-\epsilon_0$, the audit does not provide sufficient evidence to certify a lower-tail tolerance violation.
If the upper confidence bound is below $-\epsilon_0$, the audited subpopulation is certified as exhibiting a lower-tail tolerance violation.
When the upper confidence bound equals zero, the lower-tail analysis does not certify a negative disparity; evidence in the opposite direction should be assessed separately if relevant to the auditing objective.

The following lower-tail extension illustrates how the proposed constructions can be adapted when the adverse direction is reversed.
The split-sample design has three reusable components: $T_1=\ell_1(0)$, which proxies estimation error and retains an asymptotic $\chi^2_1$ limit; a boundary component constructed from $\ell_2(\cdot)$; and a data-driven gate based on a standardized version of the auxiliary estimator $\hepsilon_2$.
For the lower-tail counterpart considered here, the construction is obtained by changing the boundary anchor in \(\ell_2(\cdot)\) and reversing the gate direction.
In the interior of $H_0$, the gate is asymptotically inactive and the statistic adapts to $T_1$; under $H_1$, the gate corresponding to the violated boundary is activated and the boundary component diverges.
Both SEL and ASEL continue to use the critical value $\chi_{1,\alpha_1}^2$.
We define the corresponding CEL, SEL, and ASEL statistics below.

For problem~\eqref{test-2}, the least favorable boundary is $-\epsilon_0$. The CEL statistic is therefore
\bea \label{Tc-2}
T_c= \left( -2 \log \frac{L(-\epsilon_0)}{L(\hepsilon_G)} \right) \mathbbm{1}(\hepsilon_G < -\epsilon_0) = \ell(-\epsilon_0) \mathbbm{1}(\hepsilon_G < -\epsilon_0).  \nn
\eea
For the split procedures, let
\(
T_1=\ell_1(0),\ 
T_2=\ell_2(-2\epsilon_0),\ 
T_3=\sqrt{m}\,(\hepsilon_2+2\epsilon_0).
\)
We define the SEL statistic by
\[
T_s=T_1+T_2\,\mathbbm{1}(T_3<-c_{\alpha_2}),
\]
where $c_{\alpha_2}=\Phi^{-1}(1-\alpha_2)\sigma_2$.
If $-\epsilon_0<\epsilon_G^*\le0$, then $\hepsilon_2+2\epsilon_0\pto 2(\epsilon_G^*+\epsilon_0)>0$, so $\mathbbm{1}(T_3<-c_{\alpha_2})\pto 0$ and $T_s=T_1+o_p(1)\tod\chi_1^2$.
At the boundary $\epsilon_G^*=-\epsilon_0$, the gate has a nondegenerate limiting activation probability, analogously to the upper-tail case.
Under $H_1$, when $\epsilon_G^*<-\epsilon_0$, the boundary proxy $T_3$ diverges to $-\infty$ at the $\sqrt{n}$ rate, so $\mathbbm{1}(T_3<-c_{\alpha_2})\pto 1$ and $T_s\pto\infty$.
Let $\widetilde{T}_2=\ell_2(0)$. The ASEL statistic is
\[
T_a=T_1+\widetilde{T}_2\,\mathbbm{1}(T_3<-c_{\alpha_2}).
\]
The interior-null behavior, boundary rejection behavior, local-alternative sensitivity, and fixed-alternative consistency of the CEL, SEL, and ASEL procedures for~\eqref{test-2} follow from the same arguments as Theorems~\mref{theo-2},~\mref{theo-4},~\mref{theo-5} and Corollaries~\mref{coro-2},~\mref{coro-3}, with the boundary point $\epsilon_0$ replaced by $-\epsilon_0$ and the gate direction reversed. We therefore omit these parallel statements.

We next examine the lower-tail one-sided test~\eqref{test-2} in Section~\ref{app-extended-test}.
For these experiments, we use the same values $\epsilon_0=\epsilon_1=\epsilon_2=0.25$, $\alpha=0.05$, $\alpha_2=0.08$, and $c_{\alpha_2}=\Phi^{-1}(1-\alpha_2)\,\hat{\sigma}_2$ as above, and we compare EL, CEL, SEL, ASEL, and the Student $t$ test under the normal and heavy-tailed designs.

Under the lower-tail tolerance null, rejection frequencies should again be interpreted according to the procedures' distinct operating criteria. CEL is evaluated as a violation-certification procedure, whereas SEL and ASEL are evaluated as sensitivity-oriented screening procedures and do not target uniform level-$\alpha$ control over the entire composite null.
Accordingly, boundary rejection frequencies above $\alpha$ for SEL and ASEL should be interpreted as screening behavior, rather than as evidence of calibrated violation certification.

Figure~\ref{fig-Power-test2} presents the simulation results for the lower-tail one-sided test~\eqref{test-2} considered in Section~\ref{app-extended-test}.
The normal and heavy-tailed panels display behavior similar to that observed for test~\meqref{test-1}.
As $\epsilon_G^*$ moves into the interior of $H_0$, the rejection frequencies of CEL and the $t$ test approach zero, whereas SEL and ASEL remain close to the nominal level.
Compared with CEL, ASEL exhibits higher rejection frequencies over much of the alternative region considered, while SEL shows smaller gains, mainly near the null boundary. These patterns indicate greater sensitivity to lower-tail violations in these simulation settings, with the improvement more pronounced for ASEL.
In the small-sample heavy-tailed setting, CEL exhibits slightly higher rejection frequencies than the Student $t$ test at some of the alternative parameter values considered.

% Figure 4: test~2 under normal and heavy-tailed models
\begin{figure}[H]
\centering
\begin{subfigure}[b]{0.25\textwidth}
\centering
\includegraphics[width=\linewidth]{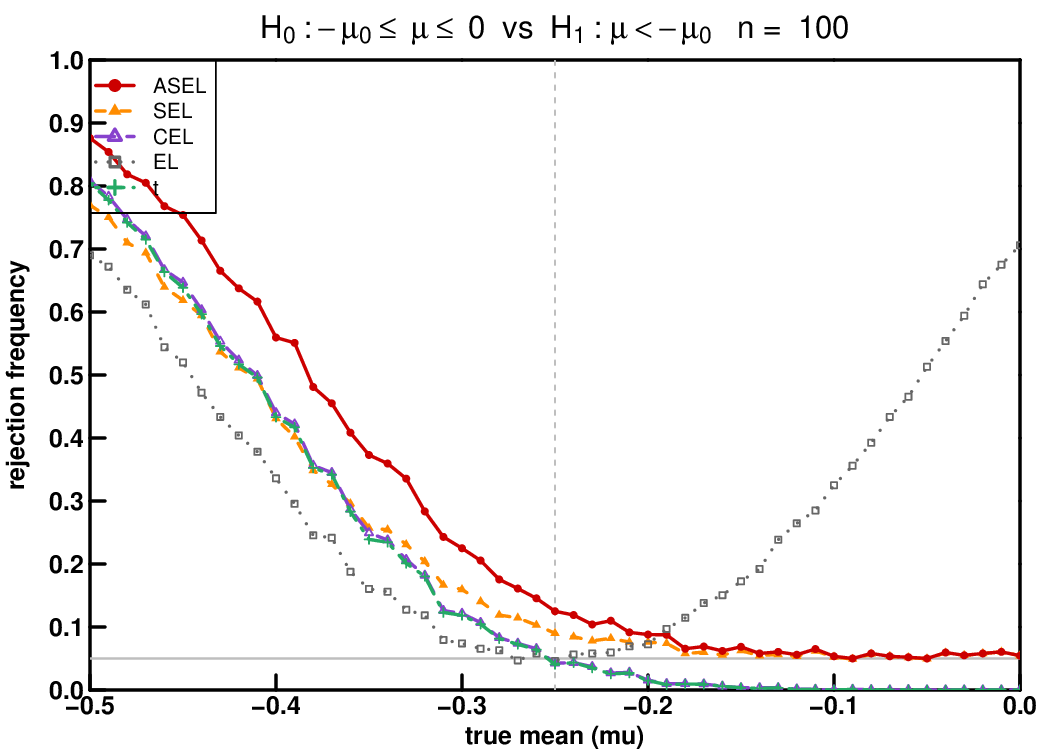}
\subcaption{}
\label{fig-Power-test2-a}
\end{subfigure}\hfill
\begin{subfigure}[b]{0.25\textwidth}
\centering
\includegraphics[width=\linewidth]{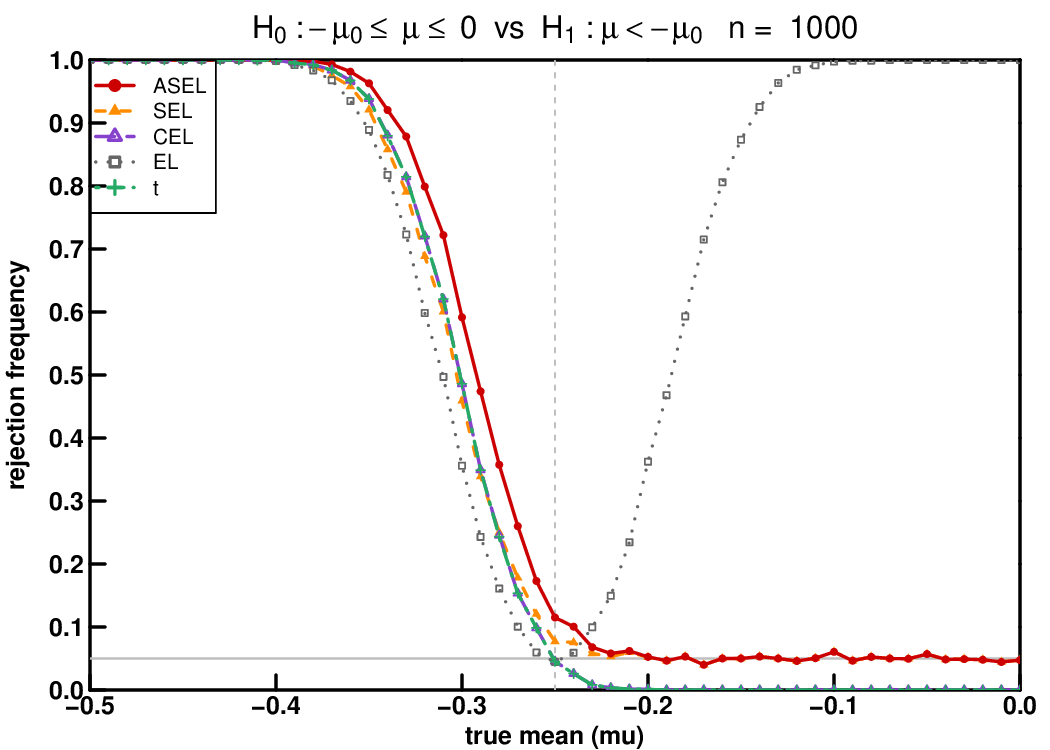}
\subcaption{}
\label{fig-Power-test2-b}
\end{subfigure}\hfill
\begin{subfigure}[b]{0.25\textwidth}
\centering
\includegraphics[width=\linewidth]{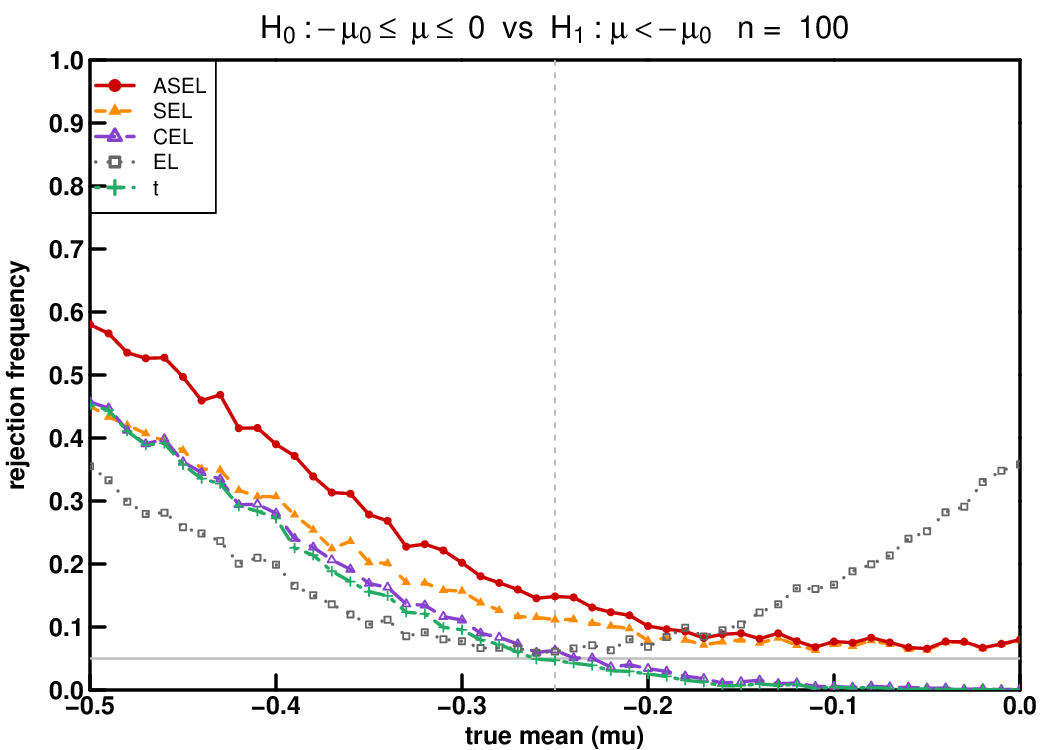}
\subcaption{}
\label{fig-Power-test2-c}
\end{subfigure}\hfill
\begin{subfigure}[b]{0.25\textwidth}
\centering
\includegraphics[width=\linewidth]{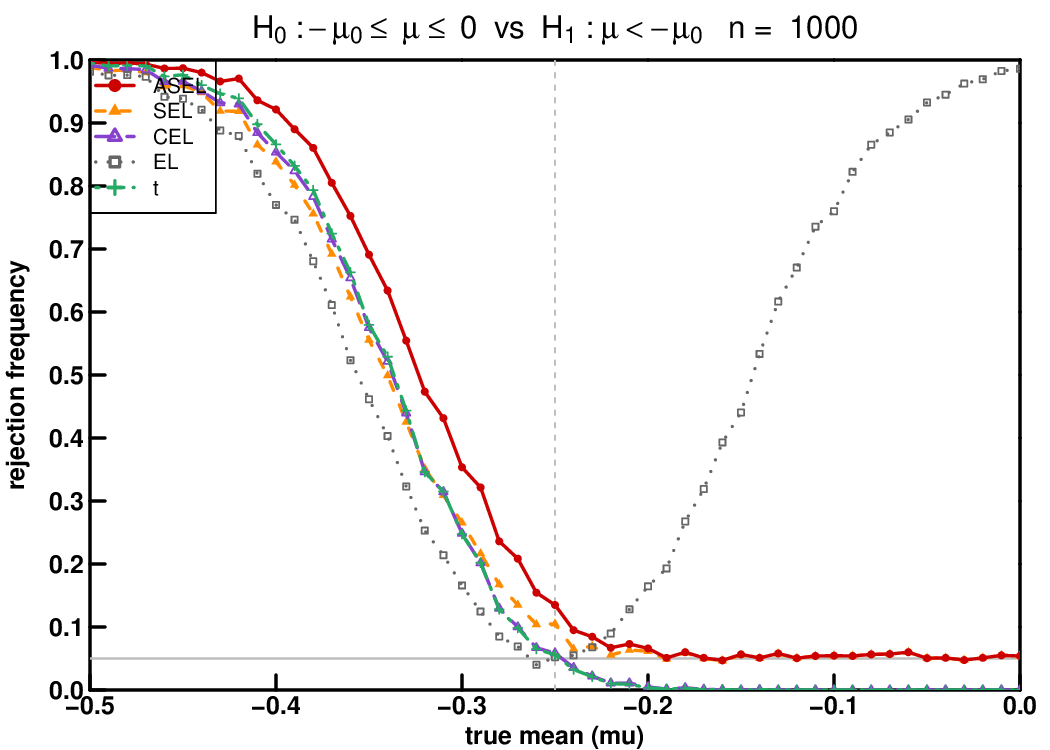}
\subcaption{}
\label{fig-Power-test2-d}
\end{subfigure}
\caption{Rejection-frequency behavior for lower-tail tolerance auditing under the symmetric normal model and the heavy-tailed Student~$t$ model: (a)--(b) correspond to $n=100,1000$ under the normal model, and (c)--(d) correspond to $n=100,1000$ under the heavy-tailed Student~$t$ model.
The horizontal gray line is the nominal level $\alpha=0.05$.
The vertical dashed line marks the null boundary.}
\label{fig-Power-test2}
\end{figure}

\section{Technical proofs}\label{app-proofs}
This appendix collects detailed proofs of the theorems and lemmas stated in the main text.

ASEL requires particular attention at the tolerance boundary. Under the assumptions of Theorem~\mref{theo-5}, when $\epsilon_G^*=\epsilon_0>0$, the component $\widetilde{T}_2=\ell_2(0)\pto\infty$, so the conditional rejection probability given gate activation tends to one. The gate opens with limiting probability $\alpha_2$. When it remains closed, the statistic equals $T_1$, whose asymptotic independence from the gate yields a conditional rejection probability tending to $\alpha_1$ at the threshold $\chi_{1,\alpha_1}^2$. Consequently,
\[
\lim_{n_G\to\infty}\mathbb{P}\{T_a>\chi_{1,\alpha_1}^2\}
=\alpha_1(1-\alpha_2)+\alpha_2.
\]
The same mechanism applies at the lower-tail boundary $\epsilon_G^*=-\epsilon_0$ after reversing the gate direction. This boundary rejection behavior is central to ASEL's sensitivity-oriented screening role. In particular, with $\alpha_1=\alpha$ and $\alpha_2>0$, the boundary rejection probability exceeds $\alpha$, so the interior $\chi_1^2$ limit does not imply uniform level-$\alpha$ control over the composite null.

\subsection{Related lemmas}\label{app-Related}

We first recall two empirical-likelihood results from \citet{TangEtAl2026} used repeatedly in the proofs. The first treats known $\theta_P$, while the second shows that replacing $\theta_P$ by a consistent estimator preserves the same Wilks limit.

\begin{lemm}[\citet{TangEtAl2026}, Theorem~1]\label{lemm-theoEL}
Suppose that conditions~\eqref{C1}--\eqref{C2} hold. 
Let $\epsilon_G^*$ denote the true performance disparity.
Then,
\[
\ell(\epsilon_G^*;\theta_P)\tod \chi^2_1 \quad \text{as} \quad n\to \infty,
\]
where $\chi^2_1$ is a chi-squared random variable with one degree of freedom.
\end{lemm}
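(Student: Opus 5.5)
The plan is the classical Owen (1990) Wilks argument, applied to the group-indicator estimating function $g_i(\epsilon_G^*;\theta_P)=(M_i-\theta_P-\epsilon_G^*)\mathbbm{1}_G$. These are i.i.d. by condition~\eqref{C1}, have mean zero, and have variance $\sigma_G^2=\mathbb{P}(G)\Var(M\mid G)$, which is bounded away from zero by condition~\eqref{C2}. Write $g_i$ for $g_i(\epsilon_G^*;\theta_P)$. The steps are:
\begin{enumerate}
\item Show that $0$ lies in the interior of the convex hull of $\{g_i\}$ with probability tending to one, so that $\lambda$ exists.
\item Bound $\lambda=O_p(n^{-1/2})$.
\item Expand $\ell$ to second order and show $\ell=(\sqrt n\,\bar g/S)^2+o_p(1)$, where $S^2=n^{-1}\sum g_i^2$.
\item Conclude by the CLT and Slutsky.
\end{enumerate}

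For step 1, the $g_i$ vanish outside $G$. Since $\mathbb{P}(G)>0$ and $\Var(M\mid G)>0$, with probability tending to one some observations in $G$ have $g_i>0$ and some have $g_i<0$. This suffices in dimension one.

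For step 2, first note that $\max_i|g_i|=o_p(n^{1/2})$. This follows from the finite $(2+\delta)$-th conditional moment in condition~\eqref{C2}: by a union bound and Markov's inequality, $\max_i|g_i|=o_p(n^{1/(2+\delta)})$. Then use the standard identity obtained from $n^{-1}\sum g_i/(1+\lambda g_i)=0$:
\[
|\lambda|\, S^2 \le (1+|\lambda|\max_i|g_i|)\,|\bar g|.
\]
By the LLN, $S^2\pto\sigma_G^2>0$, and by the CLT, $\bar g=O_p(n^{-1/2})$. Together these give $|\lambda|=O_p(n^{-1/2})$, and hence $\max_i|\lambda g_i|=o_p(1)$.

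For step 3, expand the estimating equation to get $\lambda=\bar g/S^2+\beta$ with $\beta=o_p(n^{-1/2})$. The remainder is controlled by $n^{-1}\sum|g_i|^3|\lambda|^2/(1+\lambda g_i)$, and $n^{-1}\sum |g_i|^3\le \max_i|g_i|\,S^2=o_p(n^{1/2})$. Next, a Taylor expansion of $\log(1+x)$ with $|x|=o_p(1)$ gives
\[
\ell=2\sum_i\log(1+\lambda g_i)=2n\lambda\bar g-n\lambda^2S^2+o_p(1).
\]
Substituting $\lambda$ yields $\ell=n\bar g^2/S^2+o_p(1)$.

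Step 4 is then immediate: $\sqrt n\,\bar g/\sigma_G\tod N(0,1)$ and $S^2\pto\sigma_G^2$, so $\ell(\epsilon_G^*;\theta_P)\tod\chi^2_1$.

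The main obstacle is step 2 together with the remainder control in step 3: getting $\max_i|g_i|=o_p(n^{1/2})$ and the cubic-term bound from only the $(2+\delta)$-moment. Finite second moments of $g_i$ already suffice for the maximum bound, since $\sum_i \mathbb{P}(|g_i|>\eta\sqrt n)\to0$. The $(2+\delta)$ condition makes this cleaner and uniform over $\mathcal G$. I would present that bound first and then proceed exactly as in Owen's proof.
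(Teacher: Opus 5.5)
Your proposal is correct and follows essentially the route the paper relies on. The paper does not reprove this lemma but cites \citet{TangEtAl2026}, and it later invokes exactly the resulting quadratic expansion $\ell=n\bar g^2/S_n+o_p(1)$ in the proofs of Theorems~\ref{theo-1} and~\ref{theo-3}. Two small tidy-ups: plain Markov with the $(2+\delta)$ moment gives only $\max_i|g_i|=O_p(n^{1/(2+\delta)})$, although this is already $o_p(n^{1/2})$ as you need; and uniformity over $\mathcal G$ plays no role here, since the lemma concerns a single fixed group.
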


\begin{lemm}[\citet{TangEtAl2026}, Theorem~2]\label{lemm-theoPIEL}
Suppose that conditions~\eqref{C1}--\eqref{C3} hold. Let $\epsilon_G^*$ denote the true performance disparity. Then,
\[
\ell(\epsilon_G^*;\htheta)\tod \chi^2_1 \quad \text{as} \quad n\to\infty,
\]
where $\chi^2_1$ is a chi-squared random variable with one degree of freedom.
\end{lemm}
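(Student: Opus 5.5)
The plan is to reduce the plug-in statistic $\ell(\epsilon_G^*;\htheta)$ to the known-target case of Lemma~\ref{lemm-theoEL} by following Owen's quadratic-expansion argument. Write $g_i^0=g_i(\epsilon_G^*;\theta_P)$ and $\hat g_i=g_i(\epsilon_G^*;\htheta)$. By definition,
\[
\hat g_i=g_i^0-(\htheta-\theta_P)\mathbbm{1}\{(X_i,Y_i)\in G\}.
\]
So every quantity entering the empirical-likelihood expansion differs from its known-$\theta_P$ counterpart by a term controlled by $\htheta-\theta_P$. Condition~\eqref{C3} is stated at exactly the scale needed to make these differences negligible.

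The first step collects three facts for $\hat g_i$.
\begin{enumerate}
\item[(a)] The centred mean satisfies
\[
\sqrt n\,\bar{\hat g}=\sqrt n\,\bar g^0-\sqrt n\,\mathbb P_n(G)(\htheta-\theta_P)=\sqrt n\,\bar g^0+o_P(1),
\]
directly from~\eqref{C3}. By the CLT under~\eqref{C1}--\eqref{C2}, this converges to $N(0,\sigma_G^2)$.
\item[(b)] The second moment satisfies $n^{-1}\sum\hat g_i^2\pto\sigma_G^2$. To see this, expand
\[
n^{-1}\sum\hat g_i^2=n^{-1}\sum (g_i^0)^2-2(\htheta-\theta_P)\bar g^0+\mathbb P_n(G)(\htheta-\theta_P)^2 .
\]
Since $\mathbb P_n(G)\pto\mathbb P(G)>0$, condition~\eqref{C3} gives $\htheta-\theta_P=o_P(n^{-1/2})$. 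Hence the last two terms vanish, and the first converges to $\sigma_G^2$ by the LLN.
\item[(c)] The maximum satisfies
\[
\max_i|\hat g_i|\le\max_i|g_i^0|+|\htheta-\theta_P|=o_P(n^{1/2}),
\]
using the $2+\delta$ moment bound in~\eqref{C2} together with a standard Borel--Cantelli or Markov argument.
\end{enumerate}
Facts (a) and (b), with $\sigma_G^2>0$, also imply that both signs occur among the $\hat g_i$ with probability tending to one. Hence $0$ lies in the interior of the convex hull with probability tending to one, so $\lambda$ is well defined.

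The second step is the usual Lagrange multiplier bound. Put $\hat S=n^{-1}\sum\hat g_i^2$. From the estimating equation for $\lambda$ we obtain
\[
\frac{|\lambda|\,\hat S}{1+|\lambda|\max_i|\hat g_i|}\le|\bar{\hat g}|=O_P(n^{-1/2}),
\]
and facts (b) and (c) then give $\lambda=O_P(n^{-1/2})$. Next we expand $\lambda=\bar{\hat g}/\hat S+o_P(n^{-1/2})$. Finally, a Taylor expansion of $2\sum\log(1+\lambda\hat g_i)$ with a remainder bounded by $|\lambda|^3\max_i|\hat g_i|\sum\hat g_i^2=o_P(1)$ yields
\[
\ell(\epsilon_G^*;\htheta)=\frac{(\sqrt n\,\bar{\hat g})^2}{\hat S}+o_P(1)\tod\chi^2_1 .
\]

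The main obstacle I expect is step (c) combined with the remainder bound. This is where the $2+\delta$ moment condition must be transferred uniformly from the indicator-weighted $g_i^0$ to $\hat g_i$. The plug-in perturbation must also be shown not to spoil the $o_P(n^{1/2})$ maximum. Because the perturbation is a single scalar times an indicator, I would handle it by the triangle inequality as above, so the delicate work reduces to the known-target case already covered in the proof of Lemma~\ref{lemm-theoEL}.
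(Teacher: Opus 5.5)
Your argument is correct. The paper gives no proof of this lemma. It imports the result as Theorem~2 of \citet{TangEtAl2026}, and elsewhere (for instance in the proof of Theorem~\ref{theo-1}) it only borrows the resulting quadratic expansion $\ell=n\bar g^2/S_n+o_p(1)$ from the cited work.

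Your route is a natural way to fill this in: you apply Owen's expansion to $\hat g_i=g_i(\epsilon_G^*;\hat\theta)$ and treat the plug-in as the scalar perturbation $-(\hat\theta-\theta_P)\mathbbm{1}_G$. It also makes explicit something the citation hides. Because $\mathbb{P}(G)$ is bounded away from zero, condition~\eqref{C3} is equivalent to $\hat\theta-\theta_P=o_P(n^{-1/2})$. It is this faster-than-$\sqrt n$ rate that makes the plug-in negligible in your step (a), not mere consistency as some of the paper's prose suggests. With an ordinary $\sqrt n$-consistent $\hat\theta$, the term $\mathbb{P}_n(G)\sqrt n(\hat\theta-\theta_P)$ would change the limiting variance of $\sqrt n\,\bar{\hat g}$ but not the limit of $\hat S$. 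The limit would then in general be a rescaled $\chi^2_1$.

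One small repair is needed. Your claim that both signs occur among the $\hat g_i$ with probability tending to one does not follow from (a) and (b) alone. Those two facts say nothing about how the mass of $\sum_i\hat g_i^2$ is spread across observations, and a few same-signed values of order $\sqrt n$ are not excluded by them. You need (c) as well. If all $\hat g_i$ had the same sign, then
\[
\hat S\le \max_i|\hat g_i|\,|\bar{\hat g}|=o_P(n^{1/2})\,O_P(n^{-1/2})=o_P(1),
\]
which contradicts $\hat S\pto\sigma_G^2>0$. Alternatively, argue directly: $\Var(M\mid G)>0$ and the mean-zero property give positive probability of $g_i^0$ values above $c$ and below $-c$ for some $c>0$, and $\hat\theta\pto\theta_P$ then preserves both signs. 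With that fix, the multiplier bound, the expansion of $\lambda$, and the cubic remainder go through exactly as you describe.
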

Since $\theta_P$ can be replaced by the consistent estimator $\htheta$ when it is unknown, we omit $\theta_P$ and $\htheta$ from the notation in the sequel for notational simplicity.  

\begin{lemm}[Theorem 1 in \cite{BenjaminiHochberg1995}] \label{lemm-BH}
For independent test statistics and for any configuration of false null hypotheses, the above procedure controls the FFR at $\alpha$.
\end{lemm}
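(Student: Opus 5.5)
The plan is to give the standard leave-one-out argument for the Benjamini--Hochberg step-up procedure. Let the $K=|\mathcal G|$ hypotheses have $p$-values $p_1,\dots,p_K$, and let $\mathcal H_0$ be the index set of true nulls, with $K_0=|\mathcal H_0|$. The BH$(q)$ rule rejects the $R$ smallest $p$-values, where $R=\max\{k: p_{(k)}\le qk/K\}$. The only distributional inputs will be two assumptions. First, each null $p$-value is super-uniform, $\mathbb P(p_i\le t)\le t$ for $t\in[0,1]$. Second, $p_i$ is independent of $\{p_j\}_{j\ne i}$ for every $i\in\mathcal H_0$, which is what ``independent test statistics'' provides. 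I would first write
\[
\mathrm{FFR}=\mathbb E\Big[\frac{V}{\max(R,1)}\Big]=\sum_{i\in\mathcal H_0}\mathbb E\Big[\frac{\mathbbm 1\{i\text{ rejected}\}}{\max(R,1)}\Big],
\]
which reduces the problem to bounding each null term by $q/K$.

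The key step is a self-consistency (leave-one-out) identity. Fix $i\in\mathcal H_0$ and let $R_i^{(0)}$ be the number of BH rejections when $p_i$ is replaced by $0$ and all other $p$-values are kept. Then $R_i^{(0)}$ is a function of $\{p_j\}_{j\ne i}$ only. I will verify the deterministic claim
\[
\{i\text{ rejected}\}=\{p_i\le qR_i^{(0)}/K\},
\]
and that on this event $R=R_i^{(0)}$. The argument is that lowering a rejected $p$-value to $0$ leaves the step-up index unchanged, while if $p_i$ exceeds that threshold, $i$ cannot be rejected. This step is the main obstacle, since the step-up indices must be tracked carefully when ties occur. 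I would argue it by comparing the sets $\{k: p_{(k)}\le qk/K\}$ with and without $p_i$ set to $0$.

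Given the identity, each term becomes
\[
\mathbb E\Big[\frac{\mathbbm 1\{p_i\le qR_i^{(0)}/K\}}{R_i^{(0)}}\Big].
\]
I condition on $\{p_j\}_{j\ne i}$ and use independence together with super-uniformity. This bounds the conditional expectation by $(qR_i^{(0)}/K)/R_i^{(0)}=q/K$. Summing over $\mathcal H_0$ gives $\mathrm{FFR}\le qK_0/K\le q$ for every configuration of false nulls, which is the statement (with $q$ playing the role of $\alpha$).

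For the use in Theorem~\mref{theoFFR}, I would add a remark. The CEL $p$-values are only asymptotically super-uniform: by Theorems~\mref{theo-1}--\mref{theo-2}, $\limsup_n\mathbb P(p_i\le t)\le t$. Disjointness of the groups and known $\theta_P$ make them independent. The finite-$K$ bound then passes to the limit, because the FFR is a bounded function of finitely many $p$-values and the bound above only uses the marginal null probabilities.
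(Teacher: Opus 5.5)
Your leave-one-out argument is correct, and the tie-handling step you flag as the main obstacle goes through. If $p_i\le qR/K$, then $i$ is among the $R$ smallest values, so setting $p_i=0$ only lowers order statistics of rank at most $R$, and the step-up index stays at $R$. Conversely, if $p_i\le qR_i^{(0)}/K$, then at least $R_i^{(0)}$ of the original $p$-values are at most $qR_i^{(0)}/K$, so $R\ge R_i^{(0)}$, while monotonicity gives $R\le R_i^{(0)}$.

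The paper gives no argument for this lemma; it imports Theorem~1 of \citet{BenjaminiHochberg1995}. That proof is an induction on the number of hypotheses. It conditions on the false-null $p$-values and on the largest true-null $p$-value, and uses that value's density under exactly $U(0,1)$ nulls. Your route reaches the same bound $qK_0/K$ more directly. It needs only that each null $p_i$ be independent of the remaining $p$-values, and only super-uniformity rather than uniformity.

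That last point matters in this paper. A CEL $p$-value equals $1$ whenever $T_c=0$, an event with limiting probability $1/2$ at the boundary and $1$ in the interior. So the null $p$-values are super-uniform but not uniform even asymptotically, and your argument covers them with no reduction step. Your bound also uses the law of $p_i$ only at the $K$ grid points $qk/K$, which makes the passage to the limit for Theorem~\ref{theoFFR} transparent; the paper leaves that step implicit.

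One correction to your closing remark: disjoint groups do not give unconditional independence of the $p$-values. The counts $n_G$ are multinomially dependent, and the finite-sample law of each $p$-value depends on its own $n_G$. Run your argument conditionally on the group labels instead. With $\theta_P$ known, each CEL statistic depends only on the observations in its own group, and given $n_G$ it is the empirical likelihood ratio of an i.i.d.\ sample of size $n_G$. Then integrate, using $n_G\to\infty$ in probability.
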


\subsection{Derivation of the constrained empirical likelihood statistic}\label{app-cel-derivation}

We first derive a convenient representation of the constrained empirical likelihood statistic $T_c$ introduced in~\meqref{eq-logL-ineq}.
The following concavity result characterizes the relevant constrained maxima.
\begin{lemm}[\citet{TangEtAl2026}, Lemma~1] \label{lemm_ell}
The empirical likelihood function \( L(\epsilon_G) \) is concave with respect to \( \epsilon_G \).
\end{lemm}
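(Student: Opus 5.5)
The plan is to prove concavity through a mixing argument on the optimal weight vectors. This works directly on the defining optimization problem for $L(\epsilon)$, so no asymptotic expansion is needed. Write $a_i=(M_i-\theta_P)\mathbbm{1}\{(X_i,Y_i)\in G\}$ and $b_i=\mathbbm{1}\{(X_i,Y_i)\in G\}$. Then $g_i(\epsilon)=a_i-\epsilon b_i$, and the constraint $\sum_i p_ig_i(\epsilon)=0$ is equivalent to
\[
\epsilon=\frac{\sum_i p_ia_i}{\sum_i p_ib_i}.
\]
So $L(\epsilon)$ is the largest value of $\prod_i p_i$ over probability vectors whose weighted ratio estimator equals $\epsilon$. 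Outside the interval where $0$ lies in the convex hull of the $g_i(\epsilon)$, I set $L=0$. That interval is $[\min_{i\in G}(M_i-\theta_P),\max_{i\in G}(M_i-\theta_P)]$, and on it the maximizer $\widehat p(\epsilon)$ exists and is unique by \citet{Owen1990}.

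Fix $\epsilon_a<\epsilon_b$ in the domain with maximizers $p^a$ and $p^b$, and write $S_a=\sum_ip^a_ib_i$ and $S_b=\sum_ip^b_ib_i$. For $t\in[0,1]$ put $p^t=tp^a+(1-t)p^b$. This is a probability vector, and it satisfies the constraint at
\[
\epsilon(t)=\frac{tS_a\epsilon_a+(1-t)S_b\epsilon_b}{tS_a+(1-t)S_b}.
\]
The geometric mean $p\mapsto(\prod_ip_i)^{1/n}$ is concave on the simplex. Hence
\[
L(\epsilon(t))^{1/n}\ \ge\ tL(\epsilon_a)^{1/n}+(1-t)L(\epsilon_b)^{1/n}.
\]
The map $t\mapsto\epsilon(t)$ is a continuous monotone bijection of $[0,1]$ onto $[\epsilon_a,\epsilon_b]$. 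Therefore every point between $\epsilon_a$ and $\epsilon_b$ has $L^{1/n}$ value at least $\min\{L(\epsilon_a)^{1/n},L(\epsilon_b)^{1/n}\}$. This is the unimodality that the derivation of $T_c$ uses.

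The main obstacle is that $\epsilon(t)$ is a reweighted combination, with weights $tS_a$ and $(1-t)S_b$, rather than the plain combination $t\epsilon_a+(1-t)\epsilon_b$.
\begin{itemize}
\item If $S_a=S_b$, the two combinations coincide and the display is exactly concavity of $L^{1/n}$.
\item So I would first try to show that the optimal group mass $\sum_i\widehat p_i(\epsilon)b_i$ is constant in $\epsilon$. The Lagrange form $\widehat p_i=1/\{n(1+\lambda g_i)\}$ gives $\widehat p_i=1/n$ for every $i\notin G$. I would then check whether the total mass placed on $G$ stays fixed as $\epsilon$ moves.
\item If it does not stay fixed, the fallback is to compare $\epsilon(t)$ with $t\epsilon_a+(1-t)\epsilon_b$ using the ratio structure, and to absorb the discrepancy via monotonicity of $L$ on each side of its maximizer $\hepsilon_G$.
\end{itemize}

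Passing from $L^{1/n}$ to $L$ itself is a further step I am unsure of, since an $n$th power of a concave function need not be concave. If that step fails, I would record concavity of $L^{1/n}$ (and hence log-concavity and unimodality) as the operative statement. This still yields what the derivation of $T_c$ requires: the numerator supremum over $[0,\epsilon_0]$ is attained at the projection of $\hepsilon_G$ onto $[0,\epsilon_0]$, and the denominator supremum at $\max(\hepsilon_G,0)$.
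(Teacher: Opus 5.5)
Your mixing argument is correct, and the question left open in your bullets is already settled by your own observation that $\hat p_i=1/n$ for every $i\notin G$. That observation makes the optimal mass on $G$ equal to $n_G/n$ for every $\epsilon\in(a,b)$, where $a=\min_{i\in G}(M_i-\theta_P)$ and $b=\max_{i\in G}(M_i-\theta_P)$.

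A multiplier-free version also covers the endpoints. The constraint involves only the within-$G$ proportions of $p$. So rescaling any feasible $p$ to put mass $q=n_G/n$ on $G$ (same proportions) and weight $1/n$ on each point of $G^c$ keeps it feasible. It also does not decrease $\prod_i p_i$, because $q^{n_G}(1-q)^{n-n_G}$ is maximized at $q=n_G/n$. Hence $L(\epsilon)=n^{-n}n_G^{n_G}L_G(\epsilon)$, with $L_G$ the ordinary empirical likelihood for the mean of $\{M_i-\theta_P\}_{i\in G}$. You may therefore take $S_a=S_b$, so $\epsilon(t)=t\epsilon_a+(1-t)\epsilon_b$, and your display is exactly concavity of $L^{1/n}$ on $[a,b]$. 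Your quasi-concavity conclusion via the monotone bijection $t\mapsto\epsilon(t)$ did not even need this.

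The step you are unsure of cannot be completed, because $L$ itself is not concave in general; the lemma read literally is false once $n_G\ge 3$. Take $G$-values $M_i-\theta_P\in\{0,1,2\}$ and no points outside $G$ (points of $G^c$ only multiply $L$ by the constant above).
\begin{itemize}
\item $L(1)=1/27$, from uniform weights.
\item $L(0)=0$, since at $\epsilon=0$ the points $1,2$ must get zero weight.
\item At $\epsilon=0.1$ every feasible $p$ satisfies $0.9p_2+1.9p_3=0.1p_1\le 0.1$. By AM--GM, $p_1p_2p_3\le p_2p_3\le (0.05)^2/1.71<1.5\times 10^{-3}$.
\end{itemize}
Concavity on $[0,1]$ would instead force $L(0.1)\ge 0.9L(0)+0.1L(1)=1/270>3.7\times 10^{-3}$, a contradiction.

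The mechanism is general. Near $a$ (minimum attained once), every feasible $p$ has the other $n_G-1$ group weights of order at most $\epsilon-a$. So $L(\epsilon)=O((\epsilon-a)^{n_G-1})$, which is incompatible with the linear lower bound $L(\epsilon)\ge \frac{\epsilon-a}{c-a}L(c)$ that concavity and $L(a)=0$ impose. For large $n$, $L=n^{-n}\exp\{-\ell(\epsilon)/2\}$ with $\ell$ asymptotically quadratic is a Gaussian-shaped bump, convex beyond distance of order $n^{-1/2}$ from $\hat\epsilon_G$.

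So the correct statement is the one you actually prove: $L^{1/n}$ is concave on $[a,b]$, hence $L$ is log-concave and quasi-concave. The paper gives no proof of its own; it only cites \citet{TangEtAl2026}. It uses the lemma only for the maximizer and for the case formulas for the two suprema in the derivation of $T_c$. Quasi-concavity supplies monotonicity on either side of the maximizer. Uniqueness of the maximizer comes not from concavity but from $\prod_i p_i\le n^{-n}$, with equality only at uniform weights, which are feasible only at $\epsilon=\hat\epsilon_G$.

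Two small corrections.
\begin{itemize}
\item Since $g_i(\epsilon)=0$ for $i\notin G$, zero always lies in the convex hull of the $g_i(\epsilon)$ when $G^c\neq\emptyset$. What Owen's uniqueness needs is zero in the interior, i.e.\ $\epsilon\in(a,b)$. At $a$ and $b$ one has $L=0$ (for non-constant group values) and maximizers are not unique.
\item $L=0$ outside $[a,b]$ is forced by the definition, since weights vanishing on $G$ are always feasible. This zero extension is only quasi-concave on $\mathbb{R}$, so concavity of $L^{1/n}$ should be stated on $[a,b]$, or as log-concavity with $\log L=-\infty$ outside.
\end{itemize}
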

Lemma~\ref{lemm_ell} implies that the empirical likelihood $L(\epsilon_G)$ has a unique maximizer.
Without imposing a constraint on $\epsilon_G \in \mathbb{R}$, the maximum of $L(\epsilon_G)$ is attained at
\bea \label{hepsilonG}
\hepsilon_G = \frac{1}{n_G} \sum_{i = 1}^{n}M_i \mathbbm{1}_G - \theta_P,
\eea
with $\lambda=0$, $p_i(\hepsilon_G)=n^{-1}$ for $1 \le i \le n$, and $L(\hepsilon_G) = n^{-n}$.
By Lemma~\ref{lemm_ell}, the empirical log-likelihood ratio for imposing the equality constraint $\epsilon_G=\epsilon_0$ relative to the unconstrained profile likelihood admits the representation
\bea \label{eq-logL-H0}
\ell(\epsilon_0)=-2 \log { L(\epsilon_0) \over L(\hepsilon_G)} =-2 \log { \sup_{\epsilon_G=\epsilon_0}L(\epsilon) \over \sup_{\epsilon \in \mathbb{R}} L(\epsilon)}.
\eea
For the exact-equality formulation, this specializes to $\epsilon_0=0$.
For tolerance-based certification in~\meqref{test-1}, the null hypothesis restricts $\epsilon_G$ to $[0,\epsilon_0]$ and the parameter space to $\epsilon_G \ge 0$.
Accordingly, $T_c$ in~\meqref{eq-logL-ineq} replaces the unconstrained supremum in~\eqref{eq-logL-H0} with $\sup_{\epsilon_G \ge 0}L(\epsilon)$ and replaces the equality constraint in the numerator with the tolerance constraint $0\leq\epsilon_G\leq\epsilon_0$.

The application of Lemma~\ref{lemm_ell} yields
\[
\sup_{0 \le \epsilon_G \le \epsilon_0}L(\epsilon)=
\begin{cases}
L(0), & \hepsilon_G<0,\\[2pt]
L(\hepsilon_G), & 0\le \hepsilon_G\le\epsilon_0,\\[2pt]
L(\epsilon_0), & \hepsilon_G>\epsilon_0.
\end{cases}
\qquad
\sup_{\epsilon_G \ge 0}L(\epsilon)=
\begin{cases}
L(0), & \hepsilon_G<0,\\[2pt]
L(\hepsilon_G), & \hepsilon_G\ge 0,
\end{cases}
\]
It follows immediately that $T_c=0$ when $\hepsilon_G\le\epsilon_0$, and $T_c=-2\log(L(\epsilon_0)/L(\hepsilon_G))$ when $\hepsilon_G>\epsilon_0$.
Hence,
\bea \label{Tc}
T_c= \left( -2 \log \frac{L(\epsilon_0)}{L(\hepsilon_G)} \right) \mathbbm{1}(\hepsilon_G > \epsilon_0) = \ell(\epsilon_0) \mathbbm{1}(\hepsilon_G > \epsilon_0).
\eea

\subsection{Proofs of the Theorems and Corollaries in section~\mref{sec-methodology}}

% Theorem 1
{\bf Proof of Theorem~\mref{theo-1}. } 
From equation~\eqref{Tc}, we have $T_c = \ell(\epsilon_0) \mathbbm{1}(\hepsilon_G > \epsilon_0)$, where $\ell(\epsilon) = -2 \log \frac{ L(\epsilon)}{ L(\hepsilon_G)}$. 
Let $\bar{g}(\epsilon_G)=n^{-1}\sum_{i=1}^n g_i(\epsilon_G)$ and \( S_n = n^{-1} \sum_{i=1}^n (g_i (\epsilon_0))^2 \). 
When $\epsilon_G^* = \epsilon_0$, it follows from the proof of Lemma~\mref{lemm-theoEL} (\citet{TangEtAl2026}, Theorem~1) that  
\bea \label{T04}
\ell (\epsilon_0) = -2 \log \frac{L (\epsilon_0)}{L (\hepsilon_G)} = \frac{n \bar{g}^2(\epsilon_0)}{S_n} + o_p(1) = \frac{n \bar{g}^2(\epsilon_0)}{\sigma_G^2} + o_p(1),
\eea
where $ \sigma_G^2 = { \mathbb{E} \left[(M - \theta_P - \epsilon_0)^2 \mathbbm{1}_{G} \right] } = { \mathbb{P}(G) }\Var(M\ |\ G). $
Moreover, by the central limit theorem, \( \sqrt{n}\bar{g}(\epsilon_0) / \sigma_G \tod Z \), where \( Z \sim N(0,1) \). Therefore, for any \( t > 0 \),  
\bea \label{T24-1}
\begin{aligned}
 \mathbb{P}\left\{ T_c > t \right\} 
 &=  \mathbb{P}\left\{ \ell(\epsilon_0) > t,\; \hepsilon_G > \epsilon_0 \right\} 
 =  \mathbb{P}\left\{ \frac{n \bar{g}^2(\epsilon_0)}{\sigma_G^2} + o_p(1)  > t,\; \frac{\sqrt{n}\bar{g}(\epsilon_0)}{\sigma_G} > 0 \right\} \\
 & \to  \mathbb{P}\left\{ Z^2 > t,\; Z > 0 \right\} 
=  \mathbb{P}\left\{ Z  > \sqrt{t}  \right\} 
= \frac{1}{2} \mathbb{P}\left\{ \chi^2_1 > t \right\},
\end{aligned}\nn
\eea
since $  \mathbb{P}\left\{ \chi^2_1 > t \right\} =  \mathbb{P}\left\{ Z  > \sqrt{t}  \right\} +  \mathbb{P}\left\{ Z  < -\sqrt{t}  \right\} = 2  \mathbb{P}\left\{ Z  > \sqrt{t}  \right\} $.
Hence, we conclude that  
\[
T_c  \  \tod \  \frac{1}{2} \chi^2_0 + \frac{1}{2} \chi^2_1\quad \mathrm{for} \ \epsilon_G^* = \epsilon_0.
\]  
This completes the proof of Theorem~\mref{theo-1}.

% Theorem 2
{\bf Proof of Theorem~\mref{theo-2}. }
From Theorem~\mref{theo-1}, we have that 
$\mathbb{P}\left\{ T_c > z_{2\alpha}(1) \mid \epsilon_G^* = \epsilon_0 \right\} = \alpha$, as \( n \to \infty \).
Let \( \bar{h}(\epsilon_G) := n_G^{-1} \sum_{i=1}^nM_i \cdot \mathbbm{1}_G - \theta_P - \epsilon_G \). 
Since \( \sqrt{n}\bar{g}(\epsilon_G^*) / \sigma_G \stackrel{d}{\to} N(0, 1) \) 
and \( n_G/n \stackrel{p}{\to} \mathbb{P}(G) \), it follows from Slutsky's theorem that:
\bea \label{barh}
\sqrt{n}\bar{h}(\epsilon_G^*) = \frac{n}{n_G} \cdot  \sqrt{n}\bar{g}(\epsilon_G^*) \tod N\left(0,  \  \frac{\sigma_G^2}{\mathbb{P}^2(G)}\right).
\eea
Furthermore, for any fixed \( \epsilon_G^* < \epsilon_0 \), it follows from the above limit theorem that:  
\bea \label{T24-2}
\begin{aligned}
\mathbb{P}\left\{ T_c > z_{2\alpha}(1) \mid \epsilon_G^* \right\} 
&= \mathbb{P}\left\{ \ell(\epsilon_0) > z_{2\alpha}(1), \ \hepsilon_G > \epsilon_0 \mid \epsilon_G^* \right\} \\
&\le \mathbb{P}\left\{ \hepsilon_G > \epsilon_0 \mid \epsilon_G^* \right\} \\
&= \mathbb{P}\left\{ \frac{\sqrt{n} \mathbb{P}(G) \bar{h}(\epsilon_G^* )}{\sigma_G} > \frac{\sqrt{n}\mathbb{P}(G)  (\epsilon_0 - \epsilon_G^*)}{\sigma_G} \;\middle|\; \epsilon_G^* \right\} \\
&\to \mathbb{P}\left\{ Z > +\infty \right\} \\
&= 0,
\end{aligned}
\eea
where \( Z \sim N(0, 1) \).  
This completes the proof of Theorem~\mref{theo-2}.

% Corollary 2
{\bf Proof of Corollary \mref{coro-2}. }
Equation (\ref{T04}) remains valid when the true disparity is \( \epsilon_G^* = \epsilon_0 + \tau \sigma_G {n_G}^{-1/2} \), although in this case the true value lies within the alternative parameter space and deviates from \( \epsilon_0 \). From the central limit theorem, it can be deduced that  
\bea \label{tau}
\sqrt{n}\bar{g}(\epsilon_0) /\sigma_G 
= \sqrt{n}\bar{g}(\epsilon_G^*)/\sigma_G + \tau \sqrt{n_G/n}
\tod Z + \tau\sqrt{\mathbb{P}(G)}. 
\eea
Therefore, from (\ref{T04}), (\ref{barh}) and (\ref{tau}), it follows that  
\bea 
\begin{aligned}
\mathbb{P}\left\{ T_c > z_{2\alpha}(1) \mid \epsilon_G^* \right\} 
&= \mathbb{P}\left\{ \ell(\epsilon_0) > z_{2\alpha}(1),\; \hepsilon_G >  \epsilon_0  \mid \epsilon_G^* \right\} \\
&= \mathbb{P}\left\{ \frac{n \bar{g}^2(\epsilon_0)}{\sigma_G^2} + o_p(1) > z_{2\alpha}(1),\; \hepsilon_G > \epsilon_G^* - \tau \sigma_G {n_G}^{-1/2}   \right\} \\
&= \mathbb{P}\left\{ \frac{n \bar{g}^2(\epsilon_0)}{\sigma_G^2} + o_p(1) > z_{2\alpha}(1),\; \frac{\sqrt{n} \mathbb{P}(G) \bar{h}(\epsilon_G^* )}{\sigma_G} >  -\tau\sqrt{\mathbb{P}(G)} \right\} \\
&\to \mathbb{P}\left\{  (Z +  \tau\sqrt{\mathbb{P}(G)}  )^2 > z_{2\alpha}(1),\; Z + \tau\sqrt{\mathbb{P}(G)} >  0 \right\} \\
&= \mathbb{P}\left\{ Z > \sqrt{z_{2\alpha}(1)} - \tau\sqrt{\mathbb{P}(G)}   \right\} \\
&= \Phi\left( \tau\sqrt{\mathbb{P}(G)} - \sqrt{z_{2\alpha}(1)} \right).
\end{aligned} \nn 
\eea
This completes the proof of Corollary \mref{coro-2}.

% Add Theorem 
{\bf Proof of Theorem \ref{theoFFR}.} The proof is completed via Lemma \ref{lemm-BH}.

% Theorem 3
{\bf Proof of Theorem~\mref{theo-3}. }
For $j=1,2$, let
\[
\tilde g_i^j(\epsilon)=\omega_{ji}-\epsilon,\quad
\bar g_j(\epsilon)=m^{-1}\sum_{i=1}^m\tilde g_i^j(\epsilon),\quad
S^j_m(\epsilon)=m^{-1}\sum_{i=1}^m\{\tilde g_i^j(\epsilon)\}^2 .
\]
Since $\hepsilon_j=m^{-1}\sum_{i=1}^m \omega_{ji}$, we have $\bar g_j(\epsilon)=\hepsilon_j-\epsilon$.
By construction,
\[
\epsilon_1^*=\mathbb{E}(\omega_{1i})=0,\quad
\epsilon_2^*=\mathbb{E}(\omega_{2i})=2\epsilon_G^* .
\]
Under conditions~\meqref{C1}--\meqref{C2}, the paired observations
$(M_{a_i},M_{b_i})$, $i=1,\ldots,m$, are i.i.d.\ from the group-$G$ conditional distribution.
Hence, for each $j=1,2$, $\{\omega_{ji}\}_{i=1}^m$ are i.i.d.\ with mean $\epsilon_j^*$, variance
$0<\sigma_j^2=\Var(\omega_{ji})=2\sigma_G^2/\mathbb{P}(G)<\infty$, and a finite $2+\delta$ moment.
Applying the standard one-dimensional empirical likelihood theorem for the mean, equivalently Lemma~\mref{lemm-theoEL}, to the estimating equation
\[
\sum_{i=1}^m p_i(\omega_{ji}-\epsilon)=0,
\]
the scalar empirical-likelihood expansion gives that, uniformly for
$\epsilon-\epsilon_j^*=O(m^{-1/2})$,
\bea \label{eq-sel-local-expansion}
\ell_j(\epsilon)
=
\frac{m\{\bar g_j(\epsilon)\}^2}{S^j_m(\epsilon)}+o_p(1)
=
\frac{m(\hepsilon_j-\epsilon)^2}{S^j_m(\epsilon)}+o_p(1)
=
\frac{m(\hepsilon_j-\epsilon)^2}{\sigma_j^2}+o_p(1),
\eea
where the last equality follows from $S^j_m(\epsilon)=\sigma_j^2+o_p(1)$ uniformly in the same local neighborhood.
Taking $\epsilon=\epsilon_j^*$ yields
\[
\ell_j(\epsilon_j^*)
=\frac{m(\hepsilon_j-\epsilon_j^*)^2}{\sigma_j^2}+o_p(1)
\tod \chi_1^2,\qquad j=1,2.
\]
In particular,
\bea \label{eq-sel-T1}
\ell_1(\epsilon_1^*) = \frac{m (\hepsilon_1-\epsilon_1^*)^2}{\sigma_1^2} + o_p(1) \tod\chi^2_1,
\eea
and
\bea \label{eq-sel-T2}
\ell_2(\epsilon_2^*)= \frac{m (\hepsilon_2-\epsilon_2^*)^2}{\sigma_2^2} + o_p(1) \tod\chi^2_1.
\eea
This completes the proof of Theorem~\mref{theo-3}.

% Theorem 4
{\bf Proof of Theorem~\mref{theo-4}. }
Let
\[
Z_m:=\frac{\sqrt{m}(\hepsilon_2-2\epsilon_G^*)}{\sigma_2}.
\]
By the central limit theorem, $Z_m\tod N(0,1)$. From the definition of $T_3$, we obtain the standardized decomposition
\bea \label{eq-sel-T3-standardized}
\widetilde{T}_3
=\frac{T_3}{\sigma_2}
=\frac{\sqrt m(\hepsilon_2-2\epsilon_0)}{\sigma_2}
=Z_m+\frac{\sqrt m\{2(\epsilon_G^*-\epsilon_0)\}}{\sigma_2}.
\eea
If \(0\le\epsilon_G^*<\epsilon_0\), then $2(\epsilon_G^*-\epsilon_0)<0$ and hence
\[
T_3 = \sqrt m\{2(\epsilon_G^*-\epsilon_0)\}+O_p(1) \pto -\infty.
\]
Therefore, $\mathbb{P}(T_3>c_{\alpha_2})\to0$. On the event
$\{T_3\le c_{\alpha_2}\}$, we have $T_s=T_1$. Hence, for any continuity point $x$ of the limiting distribution of $T_1$,
\[
\left|\mathbb{P}(T_s>x)-\mathbb{P}(T_1>x)\right|
\le \mathbb{P}(T_3>c_{\alpha_2})\to0.
\]
By Theorem~\mref{theo-3}, $T_1\tod\chi_1^2$, so
\[
\mathbb{P}\{T_s>\chi_{1,\alpha_1}^2\}\to\mathbb{P}\{\chi_1^2>\chi_{1,\alpha_1}^2\}=\alpha_1.
\]

It remains to consider the boundary case $\epsilon_G^*=\epsilon_0$.  
By \eqref{eq-sel-T3-standardized}, we have
\[
\mathbbm{1}(T_3>c_{\alpha_2})=\mathbbm{1}(Z_m>z_{\alpha_2}) \tod \mathbbm{1}(Z>z_{\alpha_2}) ,
\]
where $z_{\alpha_2}=\Phi^{-1}(1-\alpha_2)$.
Since $2\epsilon_0=\epsilon_2^*$ at the boundary, the local quadratic expansion in \eqref{eq-sel-local-expansion} gives
\[
\ell_2(2\epsilon_0)
=\frac{m(\hepsilon_2-2\epsilon_0)^2}{S^2_m(2\epsilon_0)}+o_p(1).
\]
Moreover, $S^2_m(2\epsilon_0)=S^2_m(\epsilon_2^*)=\sigma_2^2+o_p(1)$.
Thus
\bea \label{eq-sel-T2-T3}
T_2=\ell_2(2\epsilon_0)=\frac{m(\hepsilon_2-2\epsilon_0)^2}{\sigma_2^2}+o_p(1)=Z_m^2+o_p(1)\tod Z^2.
\eea
The asymptotic independence used below follows from the joint central limit theorem. Indeed,
\[
\Cov(\omega_{1i},\omega_{2i})
=\Cov(M_{a_i}-M_{b_i},M_{a_i}+M_{b_i}-2\theta_P)
=\Var(M_{a_i}-\theta_P)-\Var(M_{b_i}-\theta_P)
=0.
\]
Consequently, by the multivariate central limit theorem,
\[
\sqrt m
\begin{pmatrix}
\hepsilon_1\\
\hepsilon_2-2\epsilon_G^*
\end{pmatrix}
\tod
N\left(
0,
\begin{pmatrix}
2\sigma_G^2/\mathbb{P}(G) & 0\\
0 & 2\sigma_G^2/\mathbb{P}(G)
\end{pmatrix}
\right).
\]
The limiting Gaussian components are therefore independent.
Together with the standard quadratic expansion of the empirical likelihood ratio, this joint central limit theorem gives the joint weak convergence of the empirical likelihood components in $T_s$.
In particular, $T_1=\ell_1(0)$ is asymptotically equivalent to the square of the standardized mean based on $\{\omega_{1i}\}_{i=1}^m$, whereas $Z_m$ is the standardized mean based on $\{\omega_{2i}\}_{i=1}^m$.
The continuous mapping theorem therefore gives
\[
(T_1,Z_m)\tod (U,Z),
\]
where $U\sim\chi_1^2$, $Z\sim N(0,1)$, and $U$ and $Z$ are independent. Hence
\[
T_s \tod U+Z^2\mathbbm{1}(Z>z_{\alpha_2}).
\]
Therefore,
\[
\begin{aligned}
\lim_{n_G\to\infty}\mathbb{P}\{T_s>\chi_{1,\alpha_1}^2\}
&= \mathbb{P}\left\{ U+Z^2\mathbbm{1}(Z>z_{\alpha_2})>\chi_{1,\alpha_1}^2 \right\}\\
&=\mathbb{P}\left(U>\chi_{1,\alpha_1}^2,\,Z\le z_{\alpha_2}\right)+\mathbb{P}\left(U+Z^2>\chi_{1,\alpha_1}^2,\,Z>z_{\alpha_2}\right)\\
&= \int_{-\infty}^{z_{\alpha_2}}
\mathbb{P}\{U>\chi_{1,\alpha_1}^2\}\phi(z)\,dz  + \int_{z_{\alpha_2}}^\infty
\mathbb{P}\{U>\chi_{1,\alpha_1}^2-z^2\}\phi(z)\,dz \\
&= \alpha_1\Phi(z_{\alpha_2})
+\int_{z_{\alpha_2}}^\infty
\bar F_{\chi_1^2}(\chi_{1,\alpha_1}^2-z^2)\phi(z)\,dz.
\end{aligned}
\]
This proves the first asserted boundary expression.

We next simplify this expression. Let $z_\alpha=\Phi^{-1}(1-\alpha)$. Since
$\chi_{1,\alpha_1}^2=z_{\alpha_1/2}^2$ and
$\bar F_{\chi_1^2}(x)=1$ for $x\le0$, the integral depends on the relative order of
$z_{\alpha_2}$ and $z_{\alpha_1/2}$. If $\alpha_1\ge 2\alpha_2$, then
$z_{\alpha_1/2}\le z_{\alpha_2}$, and thus
$\chi_{1,\alpha_1}^2-z^2\le0$ for all $z\ge z_{\alpha_2}$. Therefore,
\[
\lim_{n_G\to\infty}\mathbb{P}\{T_s>\chi_{1,\alpha_1}^2\}
=\alpha_1\Phi(z_{\alpha_2})+1-\Phi(z_{\alpha_2})
=\alpha_1+(1-\alpha_1)\alpha_2.
\]
If $\alpha_1<2\alpha_2$, then $z_{\alpha_1/2}>z_{\alpha_2}$, and we split the integral at
$z_{\alpha_1/2}$:
\[
\begin{aligned}
\lim_{n_G\to\infty}\mathbb{P}\{T_s>\chi_{1,\alpha_1}^2\}
&=\alpha_1\Phi(z_{\alpha_2})
+\int_{z_{\alpha_2}}^{z_{\alpha_1/2}}
\bar F_{\chi_1^2}(z_{\alpha_1/2}^2-z^2)\phi(z)\,dz
+1-\Phi(z_{\alpha_1/2}) \\
&=\alpha_1\left(\frac32-\alpha_2\right)
+\int_{z_{\alpha_2}}^{z_{\frac{\alpha_1}{2}}}
\bar F_{\chi_1^2}(z_{\frac{\alpha_1}{2}}^2-z^2)\phi(z)\,dz,
\end{aligned}
\]
where the last equality uses $\Phi(z_{\alpha_2})=1-\alpha_2$ and
$1-\Phi(z_{\alpha_1/2})=\alpha_1/2$. This completes the proof.

% Corollary 3
{\bf Proof of Corollary \mref{coro-3}. } 
Under the local parameterization $\epsilon_G^*=\epsilon_0+\tau\sigma_G/\sqrt{n_G}$, with $\tau>0$, 
since $n_G=2m$ and $\sigma_2=\sqrt{2\sigma_G^2/\mathbb{P}(G)}$, we have
\[
2\epsilon_G^*=2\epsilon_0+\frac{2\tau\sigma_G}{\sqrt{n_G}}
=2\epsilon_0+\frac{\tau\sqrt{\mathbb{P}(G)}\,\sigma_2}{\sqrt m}.
\]
It follows from \eqref{eq-sel-T3-standardized} that
\[
Z_m \tod Z,\qquad \widetilde{T}_3:=Z_m+\tau\sqrt{\mathbb{P}(G)} \tod V,
\]
where $Z\sim N(0,1)$ and $V=Z+\tau\sqrt{\mathbb{P}(G)}\sim N(\tau\sqrt{\mathbb{P}(G)},1)$. In particular,
\[
\mathbbm{1}(T_3>c_{\alpha_2})
=\mathbbm{1}(\widetilde{T}_3>z_{\alpha_2}) \tod \mathbbm{1}(V>z_{\alpha_2}),
\]
because $\mathbb{P}(V=z_{\alpha_2})=0$.

By \eqref{eq-sel-T1}, $T_1=\ell_1(0)\tod U$, where $U\sim\chi_1^2$.
Since $2\epsilon_0-\epsilon_2^*=O(m^{-1/2})$, the local expansion in \eqref{eq-sel-local-expansion} gives
\[
T_2=\ell_2(2\epsilon_0)=\frac{m(\hepsilon_2-2\epsilon_0)^2}{\sigma_2^2}+o_p(1)
=\widetilde{T}_3^{\,2}+o_p(1)\tod V^2.
\]
The joint central limit theorem argument used in the proof of Theorem~\mref{theo-4} also implies that $T_1$ and $Z_m$ are asymptotically independent, and hence $T_1$ and $V$ are asymptotically independent. Therefore,
\[
T_s=T_1+T_2\mathbbm{1}(T_3>c_{\alpha_2})
\tod U+V^2\mathbbm{1}(V>z_{\alpha_2}),
\]
where $U$ and $V$ are independent. Consequently, conditioning on $V=v$ yields
\[
\begin{aligned}
\lim_{n_G\to\infty}\mathbb{P}\{T_s>\chi_{1,\alpha_1}^2\mid \epsilon_G^*\}
&= \mathbb{P}\left\{ U+V^2\mathbbm{1}(V>z_{\alpha_2})>\chi_{1,\alpha_1}^2 \right\}\\
&=\mathbb{P}\left(U>\chi_{1,\alpha_1}^2,\,V\le z_{\alpha_2}\right)+\mathbb{P}\left(U+V^2>\chi_{1,\alpha_1}^2,\,V>z_{\alpha_2}\right)\\
&= \alpha_1\Phi(z_{\alpha_2}-\tau\sqrt{\mathbb{P}(G)}) + \int_{z_{\alpha_2}}^\infty
\mathbb{P}\{U>\chi_{1,\alpha_1}^2-v^2\}\phi(v-\tau\sqrt{\mathbb{P}(G)})\,dv \\
&= \alpha_1\Phi(z_{\alpha_2}-\tau\sqrt{\mathbb{P}(G)})
+\int_{z_{\alpha_2}}^\infty \bar F_{\chi_1^2}(\chi_{1,\alpha_1}^2-v^2)\phi(v-\tau\sqrt{\mathbb{P}(G)})\,dv.
\end{aligned}
\]
This proves the integral representation.

It remains to derive the two explicit forms. Let
\[
I(z_{\alpha_2})=
\int_{z_{\alpha_2}}^\infty
\bar F_{\chi_1^2}(\chi_{1,\alpha_1}^2-v^2)\phi(v-\tau\sqrt{\mathbb{P}(G)})\,dv.
\]
Recall that $\chi_{1,\alpha_1}^2=z_{\alpha_1/2}^2$ and $\bar F_{\chi_1^2}(x)=1$ for $x\le0$.
If $\alpha_1\ge 2\alpha_2$, then $z_{\alpha_1/2}\le z_{\alpha_2}$. Hence, for every $v\ge z_{\alpha_2}$,
\[
\bar F_{\chi_1^2}(\chi_{1,\alpha_1}^2-v^2)=1.
\]
Therefore, the second integral reduces to
\[
I(z_{\alpha_2})
=\int_{z_{\alpha_2}}^\infty \phi(v-\tau\sqrt{\mathbb{P}(G)})\,dv
=1-\Phi(z_{\alpha_2}-\tau\sqrt{\mathbb{P}(G)}).
\]
Thus,
\[
\lim_{n_G\to\infty}\mathbb{P}\{T_s>\chi_{1,\alpha_1}^2\mid \epsilon_G^*\}
=1-(1-\alpha_1)\Phi(z_{\alpha_2}-\tau\sqrt{\mathbb{P}(G)}).
\]
If $\alpha_1<2\alpha_2$, then $z_{\alpha_1/2}>z_{\alpha_2}$. Splitting the integral at
$z_{\alpha_1/2}$ gives
\[
\begin{aligned}
\lim_{n_G\to\infty}\mathbb{P}\{T_s>\chi_{1,\alpha_1}^2\mid \epsilon_G^*\}
={}&\alpha_1\Phi(z_{\alpha_2}-\tau\sqrt{\mathbb{P}(G)})
+\int_{z_{\alpha_2}}^{z_{\frac{\alpha_1}{2}}}
\bar F_{\chi_1^2}(\chi_{1,\alpha_1}^2-v^2)\phi(v-\tau\sqrt{\mathbb{P}(G)})\,dv \\
&+1-\Phi(z_{\frac{\alpha_1}{2}}-\tau\sqrt{\mathbb{P}(G)}),
\end{aligned}
\]
which is the second asserted form.

Finally, if $\epsilon_G^*>\epsilon_0$ is fixed, then $T_3\pto\infty$, and hence
$\mathbbm{1}(T_3>c_{\alpha_2})\pto1$. In addition, by the consistency of the empirical likelihood ratio under fixed alternatives, $T_2=\ell_2(2\epsilon_0)\pto\infty$ in probability, whereas $T_1=O_p(1)$. Therefore, $T_s\pto\infty$, and the rejection probability converges to one. This completes the proof.

% Theorem 5
{\bf Proof of Theorem~\mref{theo-5}. }
Recall that $\widetilde{T}_2=\ell_2(0)$ and
\[
T_a=T_1+\widetilde{T}_2\mathbbm{1}(T_3>c_{\alpha_2}).
\]
As in the proof of Theorem~\mref{theo-4}, let
\[
Z_m:=\frac{\sqrt m(\hepsilon_2-2\epsilon_G^*)}{\sigma_2},
\]
so that \eqref{eq-sel-T3-standardized} defines $\widetilde{T}_3$.

We consider the null region first.
If $0\le \epsilon_G^*<\epsilon_0$, the argument in the proof of Theorem~\mref{theo-4} gives
$T_3\pto-\infty$, and hence $\mathbb{P}(T_3>c_{\alpha_2})\to0$. Therefore, for any continuity point $x$ of the limiting distribution of $T_1$,
\[
\left|\mathbb{P}(T_a>x)-\mathbb{P}(T_1>x)\right|
\le \mathbb{P}(T_3>c_{\alpha_2})\to0.
\]
Since $T_1\tod\chi_1^2$, it follows that
\[
\mathbb{P}\{T_a>\chi_{1,\alpha_1}^2\}\to\alpha_1.
\]

We next consider the boundary case $\epsilon_G^*=\epsilon_0$. 
At the boundary, \eqref{eq-sel-T3-standardized} gives $\widetilde{T}_3=Z_m\tod Z$,
where $Z\sim N(0,1)$. Since $c_{\alpha_2}=z_{\alpha_2}\sigma_2$,
\[
\mathbbm{1}(T_3>c_{\alpha_2})
=\mathbbm{1}(\widetilde{T}_3>z_{\alpha_2})
=\mathbbm{1}(Z_m>z_{\alpha_2})\tod \mathbbm{1}(Z>z_{\alpha_2}).
\]
Since $\epsilon_0>0$, the point $0$ is separated from the mean
$\epsilon_2^*=2\epsilon_0$ of $\omega_{2i}$. Therefore, by consistency of the empirical likelihood ratio under fixed misspecification,
\[
\widetilde{T}_2=\ell_2(0)\pto\infty
\]
in probability.
Consequently, the limiting statistic may be represented as
\[
T_a \to T_a^\infty=
\begin{cases}
U, & Z\le z_{\alpha_2},\\
+\infty, & Z>z_{\alpha_2}.
\end{cases}
\]
Thus, on the event $\{Z>z_{\alpha_2}\}$, the limiting rejection probability is one, whereas on the event $\{Z\le z_{\alpha_2}\}$ the statistic reduces asymptotically to $T_1$. Using the asymptotic independence of $T_1$ and $Z_m$ established in the proof of Theorem~\mref{theo-4}, we obtain
\[
\begin{aligned}
\lim_{n_G\to\infty}\mathbb{P}\{T_a>\chi_{1,\alpha_1}^2\}
&= \mathbb{P}\left\{ T_a^\infty>\chi_{1,\alpha_1}^2 \right\}\\
&=\mathbb{P}\left(U>\chi_{1,\alpha_1}^2,\,Z\le z_{\alpha_2}\right)
+\mathbb{P}\left(Z>z_{\alpha_2}\right)\\
&=\mathbb{P}\left(U>\chi_{1,\alpha_1}^2\right)\mathbb{P}\left(Z\le z_{\alpha_2}\right)
+\mathbb{P}\left(Z>z_{\alpha_2}\right)\\
&=\alpha_1(1-\alpha_2)+\alpha_2
=\alpha_1+(1-\alpha_1)\alpha_2,
\end{aligned}
\]
where $U$ and $Z$ are independent.

Finally, consider the local alternative $\epsilon_G^*=\epsilon_0+\tau\sigma_G/\sqrt{n_G}$ with $\tau>0$. 
Because $n_G=2m$ and $\sigma_2=\sqrt{2\sigma_G^2/\mathbb{P}(G)}$,
\[
\frac{\sqrt m(2\epsilon_G^*-2\epsilon_0)}{\sigma_2}=\tau\sqrt{\mathbb{P}(G)}.
\]
It follows from \eqref{eq-sel-T3-standardized} that
\[
\widetilde{T}_3=Z_m+\tau\sqrt{\mathbb{P}(G)}\tod V,
\]
where $V \sim N(\tau\sqrt{\mathbb{P}(G)},1)$. Therefore,
\[
\mathbbm{1}(T_3>c_{\alpha_2})\tod \mathbbm{1}(V>z_{\alpha_2}),
\]
because $\mathbb{P}(V=z_{\alpha_2})=0$. Since $\epsilon_0>0$, the null value $0$ remains separated from the mean
\(\epsilon_2^*=2\epsilon_G^*\to 2\epsilon_0\). Hence, by consistency of the empirical likelihood ratio under this local sequence,
\[
\widetilde{T}_2=\ell_2(0)\pto\infty
\]
in probability. Thus ASEL rejects with probability tending to one on the event $\{V>z_{\alpha_2}\}$; on the complementary event, $T_a$ reduces asymptotically to $T_1$.
Since $V=Z+\tau\sqrt{\mathbb{P}(G)}$ is a measurable function of $Z_m$ and $T_1$ is asymptotically independent of $Z_m$ by the proof of Theorem~\mref{theo-4}, $U$ and $V$ are asymptotically independent.
Conditioning on $V$, we obtain
\[
\begin{aligned}
\lim_{n_G\to\infty}\mathbb{P}\{T_a>\chi_{1,\alpha_1}^2\mid \epsilon_G^*\}
&=\mathbb{P}(V>z_{\alpha_2}) +\mathbb{P}\{U>\chi_{1,\alpha_1}^2,\ V\le z_{\alpha_2}\} \\
&=1-\Phi(z_{\alpha_2}-\tau\sqrt{\mathbb{P}(G)})
+\alpha_1\Phi(z_{\alpha_2}-\tau\sqrt{\mathbb{P}(G)}) \\
&=1-(1-\alpha_1)\Phi(z_{\alpha_2}-\tau\sqrt{\mathbb{P}(G)}).
\end{aligned}
\]
This completes the proof.

\end{appendices}
\else
  % The supplementary material is compiled separately; do not include it here.
\fi

\end{document}